\providecommand{\tabularnewline}{\\}
\providecommand{\algorithmname}{Algorithm}
\theoremstyle{plain}
\newtheorem{thm}{\protect\theoremname}
\theoremstyle{remark}
\newtheorem{rem}[thm]{\protect\remarkname}
\theoremstyle{plain}
\newtheorem{lem}[thm]{\protect\lemmaname}
\author{

\IEEEauthorblockN{Zheng~Xing and Junting~Chen}

\IEEEauthorblockA{School of Science and Engineering (SSE) and Shenzhen Future Network of Intelligence Institute (FNii-Shenzhen) \\ The Chinese University of Hong Kong, Shenzhen, Guangdong 518172, China}
}
\newcommand{\newac}{\newacronym}
\newcommand{\ac}{\gls}
\newcommand{\acpl}{\glspl}
\theoremstyle{plain}
\newtheorem{mythm}{\theoremname}  
\theoremstyle{remark}
\theoremstyle{proposition}
\newtheorem{myprop}{Proposition}
\providecommand{\lemmaname}{Lemma}
\providecommand{\remarkname}{Remark}
\providecommand{\theoremname}{Theorem}
\begin{document}
\title{Blind Radio Mapping via Spatially Regularized Bayesian Trajectory
Inference}
\maketitle

\begin{abstract}
Radio maps enable intelligent wireless applications by capturing the
spatial distribution of channel characteristics. However, conventional
construction methods demand extensive location-labeled data, which
are costly and impractical in many real-world scenarios. This paper
presents a blind radio map construction framework that infers user
trajectories from indoor \ac{mimo}-\ac{ofdm} channel measurements
without relying on location labels. It first proves that \ac{csi}
under \ac{nlos} exhibits spatial continuity under a quasi-specular
environmental model, allowing the derivation of a \ac{csi}-distance
metric that is proportional to the corresponding physical distance.
For rectilinear trajectories in Poisson-distributed \ac{ap} deployments,
it is shown that the \ac{crlb} of localization error vanishes asymptotically,
even under poor angular resolution. Building on these theoretical
results, a spatially regularized Bayesian inference framework is developed
that jointly estimates channel features, distinguishes \ac{los}/\ac{nlos}
conditions and recovers user trajectories. Experiments on a ray-tracing
dataset demonstrate an average localization error of 0.68 m and a
beam map reconstruction error of 3.3\%, validating the effectiveness
of the proposed blind mapping method.
\end{abstract}

\begin{IEEEkeywords}
Radio map, trajectory inference, spatial continuity, \ac{los}/\ac{nlos}
discrimination, indoor localization
\end{IEEEkeywords}

\glsresetall

\section{Introduction}

Radio maps link physical locations with channel characteristics, enabling
new methodologies for \ac{csi} acquisition, tracking, and prediction
\cite{ZenChe:J24,RomDan:J22,ZenXuX:J21}. {Conventional
methods for constructing radio maps predominantly rely on labeled
datasets, where the CSI measurements are associated with accurate
location information} \cite{TimSub:J23,ShrFu:J22,YapKan:J23}. {However,
obtaining such location-labeled CSI data is costly and labor-intensive,
often requiring dedicated positioning infrastructure, or manual calibration
\cite{TeqRom:J21,LeviRon:J21,XinChe:J24}.} These limitations become
particularly pronounced in scenarios demanding rapid deployment or
frequent updates, such as dynamic urban environments, shopping malls,
and exhibition halls that require frequent reconfiguration. Consequently,
developing radio map construction methods that do not rely on location
labels is crucial for enhancing the scalability, flexibility, and
practicality of wireless network intelligence.

There were some attempts on reducing the reliance on location-labeled
measurements. For instance, the work {\cite{SatSut:J21}}
employed Kriging-based space--frequency interpolation to construct
radio maps from a limited set of labeled data. Similarly, the work
{\cite{LiLiao:J23}} developed a geometry-driven
matrix completion approach, leveraging virtual anchor modeling and
spatial consistency to extrapolate multipath angles and delays. Other
methods have incorporated Bayesian inference; for example, the work
{\cite{WanZhu:J24}} investigated sparse
sampling combined with Bayesian learning to optimize sampling locations
and account for shadow fading in spectrum recovery. Note that these
methods are based on the methodologies of interpolation and supervised
learning, and thus, they still necessitate a sufficient amount of
location-labeled data, which requires labor cost in radio map construction
and update.

Another line of research aims to reduce the dependence on location-labeled
data by exploiting additional sensors or side information from the
environment. For example, the work {\cite{WanMao:J19}}
investigated \ac{uwb}-assisted positioning, where synchronized angle
and time measurements enabled accurate localization. The work {\cite{ShiChe:J24}}
employed \ac{lidar}-based scanning to achieve fine-grained environmental
perception and self-localization, while the work{
\cite{ChoJeo:J22}} utilized \ac{imu} measurements to estimate relative
positions from accelerometer data. However, these approaches require
specialized hardware deployments, incur additional costs, and are
often impractical in large-scale or resource-constrained scenarios.
Recently, channel charting has emerged as a promising paradigm for
radio map construction, wherein high-dimensional \ac{csi} is embedded
into low-dimensional manifolds using neural architectures such as
triplet-based networks {\cite{TanPal:C23}},
Siamese networks {\cite{StaYam:J23}}, and
bilateration-based networks{ \cite{TanPal:J25}}.
Despite their potential, these methods typically require a small number
of labeled points to align the latent and geometric spaces. Moreover,
such alignment may fail to faithfully reflect the underlying physical
geometry, especially in dense multipath \ac{nlos} environments.

This paper focuses on recovering the location along an unknown trajectory
that measures the {\ac{mimo}-\ac{ofdm}}
channels in an indoor environment where there could be \ac{nlos}
regions. With the recovered location labels, a radio map can be constructed
by associating the recovered locations with the channel measurements.
The two main challenges addressed in this paper are: (i) {\em How to infer the location information from a sequence of channel measurements},
where there are multi-paths and fading that give randomness to the
channel, and (ii) {\em How to improve the location inference in NLOS by exploiting the spatially correlation of the channel}.

To tackle these challenges, we develop a spatially regularized Bayesian
framework for trajectory inference from unlabeled channel measurements.
The following two principles are investigated. First, we establish
a Bayesian model to identify \ac{los} measurements, where the angle
information can be extracted more accurately compared to the \ac{nlos}
counterpart. Second, we develop a CSI distance metric for \ac{nlos}
measurements, where the metric is shown to be proportional to the
physical distance of the measurements, and hence, it leads to a spatial
regularization for Bayesian trajectory inference.

Specifically, the following technical contributions are made:
\begin{itemize}
\item We establish a {\em quasi-specular environment model} to understand
whether there exists a radio signature that is statistically and locally
{\em continuous} in the physical space given the randomness nature
of the wireless channel. We derive a spatial continuity theorem and
find a theoretical CSI distance metric that is proven to be proportional
to the physical distance scaled by the bandwidth if the measurements
are obtained in a fully scattered \ac{nlos} scenario.
\item We investigate whether it is theoretically possible to recover a rectilinear
trajectory using a sufficient number of measurements despite an arbitrarily
poor accuracy in \ac{aoa} estimation under \ac{nlos}. We theoretically
show that this is possible under certain conditions on \ac{ap} topology,
and the corresponding localization error decays as $\mathcal{O}(1/T)$
for $T$ measurements.
\item We formulate a spatially regularized Bayesian framework for trajectory
inference, in which the user trajectory is recovered jointly with
LOS/NLOS assignment and channel feature estimation.
\item We evaluate the proposed method on a ray-tracing dataset with \ac{ula}
configuration. The proposed method achieves an average localization
error of 0.68 m (1.07 m in NLOS), a LOS/NLOS identification error
of 2\%, and a relative error of 3.3\% in constructing the MIMO beam
map based on the estimated trajectory.
\end{itemize}

$\quad$The remainder of this paper is organized as follows. Section~\ref{sec:System-Model}
introduces the propagation model in \ac{mimo}-\ac{ofdm} systems,
the user mobility model, and the Bayesian framework for trajectory
inference. Section~\ref{sec:Spatial-Continuity} develops the quasi-specular
environment model, establishes the spatial continuity property of
the wireless channel in NLOS region, and derives the \ac{crlb} of
the localization error under limited and unlimited regions. Section~\ref{sec:Problem-Alg}
presents the formulation of the spatially regularized Bayesian problem
and the design of the trajectory inference algorithm. Section~\ref{sec:Experiments}
reports experimental evaluations, and Section~\ref{sec:Conclusion}
concludes the paper.

\section{System Model}

\label{sec:System-Model}

\begin{figure}[t]
\centering{}\includegraphics[width=1\columnwidth]{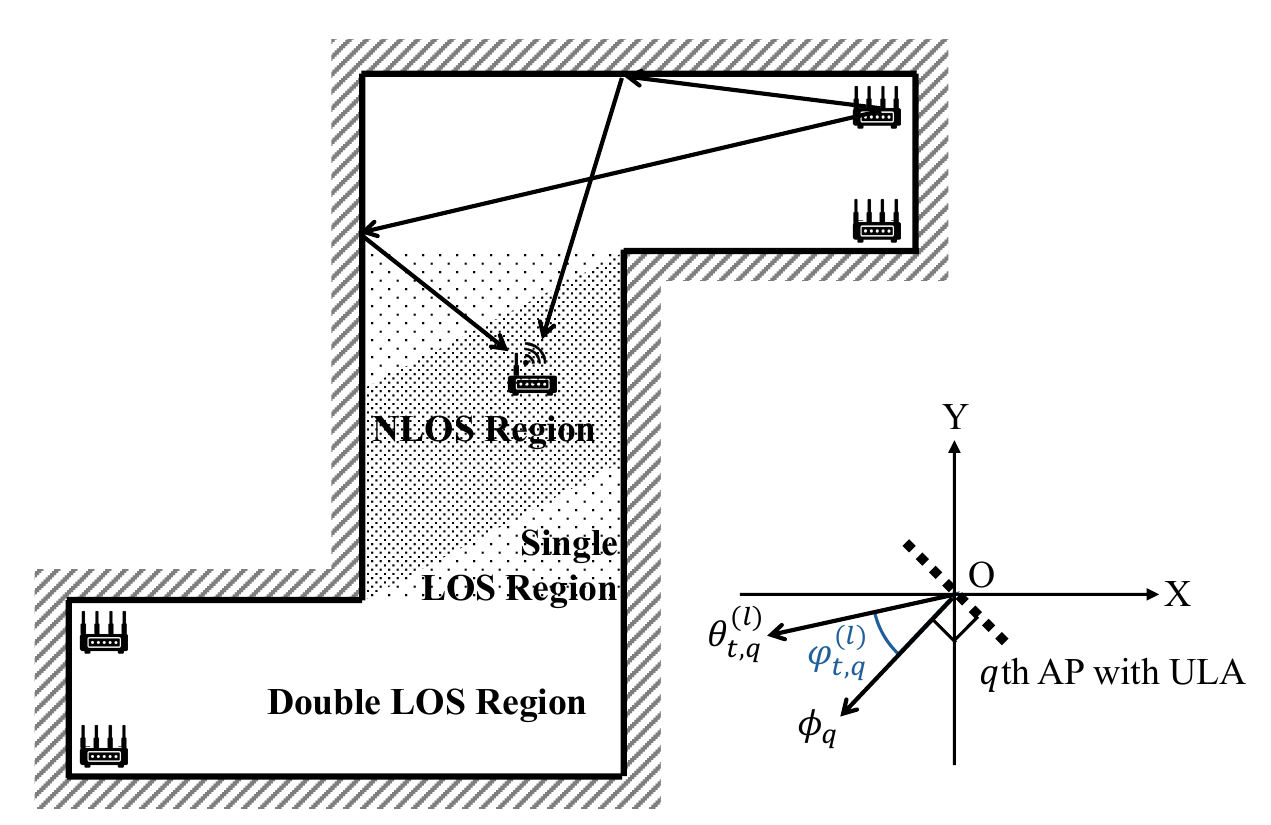}\caption{{An example indoor environment with NLOS regions.
\label{fig:antenna}}}
\end{figure}

\subsection{Propagation Model in MIMO-OFDM Networks}

Consider a single-antenna mobile user moving in an indoor environment.
There are $Q$ \acpl{ap} located at known positions $\mathbf{o}_{1},\mathbf{o}_{2},\dots,\mathbf{o}_{Q}\in\mathbb{R}^{2}$.
Each \ac{ap} is equipped with a \ac{ula}
consisting of $N_{\mathrm{t}}$ antennas. For
the ease of elaboration, we assume no angular ambiguity in the antenna
geometry. For example, assume that the \acpl{ap} are installed on
a wall or the antenna arrays are installed with a rear panel, such
that there is approximately no signal arriving from the back of the
\ac{ap}.

Consider the XOY coordinate system as shown in Figure~\ref{fig:antenna}.
For each \ac{ap} $q$, denote $\phi_{q}$ as the angle of the reference
direction that is orthogonal to the array axis of the antenna. For
each user position $\mathbf{x}_{t}$ at time $t$, denote $\theta_{t,q}^{(l)}$
as the \ac{aod} of the $l$th propagation path from the $q$th AP
at position $\mathbf{o}_{q}$ to the user at position $\mathbf{x}_{t}$
as shown in Figure~\ref{fig:antenna}.
As a result, the \emph{relative} \ac{aod} $\varphi_{t,q}^{(l)}\in(-\pi/2,\pi/2)$
\ac{wrt} the reference direction $\phi_{q}$ satisfies $\theta_{t,q}^{(l)}=(\varphi_{t,q}^{(l)}+\phi_{q})\mod2\pi$.

Assuming far-field propagation, the steering vector at the $q$th
\ac{ap} for a propagation path with a
relative \ac{aod} $\varphi$ is defined as
\begin{equation}
\mathbf{a}(\varphi)=\left[1,e^{-j\frac{2\pi}{\lambda}\Delta\sin\varphi},\ldots,e^{-j\frac{2\pi}{\lambda}(N_{\mathrm{t}}-1)\Delta\sin\varphi}\right]^{\mathrm{T}}\label{eq:steering}
\end{equation}
where $\Delta$ denotes the inter-element spacing, $\lambda=\frac{c}{f_{c}}$
is the wavelength at the carrier frequency $f_{c}$, and $c=3\times10^{8}\ \mathrm{m/s}$
is the speed of light.

Consider an \ac{ofdm} system with $M$
subcarriers ($M>N_{\mathrm{t}}$) and bandwidth $B$. At time slot
$t$, the baseband equivalent \ac{mimo}
channel $\mathbf{h}_{t,q}^{(m)}\in\mathbb{C}^{N_{\text{t}}}$ of the
$m$th subcarrier at the $q$th \ac{ap}
is given by\footnote{Here, we have implicitly assumed that the delays $\tau_{t,q}^{(\ell)}$
are discretized following a tap-delay model.}
\begin{equation}
\mathbf{h}_{t,q}^{(m)}=\sum_{\ell=1}^{L}\kappa_{t,q}^{(\ell)}e^{-j2\pi\frac{m}{M}B\tau_{t,q}^{(\ell)}}\mathbf{a}(\varphi_{t,q}^{(\ell)})\label{eq:MIMO-OFDM channel model}
\end{equation}
where $L$ is the number of paths from the $q$th \ac{ap}
to the user at position $\mathbf{x}_{t}$. For the path $l$ between
\ac{ap} $q$ and location $\mathbf{x}_{t}$,
$\kappa_{t,q}^{(\ell)}\in\mathbb{C}$ denotes the complex gain, and
$\tau_{t,q}^{(\ell)}\in\mathbb{R}_{+}$ represents the propagation
delay.

By stacking the vectors \eqref{eq:MIMO-OFDM channel model}, the \ac{ofdm}
channel measured at the $q$th \ac{ap} at time slot $t$ is represented
by
\[
\mathbf{H}_{t,q}=[\mathbf{h}_{t,q}^{(1)},\mathbf{h}_{t,q}^{(2)},\dots,\mathbf{h}_{t,q}^{(M)}]\in\mathbb{C}^{N_{\mathrm{t}}\times M}.
\]

\subsection{Mobility Model}

Consider that the slot duration is even. The mobility of the user
may follow a distribution where a big jump from $\mathbf{x}_{t-1}$
to a far away position $\mathbf{x}_{t}$ is rare. To capture such
a prior information, a widely adopted model is the Gauss-Markov model
\cite{He:J18} 
\begin{equation}
\mathbf{x}_{t}-\mathbf{x}_{t-1}=\gamma\left(\mathbf{x}_{t-1}-\mathbf{x}_{t-2}\right)+(1-\gamma)\delta\bar{\mathbf{v}}+\sqrt{1-\gamma^{2}}\,\delta\bm{\epsilon}\label{eq:mobility-model}
\end{equation}
which captures the temporal correlation in both the position $\mathbf{x}_{t}$
and the speed $\mathbf{x}_{t}-\mathbf{x}_{t-1}$, where $0<\gamma\leq1$
is the velocity correlation coefficient, $\delta$ is the time slot
duration, $\bar{\mathbf{v}}$ is the average velocity, and $\bm{\epsilon}\sim\mathcal{N}(\bm{0},\sigma_{\mathrm{v}}^{2}\mathbf{I})$
models the randomness.

We adopt a graph-based approach to convert \eqref{eq:mobility-model}
into a probability model. Specifically, the indoor environment is
represented as a graph $\mathcal{G}=(\mathcal{V},\mathcal{E})$, where
$\mathcal{V}$ denotes the set of possible discretized positions $\mathbf{p}_{i}\in\mathbb{R}^{2}$
placed uniformly over the area of the indoor environment. Denote $D_{\text{m}}$
as the maximum travel distance within a single time slot. The set
$\mathcal{E}$ consists of edges, such that there is an edge between
$\mathbf{p}_{i}$ and $\mathbf{p}_{j}$ if $\|\mathbf{p}_{j}-\mathbf{p}_{i}\|_{2}\leq D_{\text{m}}$.
Each node is also considered adjacent to itself, i.e., $(\mathbf{p}_{i},\mathbf{p}_{i})\in\mathcal{E}$.

Let $\mathcal{N}_{i}=\{\mathbf{p}_{j}:\|\mathbf{p}_{j}-\mathbf{p}_{i}\|_{2}\leq D_{\text{m}}\}$
denote the set of neighbors of node $\mathbf{v}_{i}$ (including itself).
The user trajectory is modeled as a sequence of positions constrained
to the graph $\mathcal{G}$.

To construct the transition probabilities on the discrete graph, we
evaluate the continuous-state transition density at each neighbor
and normalize across all feasible neighbors. Specifically, the probability
that the user moves from node $\mathbf{p}_{n}$ at time $t-2$ and
node $\mathbf{p}_{i}$ at time $t-1$ to a neighbor node $\mathbf{p}_{j}\in\mathcal{N}_{i}$
at the next time $t$ is given by 
\begin{equation}
\mathbb{P}(\mathbf{p}_{j}|\mathbf{p}_{i},\mathbf{p}_{n})=\frac{a_{nij}}{\sum_{k\in\mathcal{N}_{i}}a_{nik}}\label{eq:discretized-mobility-model}
\end{equation}
where the normalization $1/\sum_{k\in\mathcal{N}_{i}}a_{nik}$ guarantees
that for any given $\mathbf{p}_{i}$ and $\mathbf{p}_{n}$, $\sum_{j\in\mathcal{N}_{i}}\mathbb{P}(\mathbf{p}_{j}|\mathbf{p}_{i},\mathbf{p}_{n})=1$,
the probability $\mathbb{P}(\mathbf{p}_{j}|\mathbf{p}_{i},\mathbf{p}_{n})=0$
for $\mathbf{p}_{j}\notin\mathcal{N}_{i}$, and the factor
\begin{align*}
a_{nij} & =\frac{1}{2\pi(1-\gamma^{2})\delta^{2}\sigma_{\mathrm{v}}^{2}}\exp\Big(-\frac{1}{2(1-\gamma^{2})\delta^{2}\sigma_{\mathrm{v}}^{2}}\\
 & \qquad\times(\mathbf{p}_{j}+\gamma\mathbf{p}_{n}-(1+\gamma)\mathbf{p}_{i}-(1-\gamma)\delta\bar{\mathbf{v}})^{\mathrm{T}}\\
 & \qquad\times(\mathbf{p}_{j}+\gamma\mathbf{p}_{n}-(1+\gamma)\mathbf{p}_{i}-(1-\gamma)\delta\bar{\mathbf{v}})\Big)
\end{align*}
is given from the Gauss-Markov model \eqref{eq:mobility-model}. Based
on \eqref{eq:discretized-mobility-model}, one can evaluate the transition
probability $\mathbb{P}(\mathbf{x}_{t}|\mathbf{x}_{t-1},\mathbf{x}_{t-2})$.

\subsection{A Bayesian Framework for Trajectory Inference}

Denote $\mathcal{H}$ as a mapping from the \ac{mimo}-\ac{ofdm}
channel $\mathbf{H}_{t,q}$ to the observed radio signature $\mathbf{y}_{t,q}=\mathcal{H}(\mathbf{H}_{t,q})$
that will be exploited for trajectory inference. The development of
a specific mapping $\mathcal{H}$ will be discussed in Section \ref{sec:Problem-Alg}.
Assuming that the observations $\mathbf{y}_{t,q}$ are independent
across $t$ and $q$, a Bayesian model that describes the evolution
of the observation $\mathbf{y}_{t}=\{\mathbf{y}_{t,q}\}_{q}$ as a
function of the trajectory can be formulated as 
\begin{align}
p(\mathcal{Y}_{t},\mathcal{X}_{t}) & =p(\mathbf{y}_{t}|\mathbf{x}_{t})\mathbb{P}(\mathbf{x}_{t}|\mathbf{x}_{t-1},\mathbf{x}_{t-2})p(\mathcal{Y}_{t-1},\mathcal{X}_{t-1})\label{eq:p_Yt_Xt}\\
 & =\prod_{\tau=1}^{t}\prod_{q=1}^{Q}p(\mathbf{y}_{\tau,q}|\mathbf{x}_{\tau})\prod_{\tau=3}^{t}\mathbb{P}(\mathbf{x}_{\tau}|\mathbf{x}_{\tau-1},\mathbf{x}_{\tau-2}),\label{eq:p_Yt_Xt2}
\end{align}
where $\mathcal{X}_{t}=(\mathbf{x}_{1},\ldots,\mathbf{x}_{t})$ and
$\mathcal{Y}_{t}=(\mathbf{y}_{1},\ldots,\mathbf{y}_{t})$ are the
trajectory of the mobile user and the accumulated observations up
to time $t$, respectively and (\ref{eq:p_Yt_Xt2}) is obtained by
repeatedly applying the chain rule in (\ref{eq:p_Yt_Xt}). While the
mobility model $\mathbb{P}(\mathbf{x}_{t}|\mathbf{x}_{t-1},\mathbf{x}_{t-2})$
is given in (\ref{eq:discretized-mobility-model}), the main challenge
is to develop the conditional probability $p(\mathbf{y}_{t}|\mathbf{x}_{t})$
that will be discussed in Section \ref{sec:Problem-Alg}.

In the rest of the paper, we propose to extract the trajectory $\mathcal{X}_{t}$
by maximizing the parameterized likelihood (\ref{eq:p_Yt_Xt2}) with
a proper design of the radio signature mapping $\mathcal{H}$ that
extracts spatial information from the \ac{mimo}-\ac{ofdm} channels
and a cost function that exploits the property of the propagation.

\section{Spatial Continuity and Trajectory Identifiability}

\label{sec:Spatial-Continuity}

The key challenge of trajectory inference without any location labels
in an indoor environment is that the propagation is probably \ac{nlos}
where the path may arrive at any angle, and in this case, the \ac{aod}
may provide very little information on direction of the user. Moreover,
the path amplitude also barely contains any information on the propagation
distance due to the fading in indoor with rich scattering.

In this section, we try to understand some intuitive theoretical principle
of blind trajectory inference by studying the properties of the channel
and the trajectory estimation in simplified and special scenarios.
First, for \ac{nlos}, we try to establish the spatial continuity
of the channel, where under the same propagation environment as to
be specified later, there exists a distance metric $D$ such that
the distance between the \ac{csi} is consistent with the physical
distance $d$ between the corresponding physical locations, \emph{i.e.},
$D(\mathbf{H}_{1},\mathbf{H}_{2})\propto d(\mathbf{x}_{1},\mathbf{x}_{2})$.
Second, we try to establish a simple scenario with theoretical guarantee
where user trajectories are identifiable, \emph{i.e.}, to perfectly
recover a simple trajectory under infinite amount of independent measurements.

\subsection{Quasi-Specular Environment Model}

\label{subsec:Quasi-specular-Environment-Model}

We first specify the environment model for mathematical tractability.
Consider the 2D indoor environment to be surrounded by a finite number
of \emph{quasi-specular surfaces} and a finite number of \emph{diffractive
scatters}. A \emph{quasi-specular surface} is defined as a patch that
roughly holds a specular property, where the patch absorbs some energy
of an incident wave, and reflects the wave with a majority energy
towards the direction with the emergence angle equal to the incident
angle and with a minor energy towards the directions dispersed around
the major reflected path as shown in Figure \ref{fig:SpatialContinuity}(a).
While this is also known as scattering in some literature \cite{OesCla:J03,AstDav:J02},
our model limits the range of scattering around the major reflected
path, whose emergence angle equal to the incident angle. As a result,
when a receiver (RX) locates at a position on the major reflected
path of a transmitter (TX), it may also receive a number of scattered
paths arriving from a similar angle as shown in Figure \ref{fig:SpatialContinuity}(b).
Based on the geometry, we can model the reflection with scattering
due to this patch using a mirror TX located at the symmetric position
of the true TX about the patch and a cluster of virtual TXs surrounding
the mirror TX, as shown in Figure \ref{fig:SpatialContinuity}(c).
Note that the scattering paths may arrive constructively or destructively
at the RX, modeling the fading phenomenon. Therefore, the amplitude
and phase of the received signal at the RX cannot be computed in a
deterministic way, but the major propagation delay and angle can still
be computed geometrically based on our model.

A \emph{diffractive scatter} models the diffraction phenomenon, where
a radio propagation path may bend when passing over the edge of an
obstacle as shown in Figure \ref{fig:SpatialContinuity}(d). Geometrically,
one can place a virtual TX with the same distance away from the diffractive
scatter as that of the true TX such that the RX, diffractive scatter,
and the virtual TX are on the same line as shown in Figure \ref{fig:SpatialContinuity}(e).
In our model, we assume diffraction exists only for a limited angle
range, meaning a diffractive path can only bend for a certain angle.
As a result, given a TX, each diffractive scatter corresponds to a
series of TXs located on an arc segment.

We only consider a finite number of reflections and diffractions.
Note that each mirror TX and the associated cluster of virtual TXs
can be reflected and diffracted again, resulting in double reflections,
reflection-then-diffraction, triple reflections, and so on. Note that,
the more reflections and diffractions, the further away a mirror TX
from the original TX is created due to the geometry relation as shown
in Figure \ref{fig:SpatialContinuity}(f), forming a \emph{lattice}
of mirror or virtual TX clusters. Thus, it is a natural assumption
to consider that the clusters of virtual TXs do not overlap in a simple
environment that has not so many patches and diffractive scatters.
A typical example that likely matches with the quasi-specular environment
model is a polygonal indoor office with several solid walls.

\begin{figure}[t]
\centering{}\includegraphics[width=1\columnwidth]{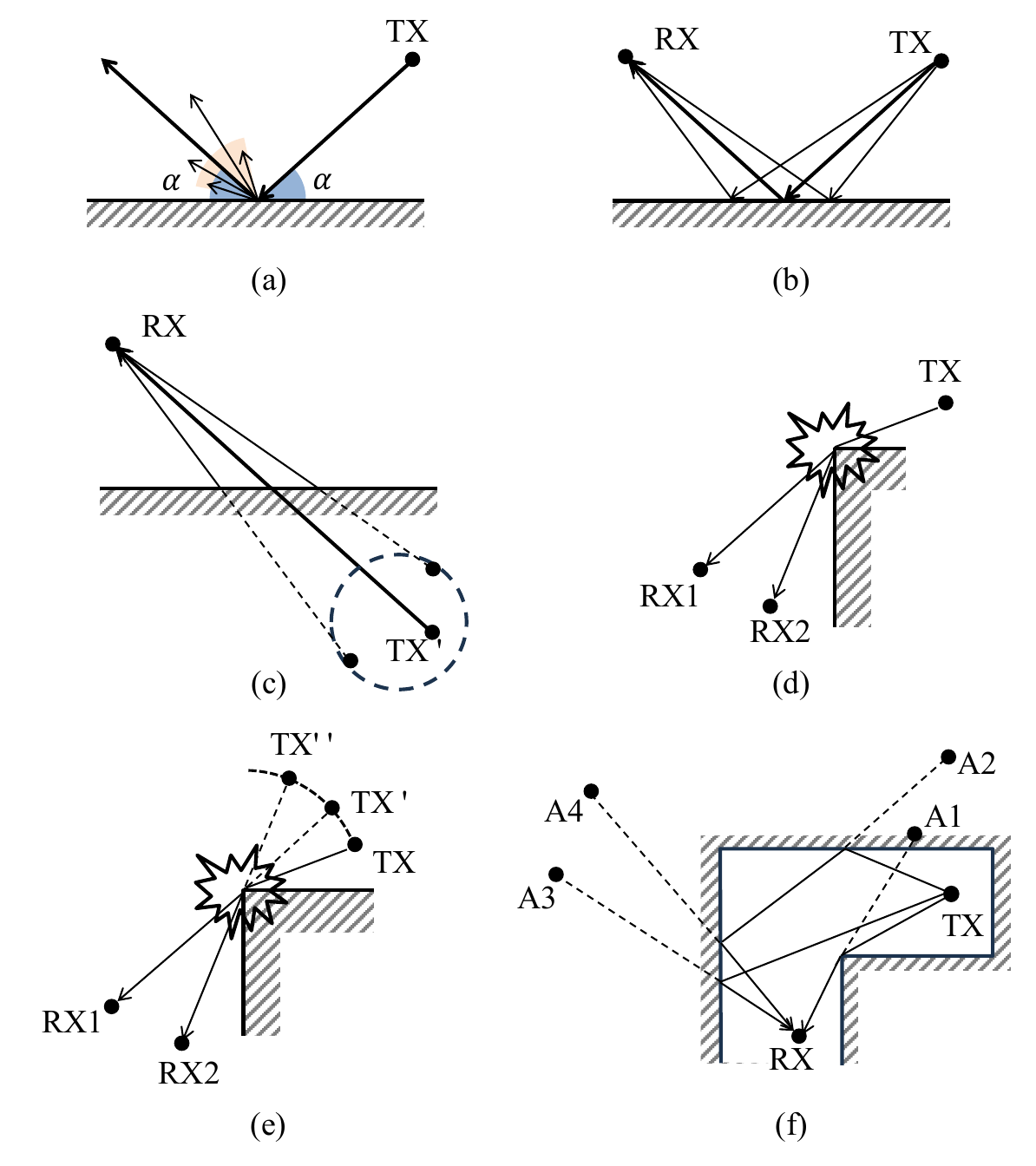}\caption{Illustration of quasi-specular environment model{\@.}
(a) The scattered paths spread over a limited range of angles. (b)
The RX receives a number of scattered paths arriving from a similar
angle. (c) It is equivalent to view that the RX receives signals from
a cluster of virtual TXs surrounding the mirror TX. (d) Diffraction.
(e) A series of virtual TXs on an arc segment. (f) A lattice of mirror
TXs A1, A2, A3, and A4.\label{fig:SpatialContinuity}}
\end{figure}

\subsection{Spatial Continuity}

The quasi-specular environment model inspires the construction of
radio signature $S_{i}$ for identifying position $\mathbf{p}_{i}$.
Specifically, consider to place a TX at position $\mathbf{p}_{i}$.
One can construct a lattice of mirror images of $\mathbf{p}_{i}$.
Note that if one moves the TX, its images also move. Thus, in general,
the lattice of mirror images differs from $\mathbf{p}_{i}$ to $\mathbf{p}_{j}$
unless there is perfect symmetry of the two distinct positions $\mathbf{p}_{i}$
and $\mathbf{p}_{j}$ in the environment, which is rare. As a result,
one can selected the $L$ mirror images that are the nearest from
the \ac{ap} as the spatial signature. More precisely, a \emph{conceptual}
radio signature $\mathcal{S}_{i}$ for position $\mathbf{p}_{i}$
can be constructed as $\mathcal{S}_{i}=\{(\tau_{i}^{(l)}-\tau_{i}^{(1)},\theta_{i}^{(l)}),l=1,2,\dots,L\}$,
where $\tau_{i}^{(l)}-\tau_{i}^{(1)}$ is the relative delay of the
$l$th shortest path to the delay of the first arrival path and $\theta_{i}^{(l)}$
is the \ac{aoa} of the $l$th shortest path at the RX. The reason
that absolute delays are excluded in $\mathcal{S}_{i}$ because time
synchronization is not assumed between TX and RX. In addition, the
amplitude information is also excluded in $\mathcal{S}_{i}$ because
the amplitude can be random due to various effects in practice, including
fading, antenna pattern, and body shadowing.

The essential problem to investigate here is \emph{whether the radio
signature $S$ is locally continuous in the physical space}. More
precisely, can we find a distance metric on $S$, where the distance
between $S_{i}$ and $S_{j}$ is consistent with the physical distance
between the two corresponding positions $\mathbf{p}_{i}$ and $\mathbf{p}_{j}$,
at least when the two positions are nearby? This problem is essential
for radio map construction, because it asserts whether one can abstract
the radio signature $S$ in a continuous physical area to a \emph{finite}
number of \emph{representations} $\{S_{i}\}$, such that for any continuous
position $\mathbf{p}$, there is a discrete position $\mathbf{p}_{i}$
nearby whose radio signature $S_{i}$ is close to the signature at
position $\mathbf{p}$. In addition, if spatial continuity exists,
it helps infer the latent physical topology from the \ac{csi} and
enables a blind construction of radio map from \ac{csi} measurements
without location labels.

We first specify the scenario where two nearby positions share the
same propagation environment. Denote the \ac{ap} position as point
$A$. Following the quasi-specular environment model, a lattice of
mirror positions $\mathcal{A}=\{A_{1},A_{2},\dots,A_{\bar{L}}\}$
can be formed based on sequential specular operations, where we only
consider a finite number of reflections and diffractions. Recall that,
for each mirror position $A_{l}\in\mathcal{A}$, it represents a cluster
of virtual mirror APs due to possible scattering and diffraction as
discussed in Section \ref{subsec:Quasi-specular-Environment-Model}.
For a position $\mathbf{p}_{i}$, only a subset $\mathcal{A}^{i}\subset\mathcal{A}$
of mirror images are visible from $\mathbf{p}_{i}$ because some paths
may be blocked as shown in Figure \ref{fig:SpatialContinuity}. Two
positions $\mathbf{p}_{i}$ and $\mathbf{p}_{j}$ are said to share
the same propagation environment if the corresponding visible subsets
are identical $\mathcal{A}^{i}=\mathcal{A}^{j}$.

We then formulate the problem starting from a single antenna case
for the ease of elaboration. Consider two nearby positions $\mathbf{p}_{1}$
and $\mathbf{p}_{2}$ that share the same propagation environment,
\emph{i.e.}, $\mathcal{A}^{1}=\mathcal{A}^{2}$. Following the model
in (\ref{eq:MIMO-OFDM channel model}), the $m$th subcarrier of the
\ac{ofdm} channel $\mathbf{h}_{1}$ at position $\mathbf{p}_{1}$
is given by 
\begin{equation}
h_{1}^{(m)}=\sum_{l=1}^{L}\kappa_{1}^{(l)}e^{-jm\frac{2\pi}{M}B\tau_{1}^{(l)}}\label{eq:channel-model-single-antenna}
\end{equation}
where, following our quasi-specular environment model, $\kappa_{1}^{(l)}$
and $\tau_{1}^{(l)}$ are, respectively and conceptually, the complex
path gain and delay for the path that associated with the $l$th mirror
\ac{ap} at $A_{l}\in\mathcal{A}^{1}$ in the visible subset. Likewise,
we can express the channel $h_{1}^{(m)}$ for position $\mathbf{p}_{2}$
that share the same subset of visible mirror images $\mathcal{A}^{2}=\mathcal{A}^{1}$.

Define 
\begin{equation}
R(u)=\frac{1}{M}\sum_{m=0}^{M-1}\mathbb{E}\{h_{1}^{(m)}h_{2}^{(m)*}\}e^{jm\frac{2\pi}{M}u}\label{eq:Ru}
\end{equation}
where the expectation is taken over the randomness of the complex
path gain $\kappa_{1}^{(l)}$, $\kappa_{2}^{(l)}$, and direction
$\theta'$ from $\mathbf{p}_{1}$ to $\mathbf{p}_{2}$. It follows
that $R(u)$ is the correlation function in the delay domain, because
$h_{i}^{(m)}$ represent the frequency domain channel, the product
$h_{1}^{(m)}h_{2}^{(m)*}$ computes the correlation, and the sum with
the exponential term is just the \ac{idft} formula, \emph{i.e.},
transforming the correlation to the time domain.

Consider the following theoretical metric 
\begin{equation}
\hat{u}(\mathbf{h}_{1},\mathbf{h}_{2})=\underset{u}{\arg\max}\;|R(u)|.\label{eq:hat-u}
\end{equation}
Denote $d=\|\mathbf{p}_{1}-\mathbf{p}_{2}\|_{2}$ as the physical
distance between the two positions, which have been assumed to share
the same propagation environment. Suppose that the complex path gains
$\kappa_{i}^{(l)}$ are zero-mean and uncorrelated among $l$, but
there is correlation among $i=1,2$, \emph{i.e.}, $\mathbb{E}\{\kappa_{1}^{(l)}\kappa_{2}^{(l')*}\}=0$
for $l\neq l'$, and $\mathbb{E}\{\kappa_{1}^{(l)}\kappa_{2}^{(l)*}\}=C(d)$
for all $l$. Typically, one may expect that the correlation $C(d)$
decreases in distance $d$. Then, we have the following result.

\begin{mythm}\label{thm:Spatial-Continuity}(Spatial Continuity)
Under the multipath model \eqref{eq:channel-model-single-antenna}
and a large $M$ approximation, we have $\hat{u}(\mathbf{h}_{1},\mathbf{h}_{2})\approx\frac{B}{c}d.$\end{mythm}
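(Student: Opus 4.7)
The plan is to reduce $R(u)$ to the Fourier transform of a Bessel function whose modulus is maximized precisely at $u \approx \tfrac{B}{c}d$. First, substituting \eqref{eq:channel-model-single-antenna} into $\mathbb{E}\{h_1^{(m)}h_2^{(m)*}\}$, the uncorrelated-path hypothesis $\mathbb{E}\{\kappa_1^{(l)}\kappa_2^{(l')*}\}=C(d)\,\mathbf{1}\{l=l'\}$ collapses the double sum over paths to
\[
\mathbb{E}\{h_1^{(m)}h_2^{(m)*}\} = C(d)\sum_{l=1}^{L}\mathbb{E}\Bigl\{e^{-jm\frac{2\pi B}{M}\Delta\tau^{(l)}}\Bigr\},\qquad \Delta\tau^{(l)}:=\tau_1^{(l)}-\tau_2^{(l)},
\]
with the remaining expectation running over the random scatter direction.

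Next, since $\mathbf{p}_1$ and $\mathbf{p}_2$ share the same visible mirror subset $\mathcal{A}^1=\mathcal{A}^2$ (Section~\ref{subsec:Quasi-specular-Environment-Model}), the delays $\tau_i^{(l)}=\|A_l-\mathbf{p}_i\|/c$ admit a far-field linearization $\Delta\tau^{(l)}\approx\tfrac{1}{c}\hat{u}_l\cdot(\mathbf{p}_1-\mathbf{p}_2)$, where $\hat{u}_l$ is the unit vector pointing from the cluster around $A_l$ toward the observation points. In the fully scattered NLOS regime, the relative bearing $\theta'$ (equivalently $\hat{u}_l$) is uniform on $[0,2\pi)$, so the direction average evaluates to a Bessel function,
\[
\mathbb{E}\Bigl\{e^{-jm\frac{2\pi B}{Mc}\hat{u}_l\cdot(\mathbf{p}_1-\mathbf{p}_2)}\Bigr\} = J_0\!\left(\tfrac{2\pi m B d}{Mc}\right),
\]
with $d=\|\mathbf{p}_1-\mathbf{p}_2\|$. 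Substituting back yields $R(u)\approx \tfrac{C(d)L}{M}\sum_{m=0}^{M-1} J_0\bigl(\tfrac{2\pi m B d}{Mc}\bigr)e^{jm\frac{2\pi u}{M}}$.

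Finally, invoking the large-$M$ approximation converts the Riemann sum to the integral $R(u)\approx C(d)L\int_0^1 J_0(2\pi f Bd/c)\,e^{j2\pi f u}\,df$, and the classical Fourier identity $\int_{\mathbb{R}} J_0(2\pi af)e^{j2\pi uf}\,df \propto (a^2-u^2)^{-1/2}\mathbf{1}_{\{|u|<a\}}$ identifies $|R(u)|$ as essentially supported on $|u|\le Bd/c$ with inverse-square-root singularities at the endpoints; these endpoint singularities furnish the global maxima, giving $\hat{u}\approx \tfrac{B}{c}d$. The hardest step will be controlling the truncation of the Fourier integral from $\mathbb{R}$ to $[0,1]$ so that the endpoint singularity still dominates the residual oscillatory tail, particularly when $Bd/c$ is close to or exceeds unity; the Fourier symmetry also yields an equally strong peak at $-Bd/c$, which is resolved by adopting the positive branch (or interpreting the metric as $|\hat{u}|$) implicit in the statement.
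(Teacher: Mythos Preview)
Your proposal is correct and follows essentially the same route as the paper: collapse the cross-correlation via the uncorrelated-path assumption, average over a uniform arrival direction to obtain $J_0$, pass to the integral via large $M$, and locate the peak of $|R(u)|$ at the endpoint singularity $u\approx Bd/c$ of the Bessel transform. The only point where the paper is slightly more explicit is the truncation step you flag as hardest: rather than bounding the tail directly, the paper writes $R(u)\approx C(d)L\cdot R_J(u)*R_r(u)$ with $R_J(u)=\pi^{-1}(\omega^2-u^2)^{-1/2}\mathbf{1}\{|u|<\omega\}$ and $R_r(u)=e^{j\pi u}\mathrm{sinc}(u)$, and then argues qualitatively that convolving the U-shaped $R_J$ against the sinc pulse merely smears the endpoint singularity without moving the peak away from $u\approx\omega=Bd/c$.
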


\begin{proof}See Appendix \ref{sec:Proof-of-Spatial-conti}.\end{proof}

Theorem \ref{thm:Spatial-Continuity} confirms that within the same
propagation environment, there exists a metric where the CSI distance
is consistent with the physical distance of the corresponding positions.
In a special case of using metric \eqref{eq:hat-u}, the two distances
are related by a factor of $B/c$.

The implication of Theorem \ref{thm:Spatial-Continuity} is two-fold:
First, one can approximate the CSI sampled randomly in a neighborhood
using the CSI sampled from a deterministic representative location
in a set $\mathcal{V}$. While this result may sound intuitive, Theorem
\ref{thm:Spatial-Continuity} finds a metric to quantify the CSI approximation
error in terms of distance $d$ away from the representative location.
Second, Theorem \ref{thm:Spatial-Continuity} provides a perspective
to convert the trajectory inference problem to a multidimensional
scaling (MDS) problem \cite{KonQi:J19} that embeds high-dimensional
CSI observation $\mathbf{y}_{t}$ onto a 2D space while preserving
the pairwise distances under a certain scaling. Such a philosophy
will be exploited for algorithm development in Section \ref{sec:Problem-Alg}.
\begin{rem}
[A Practical CSI Distance Metric]\label{rem:Practical-Distance-metric}
The metric $R(u)$ defined in (\ref{eq:Ru}) is a theoretical one
because it needs to compute the expectation. A practical metric can
be defined as 
\begin{equation}
\hat{R}(u)=\frac{1}{M}\sum_{m=0}^{M-1}h_{1}^{(m)}h_{2}^{(m)*}e^{jm\frac{2\pi}{M}u}.\label{eq:Ru-hat}
\end{equation}
\end{rem}
The two metrics (\ref{eq:Ru}) and (\ref{eq:Ru-hat}) are related
under a rich scattering environment where there are a large number
of independent paths. Then, the law of large numbers comes into play
to connect the expectation with the sum of paths (See (28) in Appendix
A). Therefore, Theorem \ref{thm:Spatial-Continuity} is only meaningful
in NLOS case.
\begin{rem}
[Multiple Antenna Extension]\label{rem:Multiple-Antenna-Extension}
A rigorous metric that exploits the angular domain information for
\ac{mimo}-\ac{ofdm} channel with analytical justification similar
to \eqref{eq:hat-u} remains unknown. The challenge is that the angular
discrepancy is inversely proportional to the total propagation distance,
which is relatively large compared to $d$. By contrast, the frequency
domain discrepancy that leads to \eqref{eq:hat-u} does not depend
on the absolute propagation distance. A naive extension to the multiple
antenna case without exploiting the angular domain discrepancy is
to apply \eqref{eq:Ru} to each antenna and yield $R_{n}(u)$ for
the $n$th antenna. Then, defining $R(u)=\frac{1}{N_{\text{t}}}\sum_{n=1}^{N_{\text{t}}}R_{n}(u)$,
and the same metric $\hat{u}(\mathbf{H}_{1},\mathbf{H}_{2})$ in \eqref{eq:hat-u}
follows and the same spatial continuity result applies.
\end{rem}

\subsection{Trajectory Identifiability}

\label{subsec:trajectory-idenfiability}

We now examine the problem of trajectory inference from another perspective.
Suppose that {the }\ac{aod}{
$\theta_{t,q}$ of the dominant path from }\ac{ap}{
$q$ to the user at location $\mathbf{x}_{t}$ follows a Gaussian
distribution
\begin{equation}
\theta_{t,q}\sim\mathcal{N}\left(\phi(\mathbf{x}_{t},\mathbf{o}_{q}),\sigma_{\theta}^{2}\right)\label{eq:theta-model}
\end{equation}
where $\phi(\mathbf{x}_{t},\mathbf{o}_{q})$ denotes the geometric
azimuth angle between the user location $\mathbf{x}_{t}$ and the
}\ac{ap}{ position $\mathbf{o}_{q}$. In
an LOS scenario, $\sigma_{\theta}^{2}$ tends to be small, whereas,
for NLOS, $\sigma_{\theta}^{2}$ can be large. While this model might
be a bit artificial for an indoor case, the problem we investigate
is highly non-trivial: \emph{for an arbitrarily large angular variance
$\sigma_{\theta}^{2}$, is it theoretically possible to recover a
rectilinear trajectory using a sufficient number of measurements?}}

The prior work \cite{ZheChe:J25} attempted
a similar problem in an outdoor case exploiting the power law that
relates the distance information with the signal strength. However,
in the indoor case, we do \emph{not} assume a power law of the signal
strength due to the more complicated propagation. Instead, we only
rely on the noisy and unbiased \ac{aod} information from \eqref{eq:theta-model}
to recover the trajectory, which leads to non-straight-forward answers.

Specifically, consider the Gauss-Markov mobility model in (\ref{eq:mobility-model})
in the continuous space with $\gamma=1$, which degenerates to a constant
speed rectilinear mobility
\begin{equation}
\mathbf{x}_{t}=\mathbf{x}+t\mathbf{v}\label{eq:constant-speed-mobility}
\end{equation}
where we only need to estimate the start position and the speed $(\mathbf{x},\mathbf{v})\in\mathbb{R}^{4}$
in a continuous space. Consequently, the distance $d(\mathbf{x}_{t},\mathbf{o}_{q})$
between the mobile location $\mathbf{x}_{t}$ and the $q$th \ac{ap}
location $\mathbf{o}_{q}$ is simplified as
$d_{t,q}(\mathbf{x},\mathbf{v})\triangleq\|\bm{l}_{q}(\mathbf{x})+t\mathbf{v}\|_{2}$,
where $\bm{l}_{q}(\mathbf{x})=\mathbf{x}-\mathbf{o}_{q}$ is the
direction from the $q$th \ac{ap} to
the initial position $\mathbf{x}$ of the trajectory.

Thus, the log-likelihood function $\log p(\mathcal{Y}_{T},\mathcal{X}_{T})$
in \eqref{eq:p_Yt_Xt2} simplifies to
\begin{align}
f(\bm{\psi}) & =\sum_{t=1}^{T}\sum_{q=1}^{Q}\bigg[-\ln(2\pi\sigma_{\theta}^{2})\label{eq:log-likelihood-constant-speed}\\
 & \qquad-\frac{1}{2\sigma_{\theta}^{2}}\left(\theta_{t,q}-\phi(\mathbf{x}+t\mathbf{v},\mathbf{o}_{q})\right)^{2}\Big]\nonumber 
\end{align}
{where $\bm{\psi}=(\mathbf{x},\mathbf{v})$
represents the mobility parameters, and the term $\log\mathbb{P}(\mathbf{x}_{t}|\mathbf{x}_{t-1},\mathbf{x}_{t-2})$
vanishes under the constant-speed mobility model.}

The focus here is to understand the fundamental limit of estimating
the mobility parameter $\bm{\psi}=(\mathbf{x},\mathbf{v})$ using
only \ac{aod} measurements.

\subsubsection{The Fisher Information Matrix}

The \ac{fim} $\mathbf{F}_{T,\psi}$ of $\bm{\psi}=(\mathbf{x},\mathbf{v})\in\mathbb{R}^{4}$
from the measurements over a duration $T$ can be computed as
\begin{align*}
\mathbf{F}_{T,\psi} & \triangleq\mathbb{E}\{-\nabla_{\bm{\psi}\bm{\psi}}^{2}f(\bm{\psi})\}\\
 & =\sum_{t,q}\frac{1}{\sigma_{\theta}^{2}}\nabla_{\bm{\psi}}\phi(\mathbf{x}+t\mathbf{v},\mathbf{o}_{q})(\nabla_{\bm{\psi}}\phi(\mathbf{x}+t\mathbf{v},\mathbf{o}_{q}))^{\mathrm{T}}.
\end{align*}
The derivative $\nabla_{\bm{\psi}}\phi(\mathbf{x}+t\mathbf{v},\mathbf{o}_{q})$
is derived as
\begin{align*}
\nabla_{\bm{\psi}}\phi(\mathbf{x}+t\mathbf{v},\mathbf{o}_{q}) & =\frac{1}{d_{t,q}^{2}(\mathbf{x},\mathbf{v})}\left[\begin{array}{c}
1\\
t
\end{array}\right]\\
 & \qquad\otimes\left(\begin{bmatrix}0 & -1\\
1 & 0
\end{bmatrix}(\bm{l}_{q}(\mathbf{x})+t\mathbf{v})\right).
\end{align*}

Thus, the \ac{fim} can be expressed as
\begin{align}
\mathbf{F}_{T,\psi} & =\sum_{t,q}\frac{1}{\sigma_{\theta}^{2}d_{t,q}^{4}(\mathbf{x},\mathbf{v})}\begin{bmatrix}1 & t\\
t & t^{2}
\end{bmatrix}\label{eq:F-Txtil}\\
 & \otimes\left(\lVert\bm{l}_{q}(\mathbf{x})+t\mathbf{v}\rVert^{2}\mathbf{I}-(\bm{l}_{q}(\mathbf{x})+t\mathbf{v})(\bm{l}_{q}(\mathbf{x})+t\mathbf{v})^{\text{T}}\right)\nonumber 
\end{align}
in which, $\otimes$ is the Kronecker product.

For an unbiased estimator $\hat{\bm{\psi}}$, the \ac{mse} is lower
bounded by $\mathbb{E}\{\|\hat{\bm{\psi}}-\bm{\psi}\|^{2}\}\geq\mathrm{tr}\{\mathbf{F}_{T,\psi}^{-1}\}$,
where $\mathrm{tr}\{\mathbf{F}_{T,\psi}^{-1}\}$ is the \ac{crlb}
of estimating $\bm{\psi}=(\mathbf{x},\mathbf{v})$. Similarly, we
define the \acpl{fim} $\mathbf{F}_{T,x}=\mathbb{E}\left\{ -\nabla_{\mathbf{x}\mathbf{x}}^{2}f(\bm{\psi})\right\} $
and $\mathbf{F}_{T,v}=\mathbb{E}\left\{ -\nabla_{\mathbf{v}\mathbf{v}}^{2}f(\bm{\psi})\right\} $,
which are the diagonal blocks of $\mathbf{F}_{T,\psi}$ and are associated
with the \ac{crlb} $B(\mathbf{x})=\mathrm{tr}\{\mathbf{F}_{T,x}^{-1}\}$
and \ac{crlb} $B(\mathbf{v})=\mathrm{tr}\{\mathbf{F}_{T,v}^{-1}\}$
for the parameters $\mathbf{x}$ and $\mathbf{v}$, respectively.

\subsubsection{\ac{ap} Deployed in a Limited Region}

We investigate the case where the \acpl{ap} are deployed in a limited
region, but the measurement trajectory is allowed to go unbounded
as $T$ goes to infinity. Signals can always be collected by the \acpl{ap}
regardless of the distance. As a result, an infinite amount of measurements
can be collected as $T\to\infty$.

It is observed that $\mathbf{F}_{T,\psi}\prec\mathbf{F}_{T+1,\psi}$,
indicating that the Fisher information is strictly increasing from
Lemma 4 in Appendix B, provided that $\bm{l}_{q}(\mathbf{x})$ and
$\mathbf{v}$ are linear independent for at least one $q$.

However, it is somewhat surprising that the \ac{crlb} for $\mathbf{x}_{}$
and $\mathbf{v}$ does not decrease to zero as $T\to\infty$, despite
the infinitely increasing amount of independent data.

Specifically, assume that the trajectory $\mathbf{x}_{t}$ does not
pass any of the \ac{ap} location $\mathbf{o}_{q}$, and hence, $d_{\min}=\min_{t,q}\{d_{t,q}(\mathbf{x},\mathbf{v})\}>0$
for all $t,q$.

\begin{myprop}\label{prop:LB-F-x}The \ac{crlb} of $\mathbf{x}$
satisfies $B(\mathbf{x})=\mathrm{tr}\{\mathbf{F}_{T,x}^{-1}\}\geq\bar{\Delta}_{T,x}$
with equality achieved when $d_{t,q}=d_{\min}$ and $\phi_{t,q}=0$
for all $t,q$. In addition, $\bar{\Delta}_{T,x}$ is strictly decreasing
in $T$, provided that at least two vectors in $\{\bm{l}_{1},\bm{l}_{2},\dots,\bm{l}_{Q},\mathbf{v}\}$
are linear independent, but $\bar{\Delta}_{T,x}$ converges to a strictly
positive number as $T\to\infty$.\end{myprop}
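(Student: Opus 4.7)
My plan has four pieces: extract the rank-one structure of $\mathbf{F}_{T,x}$, apply a $2\times 2$ matrix inequality to manufacture the lower bound $\bar{\Delta}_{T,x}$, verify strict monotonicity by inspecting one-slot increments, and finally show that $\bar{\Delta}_{T,x}$ does not decay to zero.

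For the first two pieces, I would start by simplifying~(\ref{eq:F-Txtil}). In the plane, the tensor $\|\mathbf{u}\|^{2}\mathbf{I}-\mathbf{u}\mathbf{u}^{\mathrm{T}}$ equals $(\mathbf{J}\mathbf{u})(\mathbf{J}\mathbf{u})^{\mathrm{T}}$ with $\mathbf{J}$ the $90^{\circ}$ rotation, so the diagonal $\mathbf{x}$-block of $\mathbf{F}_{T,\psi}$ collapses to a weighted sum of rank-one outer products $\sigma_{\theta}^{-2}d_{t,q}^{-2}\mathbf{u}_{t,q}^{\perp}(\mathbf{u}_{t,q}^{\perp})^{\mathrm{T}}$, where $\mathbf{u}_{t,q}^{\perp}$ is the unit vector perpendicular to $\bm{l}_{q}(\mathbf{x})+t\mathbf{v}$. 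Because the resulting matrix is only $2\times 2$, I would then invoke $\mathrm{tr}(M^{-1})=\mathrm{tr}(M)/\det(M)$ combined with the AM--GM bound $\det(M)\leq(\mathrm{tr}(M)/2)^{2}$ to derive $B(\mathbf{x})\geq 4/\mathrm{tr}(\mathbf{F}_{T,x})$, and finally apply $d_{t,q}\geq d_{\min}$ to arrive at the explicit scalar form of $\bar{\Delta}_{T,x}$. The tightness conditions on both inequalities simultaneously force $d_{t,q}=d_{\min}$ and an angular configuration that makes $\mathbf{F}_{T,x}$ isotropic, matching the statement's condition $\phi_{t,q}=0$.

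Monotonicity is then essentially bookkeeping: going from $T$ to $T+1$ strictly increases the denominator $\sum_{t,q}d_{t,q}^{-2}$ that enters $\bar{\Delta}_{T,x}$. The linear-independence assumption on $\{\bm{l}_{1},\dots,\bm{l}_{Q},\mathbf{v}\}$ enters here only to rule out the degenerate case in which all vectors $\bm{l}_{q}+t\mathbf{v}$ are collinear, which would otherwise leave $\mathbf{F}_{T,x}$ rank one for every $T$ and render the bound vacuous.

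The main obstacle, and the most interesting step, is the non-vanishing limit. Expanding $d_{t,q}^{2}=\|\bm{l}_{q}(\mathbf{x})\|^{2}+2t\,\bm{l}_{q}(\mathbf{x})^{\mathrm{T}}\mathbf{v}+t^{2}\|\mathbf{v}\|^{2}$ shows $d_{t,q}\sim t\|\mathbf{v}\|$ for large $t$, so $\sum_{t=1}^{\infty}d_{t,q}^{-2}$ converges by comparison with $\sum_{t}t^{-2}$. Consequently $\mathrm{tr}(\mathbf{F}_{T,x})$ remains uniformly bounded in $T$, and $\bar{\Delta}_{T,x}$ decreases monotonically to a strictly positive limit. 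Geometrically, once the user has moved well past the AP cluster, successive \ac{aod} samples are almost co-directional and produce essentially no new information about the starting position $\mathbf{x}$, so the infinite measurement budget saturates at a finite Fisher information and the CRLB cannot be driven below a nontrivial floor.
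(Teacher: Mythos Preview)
Your trace/AM--GM route is a legitimate way to manufacture \emph{a} lower bound on $\mathrm{tr}\{\mathbf{F}_{T,x}^{-1}\}$ that is monotone and bounded away from zero, but it is not the paper's $\bar{\Delta}_{T,x}$, and more importantly it cannot reproduce the stated equality conditions. In the paper, $\bar{\Delta}_{T,x}\triangleq\mathrm{tr}\{(C_{0}\mathbf{A}_{T,x})^{-1}\}$ arises from a Loewner bound $\mathbf{F}_{T,x}\preceq C_{0}\mathbf{A}_{T,x}$, where the scalar $C_{0}$ is obtained by lower-bounding the per-snapshot angle variance $\sigma_{\theta}^{2}$ via the single-snapshot ULA CRLB (Lemma~\ref{prop:CRLB-theta}). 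That lemma contributes the factor $d_{t,q}^{2}/\cos^{2}\phi_{t,q}$, and it is precisely the $\cos^{2}\phi_{t,q}\leq 1$ step that makes ``$\phi_{t,q}=0$ for all $t,q$'' an equality condition: it means the user is at array broadside, so each snapshot achieves its best angular accuracy. Your argument never invokes this lemma, so $\phi_{t,q}$ never enters, and your equality case (isotropy of $\mathbf{F}_{T,x}$ from AM--GM) is a completely different geometric condition that has nothing to do with broadside alignment.

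There is also an internal inconsistency in how you use $d_{\min}$. If you substitute $d_{t,q}\to d_{\min}$ into $\mathrm{tr}(\mathbf{F}_{T,x})=\sigma_{\theta}^{-2}\sum_{t,q}d_{t,q}^{-2}$ to get a ``scalar form,'' the resulting bound is $4\sigma_{\theta}^{2}d_{\min}^{2}/(TQ)$, which tends to zero and contradicts your later claim of a positive limit. If instead you keep $\sum_{t,q}d_{t,q}^{-2}$ intact (as your convergence argument requires), then $d_{\min}$ plays no role at all in your bound, and again the equality clause is unaccounted for. The paper resolves this by having $d_{\min}$ enter only through the constant $C_{0}$ in front of the full matrix $\mathbf{A}_{T,x}$, and then invoking boundedness of $s_{T,q}^{(n)}$ for $n<3$ (Lemma~\ref{lem:A-Tx-lam}) together with the strict Loewner increase $\mathbf{A}_{T,x}\prec\mathbf{A}_{T+1,x}$ under the linear-independence hypothesis (Lemma~\ref{lem:PSD}). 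Your convergence-of-$\sum t^{-2}$ observation is the right intuition for the latter lemma, but the linear-independence condition is needed for strict positive-definiteness of the increment, not merely to avoid a ``vacuous'' bound as you suggest.
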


\begin{proof}See Appendix \ref{sec:Proof-of-limited-x}.\end{proof}

Proposition \ref{prop:LB-F-x} suggests that the \ac{crlb} of $\mathbf{x}$
cannot decrease to zero even when we estimate only two parameters
for the initial location $\mathbf{x}\in\mathbb{R}^{2}$ based on {\em infinite}
\ac{aod} measurements collected over an infinite geographical horizon
as $T\to\infty$.

Through the development of the proof, a physical interpretation of
Proposition \ref{prop:LB-F-x} can be given as follows. As $T$ increases,
the distances $d_{t,q}(\mathbf{x},\mathbf{v})=\|\mathbf{x}_{t}-\mathbf{o}_{q}\|_{2}$
grow larger because the user moves away from the \acpl{ap}. For a
position $\mathbf{x}_{t}$ at a sufficiently large distance, the term
$\mathbf{x}_{t}-\mathbf{o}_{q}$ approximates $\mathbf{x}_{t}$ since
$\|\mathbf{x}_{t}\|\gg\|\mathbf{o}_{q}\|$. Consequently{,
the angle measurement $\phi(\mathbf{x}_{t},\mathbf{o}_{q})$ changes
very little as $t\to\infty$, making successive }\ac{aod}{
observations almost indistinguishable regardless of the }\ac{ap}{
locations $\mathbf{o}_{q}$. Although the Fisher information matrix
$\mathbf{F}_{T,x}$ increases with $T$, the increment $\mathbf{F}_{T+1,x}-\mathbf{F}_{T,x}$
decays rapidly. Hence, while the CRLB decreases monotonically, it
converges to a strictly positive lower bound.}

We obtain a similar conclusion for estimating the velocity variable
$\mathbf{v}$.

\begin{myprop}\label{prop:LB-F-v}The \ac{crlb} of $\mathbf{v}$
satisfies $B(\mathbf{v})=\mathrm{tr}\{\mathbf{F}_{T,v}^{-1}\}\geq\bar{\Delta}_{T,v}$,
with equality achieved when $d_{t,q}=d_{\min}$ and $\phi_{t,q}=0$
for all $t,q$. In addition, 
\begin{align*}
\bar{\Delta}_{T,v} & \rightarrow C_{v}=\frac{d_{\mathrm{min}}^{2}\sigma_{\mathrm{n}}^{2}}{G_{1}N_{\mathrm{t}}(N_{\mathrm{t}}^{2}-1)}\Big(\sum_{q=1}^{Q}s_{\infty,q}^{(2)}\|\mathbf{P}_{v}^{\bot}\bm{l}_{q}(\mathbf{x})\|^{2}\Big)^{-1}
\end{align*}
 as $T\rightarrow\infty$, where $G_{1}$ depends on the antenna configuration,
$\sigma_{\mathrm{n}}^{2}$ is the signal noise, and
\[
s_{\infty,q}^{(2)}=\lim_{T\to\infty}\sum_{t=1}^{T}\frac{t^{2}}{d_{t,q}^{4}(\mathbf{x},\mathbf{v})},\:\mathbf{P}_{v}^{\bot}=\mathbf{I}-\mathbf{v}\mathbf{v}^{T}/\|\mathbf{v}\|^{2}
\]
in which, the parameter $s_{\infty,q}^{(2)}$ is upper bounded by
$1/\rho^{4}\lim_{T\to\infty}\sum_{t=1}^{T}1/t^{2}\approx\frac{\pi^{2}}{6\rho^{4}}$,
where $\rho>0$ is sufficiently small such that $d_{t,q}(\mathbf{x},\mathbf{v})>\rho t$
for all $t\geq1$.\end{myprop}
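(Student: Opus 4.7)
The plan is to lower bound $\mathrm{tr}\{\mathbf{F}_{T,v}^{-1}\}$ by projecting onto the direction in which the Fisher information for $\mathbf{v}$ accumulates most slowly, which turns out to be $\mathbf{v}$ itself. Intuitively, consistent with the discussion following Proposition~\ref{prop:LB-F-x}, a perturbation of the starting state along $\mathbf{v}$ becomes almost indistinguishable from a shift of the time index once the user has drifted far from the APs. Concretely, for any positive definite $\mathbf{F}$ and any unit vector $\mathbf{e}$, extending $\mathbf{e}$ to an orthonormal basis gives $\mathrm{tr}\{\mathbf{F}^{-1}\}\geq\mathbf{e}^{\mathrm{T}}\mathbf{F}^{-1}\mathbf{e}$, and the Cauchy--Schwarz inequality yields $\mathbf{e}^{\mathrm{T}}\mathbf{F}^{-1}\mathbf{e}\geq 1/(\mathbf{e}^{\mathrm{T}}\mathbf{F}\mathbf{e})$. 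Taking $\mathbf{e}=\mathbf{v}/\|\mathbf{v}\|$ therefore reduces the problem to an upper bound on $\mathbf{v}^{\mathrm{T}}\mathbf{F}_{T,v}\mathbf{v}/\|\mathbf{v}\|^{2}$.

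The second step is a short geometric simplification. With $\mathbf{u}_{t,q}=\bm{l}_{q}(\mathbf{x})+t\mathbf{v}$, the lower-right block of \eqref{eq:F-Txtil} produces $\mathbf{v}^{\mathrm{T}}(\|\mathbf{u}_{t,q}\|^{2}\mathbf{I}-\mathbf{u}_{t,q}\mathbf{u}_{t,q}^{\mathrm{T}})\mathbf{v}=\|\mathbf{v}\times\mathbf{u}_{t,q}\|^{2}$, and the bilinearity of the cross product together with $\mathbf{v}\times(t\mathbf{v})=\mathbf{0}$ collapses this to $\|\mathbf{v}\|^{2}\|\mathbf{P}_{v}^{\bot}\bm{l}_{q}(\mathbf{x})\|^{2}$, independently of $t$. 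Applying $d_{t,q}^{2}\geq d_{\min}^{2}$ to peel one power off the denominator then gives
\begin{align*}
\frac{\mathbf{v}^{\mathrm{T}}\mathbf{F}_{T,v}\mathbf{v}}{\|\mathbf{v}\|^{2}} &= \frac{1}{\sigma_{\theta}^{2}}\sum_{t,q}\frac{t^{2}\,\|\mathbf{P}_{v}^{\bot}\bm{l}_{q}(\mathbf{x})\|^{2}}{d_{t,q}^{6}} \\
&\leq \frac{1}{\sigma_{\theta}^{2}d_{\min}^{2}}\sum_{q=1}^{Q}\|\mathbf{P}_{v}^{\bot}\bm{l}_{q}(\mathbf{x})\|^{2}\sum_{t=1}^{T}\frac{t^{2}}{d_{t,q}^{4}}.
\end{align*}
Inverting and substituting the per-measurement MIMO--OFDM AoD CRLB $\sigma_{\theta}^{2}=\sigma_{\mathrm{n}}^{2}/[G_{1}N_{\mathrm{t}}(N_{\mathrm{t}}^{2}-1)]$ produces $\bar{\Delta}_{T,v}$ in the claimed form.

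Finally, I would pass to the limit $T\to\infty$. For each fixed $q$, the reverse triangle inequality $d_{t,q}\geq t\|\mathbf{v}\|-\|\bm{l}_{q}(\mathbf{x})\|$ guarantees some $\rho>0$ with $d_{t,q}\geq\rho t$ for every $t\geq 1$, so the summand is dominated by $1/(\rho^{4}t^{2})$ and the series converges absolutely with $s_{\infty,q}^{(2)}\leq\pi^{2}/(6\rho^{4})$. Monotone convergence then yields $\bar{\Delta}_{T,v}\to C_{v}>0$. The main obstacle I anticipate is not the asymptotic limit but the equality claim at finite $T$: the Cauchy--Schwarz step above is tight only when $\mathbf{v}$ is an eigenvector of $\mathbf{F}_{T,v}$, which is a non-generic condition. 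For a rigorous verification of equality under $d_{t,q}=d_{\min}$ and $\phi_{t,q}=0$, I would fall back on the exact $2\times 2$ identity $\mathrm{tr}\{\mathbf{F}_{T,v}^{-1}\}=\mathrm{tr}(\mathbf{F}_{T,v})/\det(\mathbf{F}_{T,v})$, whose trace and determinant both admit closed-form simplifications when every $\mathbf{u}_{t,q}$ makes the same geometric angle with the trajectory and every $d_{t,q}$ collapses to $d_{\min}$.
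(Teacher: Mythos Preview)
Your plan is correct and tracks the paper's proof closely: both routes upper-bound $\mathbf{F}_{T,v}$ by replacing $1/\sigma_{\theta}^{2}$ with the constant $C_{0}=G_{1}N_{\mathrm{t}}(N_{\mathrm{t}}^{2}-1)/(d_{\min}^{2}\sigma_{\mathrm{n}}^{2})$ from the single-snapshot AoD CRLB, and both isolate $\mathbf{v}/\|\mathbf{v}\|$ as the information-poor direction. The paper reaches the latter through an asymptotic eigenvalue argument---the $s_{T,q}^{(4)}(\|\mathbf{v}\|^{2}\mathbf{I}-\mathbf{v}\mathbf{v}^{\mathrm{T}})$ term dominates $\mathbf{A}_{T,v}$, forcing the small eigenvector toward $\mathbf{v}/\|\mathbf{v}\|$---and then sets $\bar{\Delta}_{T,v}=C_{0}^{-1}\lambda_{\min}^{-1}(\mathbf{A}_{T,v})$. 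Your Rayleigh-quotient shortcut $\mathrm{tr}\{\mathbf{F}^{-1}\}\geq 1/(\mathbf{e}^{\mathrm{T}}\mathbf{F}\mathbf{e})$ is cleaner and delivers the closed form $\sum_{q}s_{T,q}^{(2)}\|\mathbf{P}_{v}^{\bot}\bm{l}_{q}\|^{2}$ exactly for every $T$; the cost is that your finite-$T$ lower bound need not coincide with the paper's $\bar{\Delta}_{T,v}$, though the two agree in the limit. One bookkeeping slip to fix: in the model of \eqref{eq:theta-model} and \eqref{eq:F-Txtil}, $\sigma_{\theta}^{2}$ is a constant, so your displayed equality should carry $d_{t,q}^{4}$, not $d_{t,q}^{6}$, and no ``peeling'' step is needed---the factor $d_{\min}^{2}$ enters only when you invoke $\sigma_{\theta}^{2}\geq d_{\min}^{2}\sigma_{\mathrm{n}}^{2}/[G_{1}N_{\mathrm{t}}(N_{\mathrm{t}}^{2}-1)]$. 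On your equality worry: the paper's stated condition ($d_{t,q}=d_{\min}$, $\phi_{t,q}=0$) refers to this $\sigma_{\theta}^{2}$ bound becoming tight, not to the Rayleigh step, and the paper itself also discards a $\lambda_{\max}^{-1}$ term that only vanishes asymptotically---so neither argument is exactly tight at finite $T$, and your $\mathrm{tr}/\det$ fallback is unnecessary.
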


\begin{proof}See Appendix \ref{sec:Proof-of-limited-v}.\end{proof}

Propositions \ref{prop:LB-F-x} and \ref{prop:LB-F-v} suggest that,
under a finite number of \acpl{ap} in a limited region, one cannot
perfectly recover a trajectory even for a simple constant speed rectilinear
mobility under infinite measurements.

Proposition \ref{prop:LB-F-v} quantifies the fundamental limit to
the estimation accuracy that is affected by the spatial distribution
of the \acpl{ap} and the nature of the \ac{aod} measurements. Specifically,
{the term $\|\mathbf{P}_{v}^{\perp}\,\boldsymbol{l}_{q}(\mathbf{x})\|$
is the length of the $q$th }\ac{ap}{ position
vector component orthogonal to the velocity vector $\mathbf{v}$.
If all }\acpl{ap}{ lie nearly collinear
with $\mathbf{v}$, then $\mathbf{P}_{v}^{\perp}\,\boldsymbol{l}_{q}\approx\mathbf{0}$
and the sum $\sum_{q}\|\mathbf{P}_{v}^{\perp}\,\boldsymbol{l}_{q}\|^{2}$
is small, providing little information in directions orthogonal to
motion. By contrast, a wider angular spread of }\acpl{ap}{
(}\acpl{ap}{ surrounding the agent or spanning
different angles relative to $\mathbf{v}$) increases $\sum_{q}\|\mathbf{P}_{v}^{\perp}\,\boldsymbol{l}_{q}\|^{2}$,
improving geometric conditioning and reducing the CRLB. The equality
case $\phi_{t,q}=0$ corresponds to each }\ac{ap}{
being \textit{broadside} to $\mathbf{v}$, which maximizes information
gain. Limited region deployments exhibit geometric dilution of precision,
causing a nonzero lower bound on estimation error. Moreover, the lower
bound $C_{v}$ decreases with higher effective }\ac{snr}{
($G_{1}/\sigma_{\mathrm{n}}^{2}$) and scales as $\mathcal{O}(1/N_{t}^{3})$
with increasing $N_{t}$, as larger ULA apertures improve angular
resolution and array gain---practically reducing the }\ac{crlb}{
by a factor of eight when doubling the number of transmit antennas.}

\subsubsection{\ac{ap} Deployed in an Unlimited Region}

We now consider a theoretical scenario in which the \acpl{ap}
are distributed over an unbounded region according to a homogeneous
\ac{ppp} with density $\kappa$. Despite
the infinite spatial domain, the user is restricted to connecting
only with those \acpl{ap} located within
a fixed connectivity radius $R$. Consequently, the number of connected
\acpl{ap} at any time remains finite,
with the average number per time slot given by $\bar{Q}=\kappa\pi R^{2}$.
We are interested in the asymptotic behaviour of the \acpl{crlb},
as $T\to\infty$. This is to understand the error decrease rate, i.e.,
how fast the error may decrease as we increase the number of observations
$T$.

Remarkably, even though the set of active \acpl{ap}
at any instant is always limited, the estimation lower bound for user
state parameters vanishes in the limit of long observation duration.

\begin{mythm}\label{thm:LB-F-xv-un}Assume that the minimum distance
to the nearest \ac{ap} is greater than $r_{0}$ along the trajectory.\footnote{In practice, the parameter $r_0$ can be understood as the height of the  antenna. More rigorously, we should employ a 3D model to compute the distance $d_{t,q}$, but the asymptotic result would be the same.}
The \ac{crlb} of $\mathbf{x}$ satisfies $B(\mathbf{x})=\mathrm{tr}\{\mathbf{F}_{T,x}^{-1}\}\leq\tilde{\Delta}_{T,x}$
and as $T\to\infty$
\[
T\tilde{\Delta}_{T,x}\rightarrow\frac{16\sigma_{\mathrm{n}}^{2}}{\kappa\pi(r_{0}^{-2}-R^{-2})G_{1}N_{\mathrm{t}}(N_{\mathrm{t}}^{2}-1)}.
\]
The \ac{crlb} of $\mathbf{v}$ satisfies $B(\mathbf{v})=\mathrm{tr}\{\mathbf{F}_{T,v}^{-1}\}\leq\tilde{\Delta}_{T,v}$
and as $T\to\infty$
\[
T(T+1)(2T+1)\tilde{\Delta}_{T,v}\rightarrow\frac{96\sigma_{\mathrm{n}}^{2}}{\kappa\pi(r_{0}^{-2}-R^{-2})G_{1}N_{\mathrm{t}}(N_{\mathrm{t}}^{2}-1)}.
\]
\end{mythm}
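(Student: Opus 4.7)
The proof combines three ingredients: Campbell's formula to replace the random sum over PPP points by a deterministic integral, polar integration exploiting rotational isotropy of each FIM summand, and the polynomial accumulation $\sum_{t=1}^{T} t^{k}$ that governs the $T$-scaling. The $\mathcal{O}(1/T)$ rate for $B(\mathbf{x})$ and the $\mathcal{O}(1/T^{3})$ rate for $B(\mathbf{v})$ will follow from the $k=0$ and $k=2$ cases, while the explicit constants emerge from the polar integrals.

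\textbf{Per-slot expected FIM.} The sensible model of $\sigma_{\theta}^{2}$ is the angular CRLB of a ULA whose SNR decays as $1/d^{2}$, namely $\sigma_{\theta}^{2}=d^{2}\sigma_{\mathrm n}^{2}/[G_{1}N_{\mathrm t}(N_{\mathrm t}^{2}-1)]$. Substituting into \eqref{eq:F-Txtil} cancels one factor of $d^{2}$ and recasts each per-AP summand as the scaled orthogonal projector $\frac{G_{1}N_{\mathrm t}(N_{\mathrm t}^{2}-1)}{\sigma_{\mathrm n}^{2}\,d_{t,q}^{2}}\mathbf{P}^{\perp}_{\mathbf{x}_{t}-\mathbf{o}_{q}}$. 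Applying Campbell's theorem to the homogeneous PPP on the visible annulus and passing to polar coordinates centered at $\mathbf{x}_{t}$, the integrand separates: $\int_{0}^{2\pi}\mathbf{P}^{\perp}_{\theta}\,d\theta=\pi\mathbf{I}$ by isotropy in $\mathbb{R}^{2}$, and $\int_{r_{0}}^{R}r^{-3}\,dr=(r_{0}^{-2}-R^{-2})/2$. Thus $\mathbb{E}\bigl[\sum_{q}\mathbf{M}_{t,q}\bigr]=\alpha\mathbf{I}$ with $\alpha=\kappa\pi G_{1}N_{\mathrm t}(N_{\mathrm t}^{2}-1)(r_{0}^{-2}-R^{-2})/(2\sigma_{\mathrm n}^{2})$, independent of $t$ by stationarity of the PPP.

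\textbf{Time assembly and Schur complements.} Summing with weights $1,t,t^{2}$ gives $\mathbb{E}[\mathbf{F}_{T,x}]=T\alpha\mathbf{I}$, $\mathbb{E}[\mathbf{F}_{T,xv}]\approx\tfrac{T^{2}}{2}\alpha\mathbf{I}$, and $\mathbb{E}[\mathbf{F}_{T,v}]\approx\tfrac{T^{3}}{3}\alpha\mathbf{I}$. The CRLB for $\mathbf{x}$ with $\mathbf{v}$ as a nuisance equals the trace of the inverse of the Schur complement $\mathbf{F}_{T,x}-\mathbf{F}_{T,xv}\mathbf{F}_{T,v}^{-1}\mathbf{F}_{T,xv}^{\mathrm T}$, which upper-bounds $B(\mathbf{x})=\mathrm{tr}(\mathbf{F}_{T,x}^{-1})$ by the PSD ordering; I take this Schur trace as the definition of $\tilde{\Delta}_{T,x}$. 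Because every block is a scalar multiple of $\mathbf{I}$, the Schur algebra collapses to the scalar identities $1-\tfrac{3}{4}=\tfrac{1}{4}$ and $\tfrac{1}{3}-\tfrac{1}{4}=\tfrac{1}{12}$, so asymptotically $\text{Schur}_{x}=\tfrac{T\alpha}{4}\mathbf{I}$ and $\text{Schur}_{v}=\tfrac{T^{3}\alpha}{12}\mathbf{I}$, whose traces of inverses are $8/(T\alpha)$ and $24/(T^{3}\alpha)$. Substituting $\alpha$ and using $T(T+1)(2T+1)\to 2T^{3}$ reproduces exactly the prescribed limits $16\sigma_{\mathrm n}^{2}/[\kappa\pi(r_{0}^{-2}-R^{-2})G_{1}N_{\mathrm t}(N_{\mathrm t}^{2}-1)]$ and $96\sigma_{\mathrm n}^{2}/[\kappa\pi(r_{0}^{-2}-R^{-2})G_{1}N_{\mathrm t}(N_{\mathrm t}^{2}-1)]$.

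\textbf{Main obstacle.} The principal difficulty is that the FIM blocks are random (depending on the PPP realization), while $\mathrm{tr}(\cdot^{-1})$ is a nonlinear functional, so an expectation-only argument does not immediately yield a deterministic bound on $B(\mathbf{x})$. I would close the gap via spatial ergodicity: since the trajectory sweeps a strip of length $\|\mathbf{v}\|T\to\infty$, partitioning it into $\Theta(T)$ disjoint sub-strips of length $\gtrsim R$ makes the per-strip FIM contributions approximately independent, and a matrix Bernstein inequality delivers $\mathbf{F}_{T,x}\succeq(1-o(1))\,T\alpha\mathbf{I}$ (and analogously for $\mathbf{F}_{T,v}$) with probability tending to one. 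The bounds $\tilde{\Delta}_{T,x}$ and $\tilde{\Delta}_{T,v}$ are then defined through the traces of the inverses of these high-probability lower bounds on the corresponding Schur complements, after which the asymptotic computation of the previous paragraph yields the stated limits.
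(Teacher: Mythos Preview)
Your route differs from the paper's in two structural ways, and one of them creates a real gap under the paper's model.

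\textbf{Where you diverge.} The paper keeps the $\cos^{2}\phi$ geometry factor from the ULA angular CRLB (its Lemma~3), so the expected per–slot information is \emph{not} a scalar multiple of $\mathbf{I}$. It therefore does not exploit isotropy; instead it identifies the asymptotic minimum eigenvector of the expected $\mathbf{F}_{T,x}$ (which is $\mathbf{v}/\|\mathbf{v}\|$, because the $t^{2}$ term dominates in the $\mathbf{v}_{\perp}$ direction), evaluates that eigenvalue via the integral $\kappa\!\int x^{2}y^{2}/(x^{2}+y^{2})^{4}\,dx\,dy=\tfrac{\kappa\pi}{8}(r_{0}^{-2}-R^{-2})$, and then bounds $\mathrm{tr}(\mathbf{F}_{T,x}^{-1})\le 2\lambda_{\min}^{-1}$, \emph{defining} $\tilde{\Delta}_{T,x}$ as that quantity. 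You instead drop $\cos^{2}\phi$, obtain $\alpha\mathbf{I}$ by rotational symmetry, and use the Schur complement of the full $4\times4$ FIM as your $\tilde{\Delta}_{T,x}$. The constants happen to agree because your ``Schur factor'' $1/4$ numerically matches the paper's extra $\cos^{2}\theta\sin^{2}\theta$ averaged over $[0,2\pi)$, but the mechanisms are unrelated.

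\textbf{The gap.} Under the paper's ULA model the per-AP information really is $\propto\cos^{2}\phi_{t,q}/d_{t,q}^{4}\,\mathbf{P}^{\perp}$, so your isotropy step is false and your $\alpha\mathbf{I}$ \emph{over}-estimates the expected block $\mathbf{F}_{T,x}$. Consequently your Schur-based $\tilde{\Delta}_{T,x}$ is not guaranteed to upper-bound $B(\mathbf{x})=\mathrm{tr}(\mathbf{F}_{T,x}^{-1})$; the inequality $\mathrm{tr}(\mathrm{Schur}^{-1})\ge\mathrm{tr}(\mathbf{F}_{T,x}^{-1})$ only holds when both sides use the \emph{same} $\mathbf{F}$. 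To salvage your argument you would need either to carry $\cos^{2}\phi$ through the Campbell integral (losing the clean $\alpha\mathbf{I}$) or to lower-bound $\cos^{2}\phi$ away from zero, neither of which you do. Two smaller points: your per-AP scaling should read $\propto 1/d_{t,q}^{4}$, not $1/d_{t,q}^{2}$ (your later $\int r^{-3}dr$ is consistent with the correct power); and the ergodicity/matrix-Bernstein detour is unnecessary, since the paper already takes the FIM to be the PPP expectation (its eq.~(\ref{eq:ineq-F-tx})), so $\mathbf{F}_{T,x}$ is deterministic by definition.
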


\begin{proof}See Appendix \ref{sec:Proof-of-unlimited-x}.\end{proof}

From the above theorem, it is evident that the \ac{crlb}
for the initial position $\mathbf{x}$ decays as $\mathcal{O}(1/T)$,
while the \ac{crlb} for the velocity
$\mathbf{v}$ decreases at the faster rate of $\mathcal{O}(1/T^{3})$.
This demonstrates that velocity estimation becomes asymptotically
much more accurate than position estimation as the observation window
grows. Furthermore, both a larger connectivity radius $R$ and a higher
\ac{ap} density $\kappa$ within the
coverage region enhance the achievable estimation accuracy. Additionally,
$\bar{\Delta}_{T,v}$ exhibits a scaling relationship with the number
of antennas $N_{\mathrm{t}}$ as $\mathcal{O}(1/N_{\mathrm{t}}^{3})$,
indicating that increasing the antenna count at each \ac{ap}
provides substantial improvements in velocity estimation precision.

It is worth highlighting that the results do {\em not} assume LOS
or NLOS conditions. Specifically, regardless of a possibly large angular
variance $\sigma_{\theta}^{2}$ in NLOS, we can achieve arbitrarily
high accuracy in estimating the parameters of the trajectory given
a sufficient number of independent measurements. Moreover, a larger
$G_{1}/\sigma_{\mathrm{n}}^{2}$, which results from richer array
geometries or higher \ac{snr}, accelerates
the rate at which the \ac{crlb} for both
$\mathbf{x}$ and $\mathbf{v}$ decreases as $T$ increases.

\section{Algorithm Design}

\label{sec:Problem-Alg}

The previous section delivers two important messages: First, from
Theorem \ref{thm:LB-F-xv-un}, accumulating enough \ac{aod} measurements
can recover the trajectory even when the \ac{aod} is highly noisy
under NLOS. While the result was developed under an artificial model
for a simplified mobility, we can expect to recover at least a partial
trajectory given sufficient data. Second, from Theorem \ref{thm:Spatial-Continuity},
the pairwise \ac{csi} distance is consistent with the physical distance
of the corresponding positions in a small neighborhood under NLOS.
As a result, we can regulate the estimated trajectory by relating
the \ac{csi} distance with the physical distance, hoping to enhance
the estimation in the \ac{nlos} area. As indicated by Theorem \ref{thm:Spatial-Continuity},
we also need to identify the propagation conditions and only pair
measurements in the same propagation condition. Finally, we can exploit
the mobility model \eqref{eq:discretized-mobility-model} and the
Bayesian formulation \eqref{eq:p_Yt_Xt2} so that the estimated trajectory
in the \ac{nlos} area also benefit from the information collected
in the LOS area via the Bayesian chain rule.

In this section, we develop an algorithm framework to incorporate
all these design philosophy using Bayesian approaches.

\subsection{Feature Engineering}

\subsubsection{RSS Feature for LOS/NLOS Discrimination}

While we try not to rely on \ac{rss} for location signature, \ac{rss}
is still a good indicator for \ac{los} and \ac{nlos} discrimination
especially around the boundary of the two propagation regions. The
\ac{rss} $s_{t,q}$ at the $q$th \ac{ap} at time slot $t$ is simply
extracted as the channel power in logarithm scale:
\begin{equation}
s_{t,q}=10\log_{10}\|\mathbf{H}_{t,q}\|_{\mathrm{F}}^{2}.\label{eq:H-s}
\end{equation}

The \ac{rss} measurements are to be fitted to a conditional path
loss model that accounts for location and \ac{ap} dependent propagation
conditions: 
\begin{equation}
s_{t,q}=\beta_{q}^{(k)}+\alpha_{q}^{(k)}\log_{10}d(\mathbf{x}_{t},\mathbf{o}_{q})+\xi_{q}^{(k)},\quad k\in\{0,1\}\label{eq:rss-conditional-Gaussian}
\end{equation}
where $k$ indicates the propagation condition, with $k=0$ for \ac{los}
and $k=1$ for \ac{nlos}. Here, $\beta_{q}^{(k)}$ represents the
AP-dependent reference path loss, $\alpha_{q}^{(k)}$ is the AP-dependent
path loss exponent; and $d(\mathbf{x}_{t},\mathbf{o}_{q})=\|\mathbf{o}_{q}-\mathbf{x}_{t}\|_{2}$
denotes the Euclidean distance between the \ac{ap} at $\mathbf{o}_{q}$
and the mobile user at $\mathbf{x}_{t}$. The term $\xi_{q}^{(k)}\sim\mathcal{N}(0,\sigma_{s,q,k}^{2})$
is to model the randomness due to multipath fading, body shadowing,
and antenna pattern. The parameters $\alpha_{q}^{(k)}$, $\beta_{q}^{(k)}$,
$\sigma_{s,q,k}^{2}$ are to be jointly estimated from the data.

\subsubsection{AoD Feature}

We extract the \ac{aod} of the \emph{dominant} path from the \ac{mimo}-\ac{ofdm}
channel $\mathbf{H}_{t,q}$ using a subspace approach just as the
MUSIC algorithm \cite{ZhaQT:J95}. Specifically, perform eigen-decomposition
of the sample covariance matrix
\[
\mathbf{R}_{t,q}=\frac{1}{M}\mathbf{H}_{t,q}\mathbf{H}_{t,q}^{\mathrm{H}}
\]
and obtain the eigenvectors $\tilde{\mathbf{u}}_{t,q}^{(1)},\tilde{\mathbf{u}}_{t,q}^{(2)},\dots,\tilde{\mathbf{u}}_{t,q}^{(N_{\text{t}})}$
arranged in a decreasing order of the corresponding eigenvalues. Construct
the noise subspace matrix 
\[
\mathbf{U}_{t,q}=\left[\tilde{\mathbf{u}}_{t,q}^{(2)},\tilde{\mathbf{u}}_{t,q}^{(3)},\dots,\tilde{\mathbf{u}}_{t,q}^{(N_{t})}\right]\in\mathbb{C}^{N_{t}\times(N_{t}-1)}
\]
by skipping the dominant eigenvector $\tilde{\mathbf{u}}_{t,q}^{(1)}$.

Using the same principle of MUSIC algorithm, the relative \ac{aod}
$\varphi_{t,q}$ of the dominant path for position $\mathbf{x}_{t}$
\ac{wrt} the reference direction $\phi_{q}$ of the $q$th \ac{ap}
can be obtained by maximizing the following pseudo-spectrum 
\[
\hat{\varphi}_{t,q}=\underset{\varphi\in(-\pi/2,\pi/2)}{\mathrm{argmax}}\frac{1}{\mathbf{a}^{\mathrm{H}}(\varphi)\mathbf{U}_{t,q}\mathbf{U}_{t,q}^{\mathrm{H}}\mathbf{a}(\varphi)}
\]
where $\mathbf{a}(\varphi)$ is the steering vector given by (\ref{eq:steering}).
Thus, the \ac{aod} in the XOY coordinate system of the dominant path
from the \ac{ap} at $\mathbf{o}_{q}$ to position $\mathbf{x}_{t}$
is estimated as
\begin{equation}
\hat{\theta}_{t,q}=(\hat{\varphi}_{t,q}+\phi_{q})\text{ mod }2\pi.\label{eq:H-theta}
\end{equation}

The estimated \ac{aod} $\hat{\theta}_{t,q}$ is to be fitted to a
conditional Gaussian model 
\begin{equation}
\hat{\theta}_{t,q}\sim\mathcal{N}\left(\phi(\mathbf{x}_{t},\mathbf{o}_{q}),\sigma_{\theta,k}^{2}\right),\quad k\in\{0,1\}\label{eq:aod-conditional-Gaussian}
\end{equation}
where $\phi(\mathbf{x}_{t},\mathbf{o}_{q})$ defines the geometric
azimuth angle from the \ac{ap} location $\mathbf{o}_{q}$ to the
position $\mathbf{x}_{t}$. Likewise, $k=0$ stands for the \ac{los}
condition and $k=1$ stands for the \ac{nlos} condition, and the
parameters $\sigma_{\theta,k}^{2}$ are to be jointly fitted from
the data. {Experimental results in Figure
\ref{fig:AoDDelay} (b) verify that the variance in the LOS region
is significantly smaller than the variance in the NLOS region.}

\subsubsection{Delay Spread Feature for LOS/NLOS Discrimination}

\begin{figure}[t]
\centering{}\includegraphics[width=1\columnwidth]{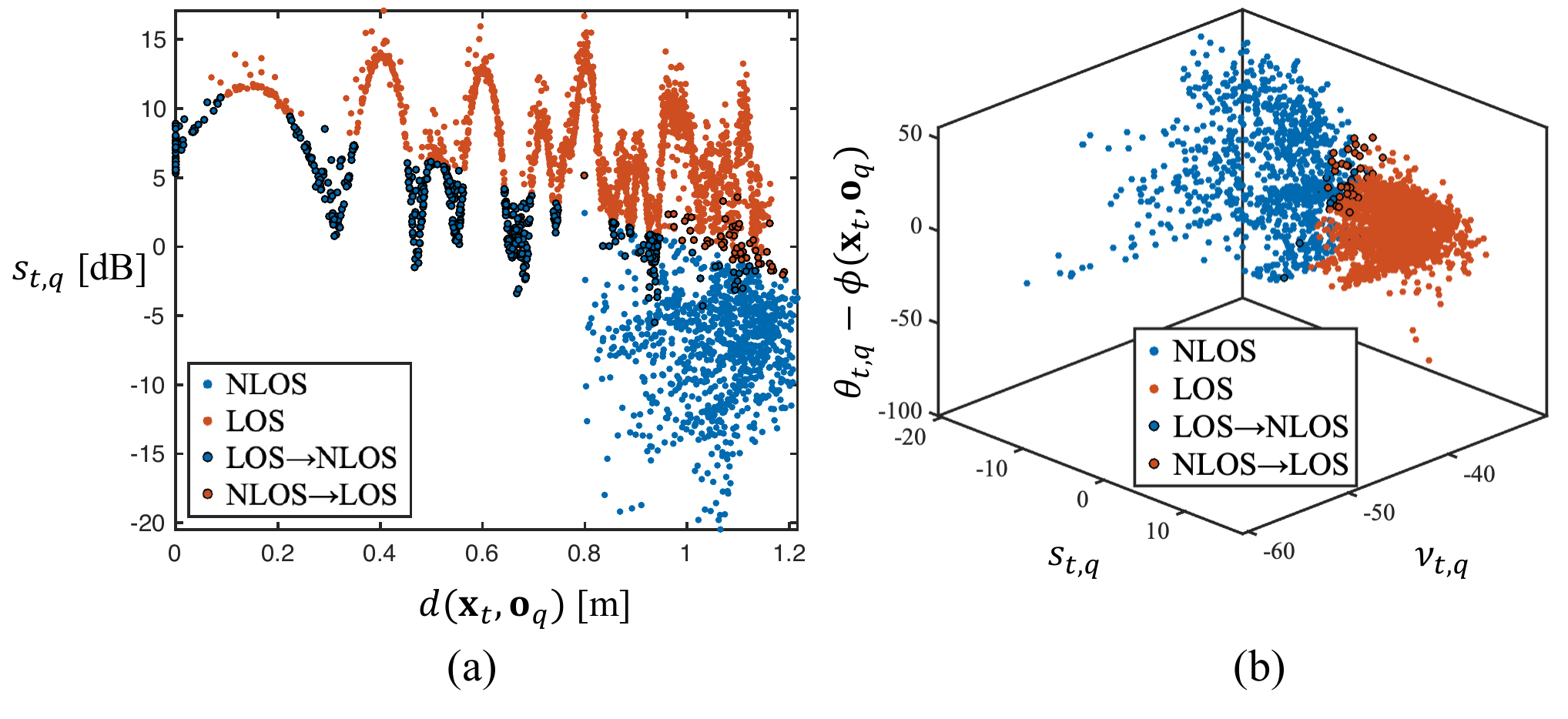}\caption{(a) With known user location, the RSS path-loss model is fitted to
perform LOS/NLOS identification, yielding an error rate of 20.5\%.
(b) By further leveraging the relationship among relative delay spread,
\ac{aod}, and \ac{rss}, the proposed method reduces the identification
error to 1.6\%. \label{fig:AoDDelay}}
\end{figure}

From the multipath channel model \eqref{eq:MIMO-OFDM channel model},
if there is only one dominant path, the magnitude of the frequency
domain channel is roughly constant, because the Fourier transform
of a delta function is constant. By contrast, if there are several
significant multipath components, the magnitude of the frequency domain
channel fluctuates. We thus construct a feature $\nu_{t,q}$ as follows

\begin{equation}
\nu_{t,q}=10\log_{10}\left(\text{Var}\left(\left|\frac{\mathbf{H}_{t,q}}{\|\mathbf{H}_{t,q}\|_{2}}\right|\right)\right)\label{eq:H-nu}
\end{equation}
to empirically capture how the channel energy spreads over the multipaths,
where $|\cdot|$ denotes the element-wise absolute value operation
and $\text{Var}(\cdot)$ denotes variance of the elements of the matrix
or vector.

The feature pair $(s_{t,q},\nu_{t,q})$ can be modeled using the following
Gaussian mixture model: 
\begin{equation}
[s_{t,q},\nu_{t,q}]^{\mathrm{T}}\sim\mathcal{N}\left(\mathbf{m}_{k},\bm{\Upsilon}_{k}\right),\quad k\in\{0,1\}\label{eq:delay-conditional-Gaussian}
\end{equation}
where $k=0$ corresponds to LOS case and $k=1$ to the NLOS case.
The parameters $\mathbf{m}_{k}$ and $\bm{\Upsilon}_{k}$ are jointly
estimated from empirical data. This Gaussian mixture model is motivated
by experimental observations, as illustrated in Figure \ref{fig:AoDDelay}(b).
Compared to relying solely on RSS for LOS/NLOS identification as shown
in Figure \ref{fig:AoDDelay}(a), the proposed feature demonstrates
a much better clustering behavior that may simplify the LOS/NLOS discrimination.

\subsubsection{CSI Distance Feature in NLOS Condition}

Define the CSI distance of two channels $\mathbf{H}_{i,q}$ and $\mathbf{H}_{j,q}$
based on (\ref{eq:Ru-hat}) and Remarks \ref{rem:Practical-Distance-metric}
and \ref{rem:Multiple-Antenna-Extension} as 
\begin{align*}
\hat{u}(\mathbf{H}_{i,q},\mathbf{H}_{j,q}) & =\underset{u\in\{0,1,...,M-1\}}{\arg\max}\;\Big|\frac{1}{N_{\text{t}}M}\sum_{n=1}^{N_{\text{t}}}\sum_{m=0}^{M-1}\Big[h_{i,q}^{(m,n)}\\
 & \qquad\qquad\qquad\qquad\times\left(h_{j,q}^{(m,n)}\right)^{*}\cdot e^{j\frac{2\pi mu}{M}}\Big]\Big|
\end{align*}
{where $h_{t,q}^{(m,n)}$ is the channel
of the $m$th subcarrier at the $n$th antenna.}

Denote $u_{t,q}^{(k)}\in\{0,1\}$ to indicate whether $\mathbf{H}_{t,q}$
is LOS, \emph{i.e.}, $u_{t,q}^{(0)}=1$ or NLOS, \emph{i.e.}, $u_{t,q}^{(1)}=1$.
Note that $u_{t,q}^{(0)}+u_{t,q}^{(1)}\equiv1$, and either $u_{t,q}^{(k)}=u_{\tau,q}^{(k)}$
for both $k=1,2$ or $u_{t,q}^{(k)}\neq u_{\tau,q}^{(k)}$ for both
$k=1,2$.

As an implication of Theorem \ref{thm:Spatial-Continuity}, the feature
$\hat{u}(\mathbf{H}_{i,q},\mathbf{H}_{j,q})$ is to be fitted to the
following regression model for every pair of measurements $i,j$ such
that they share the same NLOS propagation condition $u_{i,q}^{(1)}=u_{j,q}^{(1)}=1$,
\begin{equation}
\hat{u}(\mathbf{H}_{i,q},\mathbf{H}_{j,q})=\frac{B}{c}d(\mathbf{x}_{i},\mathbf{x}_{j})+\mathcal{N}(0,\sigma_{u}^{2})\label{eq:Dd-conditional-Gaussian}
\end{equation}
where the variance $\sigma_{u}^{2}$ quantifies the uncertainty. The
model \eqref{eq:Dd-conditional-Gaussian} describes the phenomenon
that the larger the physical distance $d$, the larger the CSI distance
$\hat{u}$, following the scaling $\hat{u}=\frac{B}{c}d$ inspired
from Theorem \ref{thm:Spatial-Continuity}.

The model \eqref{eq:Dd-conditional-Gaussian} was found to be consistent
with the data from the empirical studies in NLOS condition as shown
in {Figure~\ref{fig:csiDistance}(a).}

\subsection{A Bayesian Formulation}

\subsubsection{Gaussian Mixture Model}

Based on the feature derived in \eqref{eq:H-s}, \eqref{eq:H-theta},
and \eqref{eq:H-nu}, the observed radio signature can be constructed
as
\[
\mathbf{y}_{t,q}=\mathcal{H}(\mathbf{H}_{t,q})=[s_{t,q},\hat{\theta}_{t,q},s_{t,q},\nu_{t,q}]^{\mathrm{T}}.
\]
According to the conditional models \eqref{eq:rss-conditional-Gaussian},
\eqref{eq:aod-conditional-Gaussian}, and \eqref{eq:delay-conditional-Gaussian},
the observed radio signature vector $\mathbf{y}_{t,q}$ follows a
Gaussian mixture distribution 
\begin{align}
p(\mathbf{y}_{t,q}|\mathbf{x}_{t})=\sum_{k=0}^{1}\pi_{t,q}^{(k)}\mathcal{N}(\bm{\mu}^{(k)}(\mathbf{x}_{t}),\bm{\Sigma}^{(k)})\label{eq:prob-y}
\end{align}
where $\pi_{t,q}^{(k)}=\mathbb{P}\{u_{t,q}^{(k)}=1\}$ is the probability
that position $\mathbf{x}_{t}$ is at propagation condition $k$ for
the $q$th \ac{ap}, the mean vector is given by
\[
\bm{\mu}^{(k)}(\mathbf{x}_{t})=\left[\begin{array}{c}
\beta_{q}^{(k)}+\alpha_{q}^{(k)}\log_{10}d(\mathbf{o}_{q},\mathbf{x}_{t})\\
\phi(\mathbf{x}_{t},\mathbf{o}_{q})\\
\mathbf{m}_{k}
\end{array}\right]
\]
and the covariance matrix
\[
\bm{\Sigma}^{(k)}=\left[\begin{array}{ccc}
\sigma_{s,q,k}^{2} & 0 & 0\\
0 & \sigma_{\theta,k}^{2} & 0\\
0 & 0 & \bm{\Upsilon}_{k}
\end{array}\right].
\]

For the CSI distance model in \eqref{eq:Dd-conditional-Gaussian},
considering that the spatial continuity is only valid in a small neighborhood
and under NLOS conditions $k=1$. {For each
AP $q$, if the location $\mathbf{x}_{t}$ is in the NLOS region of
AP $q$, the NLOS neighborhood of $\mathbf{x}_{t}$, with respect
to all other position variables in $\mathcal{X}_{T}$, is defined
as
\[
\tilde{\mathcal{N}}_{q}(\mathbf{x}_{t})=\left\{ \tau:d(\mathbf{x}_{t},\mathbf{x}_{\tau})<\tilde{\delta},\ u_{t,q}^{(1)}=1,\ u_{\tau,q}^{(1)}=1\right\} ,
\]
where $\tilde{\delta}$ is a predefined distance threshold. }Define
$\mathcal{H}_{t}$ as a collection of the channel $\mathbf{H}_{t,q}$
up to time $t$ for all \acpl{ap} $q$. We can construct a Gaussian
log-likelihood function as
\begin{align}
f(\mathbf{H}_{t,q}|\mathcal{H}_{T},\mathcal{X}_{T}) & =\frac{1}{|\tilde{\mathcal{N}}_{q}(\mathbf{x}_{t})|}\sum_{\tau\in\tilde{\mathcal{N}}_{q}(\mathbf{x}_{t})}\log\Big[\frac{1}{\sqrt{2\pi\sigma_{u}^{2}}}\label{eq:fGtq}\\
\times & \exp\Big(-\frac{[\hat{u}(\mathbf{H}_{t,q},\mathbf{H}_{\tau,q})-\frac{B}{c}d(\mathbf{x}_{t},\mathbf{x}_{\tau})]^{2}}{2\sigma_{u}^{2}}\Big)\Big],\nonumber 
\end{align}
{where $|\tilde{\mathcal{N}}_{q}(\mathbf{x}_{t})|$
denotes the cardinality of the set $\tilde{\mathcal{N}}_{q}(\mathbf{x}_{t})$.
Moreover, $f(\mathbf{H}_{t,q}|\mathcal{H}_{T},\mathcal{X}_{T})$ is
set to zero if $\mathbf{x}_{t}$ is located in the LOS region of AP
$q$.}

\subsubsection{Spatially Regularized Likelihood Formulation}

Recall the joint probability $p(\mathcal{Y}_{T},\mathcal{X}_{T})$
in \eqref{eq:p_Yt_Xt2}. Denote $\bm{\Theta}_{\text{m}}$ as the collection
of all the parameters in the mobility model \eqref{eq:discretized-mobility-model}
and $\bm{\Theta}_{\text{p}}$ as the collection of all the remaining
parameters which are related to the radio signature, we construct
a regularized likelihood as follows
\[
\mathcal{L}(\mathcal{X}_{T},\bm{\Theta}_{\text{p}},\bm{\Theta}_{\text{m}})=\log p(\mathcal{Y}_{T},\mathcal{X}_{T})+\eta\sum_{t=1}^{T}\sum_{q=1}^{Q}f(\mathbf{H}_{t,q}|\mathcal{H}_{T},\mathcal{X}_{T})
\]
where $\eta>0$ is some weighting factor and the last term applies
regularization only to locations under the NLOS condition.

The trajectory inference problem can be formulated as
\begin{align}
\underset{\mathcal{X}_{T},\bm{\Theta}_{\mathrm{p}},\bm{\Theta}_{\mathrm{m}}}{\mathrm{maximize}} & \quad\mathcal{L}(\mathcal{X}_{T},\bm{\Theta}_{\mathrm{p}},\bm{\Theta}_{\mathrm{m}})\label{eq:P0}\\
\text{subject to} & \quad u_{t,q}^{(k)}\in\{0,1\},\;u_{t,q}^{(0)}+u_{t,q}^{(1)}=1\nonumber \\
 & \quad\mathbf{x}_{t}\in\mathcal{\mathcal{V}},\qquad t=1,2,\dots,T\nonumber \\
 & \quad(\mathbf{x}_{t},\mathbf{x}_{t-1})\in\mathcal{E},\qquad t=2,\dots,T.\nonumber 
\end{align}

\subsection{Algorithm Design}

To solve the joint trajectory inference and parameter estimation problem
in equation (\ref{eq:P0}), we observe that, given $\mathcal{X}_{T}$,
the variables $\bm{\Theta}_{\text{{p}}}$ and $\bm{\Theta}_{\text{{m}}}$
are decoupled. This is because the term $p(\mathbf{y}_{t,q}|\mathbf{x}_{t})$
in equation (\ref{eq:P0}) only depends on $\bm{\Theta}_{\text{{p}}}$,
while the term $\mathbb{P}(\mathbf{x}_{t}|\mathbf{x}_{t-1},\mathbf{x}_{t-2})$
only depends on $\bm{\Theta}_{\text{{m}}}$. Consequently, $\bm{\Theta}_{\text{{p}}}$
and $\bm{\Theta}_{\text{{m}}}$ can be solved through two parallel
subproblems derived from equation (\ref{eq:P0}), as follows:
\begin{align*}
(\mathrm{P1}):\underset{\bm{\Theta}_{\text{{m}}}}{\mathrm{maximize}} & \;\;\sum_{t=3}^{T}\log\mathbb{P}(\mathbf{x}_{t}|\mathbf{x}_{t-1},\mathbf{x}_{t-2};\bm{\Theta}_{\text{{m}}})\\
(\mathrm{P2}):\underset{\bm{\Theta}_{\text{{p}}}}{\mathrm{maximize}} & \;\;\sum_{t=1}^{T}\sum_{q=1}^{Q}\log\sum_{k=0}^{1}\pi_{t,q}^{(k)}\mathcal{N}(\bm{\mu}^{(k)}(\mathbf{x}_{t}),\bm{\Sigma}^{(k)})\\
 & \;\;\;\;+\eta\sum_{t=1}^{T}\sum_{q=1}^{Q}f(\mathbf{H}_{t,q}|\mathcal{H}_{T},\mathcal{X}_{T})\\
\text{subject to} & \quad u_{t,q}^{(k)}\in\{0,1\},\;u_{t,q}^{(0)}+u_{t,q}^{(1)}=1
\end{align*}

On the other hand, given the variables $\hat{\bm{\Theta}}_{\text{{p}}}$
and $\hat{\bm{\Theta}}_{\text{{m}}}$ as the solutions to (P1) and
(P2), respectively, the trajectory $\mathcal{X}_{T}$ can be solved
by: 
\begin{align*}
(\mathrm{P3}):\underset{\mathcal{X}_{T}}{\mathrm{maximize}} & \;\;\mathcal{L}(\mathcal{X}_{T},\bm{\Theta}_{\mathrm{p}},\bm{\Theta}_{\mathrm{m}})\\
\text{subject to} & \quad\mathbf{x}_{t}\in\mathcal{\mathcal{V}},\qquad t=1,2,\dots,T\\
 & \quad(\mathbf{x}_{t},\mathbf{x}_{t-1})\in\mathcal{E},\qquad t=2,\dots,T.
\end{align*}
This naturally leads to an alternating optimization strategy. In this
strategy, we solve for $\mathcal{X}_{T}$ from problem (P3), and then
for $\hat{\bm{\Theta}}_{\text{{p}}}$ and $\hat{\bm{\Theta}}_{\text{{m}}}$
from problems (P1) and (P2) iteratively. Since the corresponding iterations
never decrease the objective function in equation (\ref{eq:P0}),
which is bounded above, the iterations are guaranteed to converge.

\subsubsection{Solution to (P1) for the Mobility Model}

With given $\bm{\Theta}_{\text{{p}}}$ and $\mathcal{X}_{T}$, according
to the mobility model in (\ref{eq:mobility-model}), setting the derivative
of $p(\mathbf{x}_{t}|\mathbf{x}_{t-1},\mathbf{x}_{t-2};\bm{\Theta}_{\text{{m}}})$
\ac{wrt} $\bm{\Theta}_{\text{m}}=\{\bar{\mathbf{v}},\sigma_{\text{v}}^{2}\}$
to zero, we find that the corresponding solution

\begin{align}
\bar{\mathbf{v}} & =\frac{\sum_{t=3}^{T}(\mathbf{x}_{t}-(1+\gamma)\mathbf{x}_{t-1}+\gamma\mathbf{x}_{t-2})}{(T-2)(1-\gamma)\delta}\label{eq:solution-v}\\
\sigma_{\text{v}}^{2} & =\frac{\sum_{t=3}^{T}\|\mathbf{x}_{t}-(1+\gamma)\mathbf{x}_{t-1}+\gamma\mathbf{x}_{t-2}-(1-\gamma)\delta\bar{\mathbf{v}}\|_{2}^{2}}{2(T-2)\delta^{2}}\label{eq:solution-v-sigma}
\end{align}
is unique. Since (P1) is an unconstrained optimization problem, (\ref{eq:solution-v})--(\ref{eq:solution-v-sigma})
give the optimal solution to (P1).

\subsubsection{Solution to (P2) via Expectation Maximization}

\label{subsec:Solution-to-(P2)}

Given the known parameters $\bm{\Theta}_{\text{m}}$ and the trajectory
$\mathcal{X}_{T}$, we can solve ($\mathrm{P2}$) for the set of parameters
$\bm{\Theta}_{\text{p}}=\{\beta_{q}^{(k)},\alpha_{q}^{(k)},\mathbf{m}_{k},\sigma_{s,q,k}^{2},\sigma_{\theta,k}^{2},\bm{\Upsilon}_{k},u_{t,q}^{(k)},\sigma_{u}^{2}\}$.
This is a typical maximum likelihood estimation problem with latent
variables, which can be efficiently solved using the Expectation-Maximization
(EM) algorithm \cite{MonTod:J96}, where one may iteratively estimate
the posterior probability $c_{t,q}^{(k)}$ and the model parameter
$\Xi=\{\pi_{t,q}^{(k)},\bm{\mu}^{(k)}(\mathbf{x}_{t}),\bm{\Sigma}^{(k)}\}$.

Specifically, given a random initialization $\Xi$, the posterior
probability $c_{t,q}^{(k)}$ is calculated by
\begin{align}
c_{t,q}^{(k)} & =\frac{\pi_{t,q}^{(k)}\mathcal{N}(\mathbf{x}_{t};\bm{\mu}^{(k)}(\mathbf{x}_{t}),\bm{\Sigma}^{(k)})}{\sum_{j=0}^{1}\pi_{t,q}^{(j)}\mathcal{N}(\mathbf{x}_{t};\bm{\mu}^{(j)}(\mathbf{x}_{t}),\bm{\Sigma}^{(j)})}.\label{eq:ck}
\end{align}

With the updated $c_{t,q}^{(k)}$, we update $\Xi$ as follows. For
updating the parameters $\{\beta_{q}^{(k)},\alpha_{q}^{(k)},\mathbf{m}_{k}\}$
in $\bm{\mu}^{(k)}(\mathbf{x}_{t})$, let $\mathbf{D}_{q}\in\mathbb{R}^{T\times2}$
be a matrix with the $t$th row $[1,\log_{10}d(\mathbf{o}_{q},\mathbf{x}_{1})]$
and $\mathbf{w}_{q}^{(k)}=[c_{1,q}^{(k)},\ldots,c_{T,q}^{(k)}]^{\mathrm{T}}$
be a vector. Then, the mean parameters $\mathbf{m}_{k}$ is updated
by $\mathbf{m}_{k}=\sum_{t,q}c_{t,q}^{(k)}[s_{t,q},\nu_{t,q}]^{\mathrm{T}}/(\sum_{t,q}c_{t,q}^{(k)})$,
and $[\beta_{q}^{(k)},\alpha_{q}^{(k)}]^{\mathrm{T}}$ are estimated
using weighted least squares
\begin{align*}
[\beta_{q}^{(k)},\alpha_{q}^{(k)}]^{\mathrm{T}} & =\left(\mathbf{D}_{q}^{\mathrm{T}}\mathrm{Diag}(\mathbf{w}_{q}^{(k)})\mathbf{D}_{q}\right)^{-1}\mathbf{D}_{q}^{\mathrm{T}}\mathrm{Diag}(\mathbf{w}_{q}^{(k)})\mathbf{s}_{q}
\end{align*}
where $\mathrm{Diag}(\mathbf{w}_{q}^{(k)})$ denotes the diagonal
matrix whose diagonal entries are given by the elements of the vector
$\mathbf{w}_{q}^{(k)}$.

To update the variances $\{\sigma_{s,q,k}^{2},\sigma_{\theta,k}^{2},\bm{\Upsilon}_{k}\}$
in $\bm{\Sigma}^{(k)}$, we leverage its diagonal structure and estimate
each component independently
\begin{align*}
\sigma_{s,q,k}^{2} & =\frac{\sum_{t,q}c_{t,q}^{(k)}\left(s_{t,q}-\beta_{q}^{(k)}-\alpha_{q}^{(k)}\log_{10}d(\mathbf{o}_{q},\mathbf{x}_{t})\right)^{2}}{\sum_{t,q}c_{t,q}^{(k)}}\\
\sigma_{\theta,k}^{2} & =\frac{\sum_{t,q}c_{t,q}^{(k)}\left(\hat{\theta}_{t,q}-\phi(\mathbf{x}_{t},\mathbf{o}_{q})\right)^{2}}{\sum_{t,q}c_{t,q}^{(k)}}\\
\bm{\Upsilon}_{k} & =\frac{\sum_{t,q}c_{t,q}^{(k)}\left([s_{t,q},\nu_{t,q}]^{\mathrm{T}}-\mathbf{m}_{k}\right)\left([s_{t,q},\nu_{t,q}]^{\mathrm{T}}-\mathbf{m}_{k}\right)^{\mathrm{T}}}{\sum_{t,q}c_{t,q}^{(k)}}.
\end{align*}
Then, the mixture weights are updated as $\pi_{t,q}^{(k)}=\frac{1}{TQ}\sum_{t=1}^{T}\sum_{q=1}^{Q}c_{t,q}^{(k)}$.

With the updated $\Xi$, we compute $c_{t,q}^{(k)}$ using \eqref{eq:ck}
again. Thereafter, a new round of updating for $\Xi$ can be started.
Let $\mathcal{L}^{(i)}$ denote the log-likelihood at iteration $i$.
The EM algorithm is terminated once the convergence condition is satisfied
$|\mathcal{L}^{(i+1)}-\mathcal{L}^{(i)}|<10^{-6}$. Note that the
spatial regularization is only meaningful in the NLOS case and is
therefore not considered in the LOS/NLOS discrimination procedure.

Finally, the LOS/NLOS assignment is given by
\[
u_{t,q}^{(k)}=\begin{cases}
1, & \text{if }k=\arg\max_{j}\:c_{t,q}^{(j)}\\
0, & \text{otherwise}.
\end{cases}
\]
and the regularization variance $\sigma_{u}^{2}$ is calculated as
\begin{align*}
\sigma_{u}^{2} & =\frac{1}{{\displaystyle \sum_{t=1}^{T}\sum_{q=1}^{Q}|\tilde{\mathcal{N}}_{q}(\mathbf{x}_{t})|}}\sum_{t=1}^{T}\sum_{q=1}^{Q}\sum_{\tau\in\tilde{\mathcal{N}}_{q}(\mathbf{x}_{t})}\Big[\hat{u}(\mathbf{H}_{t,q},\mathbf{H}_{\tau,q})\\
 & \qquad\qquad-\frac{B}{c}d(\mathbf{x}_{t},\mathbf{x}_{\tau})\Big]^{2}
\end{align*}
where $|\tilde{\mathcal{N}}_{q}(\mathbf{x}_{t})|$ denotes the cardinality
of the set $\tilde{\mathcal{N}}_{q}(\mathbf{x}_{t})$.

\subsubsection{Solution to (P3) for Trajectory Optimization}

\label{subsec:Solution-to-(P3)}

Problem (P3) searches for a trajectory in a discrete space that maximizes
the log-likelihood $\mathcal{L}(\mathcal{X}_{T},\bm{\Theta}_{\mathrm{p}},\bm{\Theta}_{\mathrm{m}})$
given the signal propagation parameters $\hat{\bm{\Theta}}_{\text{{p}}}$
and mobility model parameters $\hat{\bm{\Theta}}_{\text{{m}}}$. Problem
(P3) follows a classical \ac{hmm} optimization form, and can be efficiently
solved using a modified version of the Viterbi algorithm with globally
optimal guarantee.

At each step, there are $|\mathcal{V}|$ candidate locations considered,
but states with very low probabilities $\prod_{q=1}^{Q}p(\mathbf{y}_{t,q}|\mathbf{x}_{t};\hat{\bm{\Theta}}_{\text{p}})$
are highly unlikely to contribute to the optimal path. To improve
efficiency, states with probabilities below a threshold $\zeta$ are
pruned. Mathematically, this corresponds to retaining only the top
$n_{t}(\zeta)$ most probable locations at time slot $t$, where $n_{t}(\zeta)$
is the number of elements in the set $\{\mathbf{x}_{t}\mid\prod_{q=1}^{Q}p(\mathbf{y}_{t,q}|\mathbf{x}_{t};\hat{\bm{\Theta}}_{\text{p}})>\zeta,\mathbf{x}_{t}\in\mathcal{V}\}$.
Denote the maximum number of element in the set $n_{\mathrm{max}}(\zeta)=\max_{t}\{n_{t}(\zeta)\}$.

Considering the number of candidate previous states for the current
state, which is constrained by the graph structure, it is of the order
$\mathcal{O}(\varrho^{2}(D_{\text{m}}))$ for a square region, where
the max hop $\varrho(D_{\text{m}})$ at each step is determined by
$D_{\text{m}}$. Thus, the computational complexity of solving problem
(P3) is upper bounded by $\mathcal{O}(Tn_{\mathrm{max}}(\zeta)\varrho^{2}(D_{\text{m}}))$.

The overall algorithm is summarized in Algorithm \ref{alg:alternative-opt}.
We first initialize the propagation parameter $\bm{\Theta}_{\text{{p}}}$
and and the mobility parameter $\bm{\Theta}_{\text{{m}}}$ randomly
and then begin the alternating update of $\mathcal{X}_{T}$, $\bm{\Theta}_{\text{{p}}}$
and $\bm{\Theta}_{\text{{m}}}$ alternatively until convergence. Since
each iteration of this procedure never decreases the objective function,
which is bounded above, the iterative process is therefore guaranteed
to converge{.}

\begin{algorithm}
Initialize the parameter $\bm{\Theta}_{\text{{p}}}^{(0)}$, $\bm{\Theta}_{\text{{m}}}^{(0)}$
randomly.

Loop for the ($i+1$)th iteration:
\begin{itemize}
\item Update $\mathcal{X}_{T}^{(i+1)}$ using the method in Section \eqref{subsec:Solution-to-(P3)}.
\item Update $\bm{\Theta}_{\text{{p}}}^{(i+1)}$ using method in Section
\eqref{subsec:Solution-to-(P2)}.
\item Update $\bm{\Theta}_{\text{{m}}}^{(i+1)}$ using (\ref{eq:solution-v})-(\ref{eq:solution-v-sigma}).
\end{itemize}
Until $\mathcal{X}_{T}^{(i+1)}=\mathcal{X}_{T}^{(i)}$.

\caption{An alternating optimization algorithm for trajectory inference.\label{alg:alternative-opt}}
\end{algorithm}

\section{Numerical Experiments}

\label{sec:Experiments}

In this section, we first present the experimental setup and scenarios
in Section{~\ref{subsec:Datasets}}, followed
by a numerical validation of the theoretical results in Section{~\ref{subsec:Numerical-Validation}}.
Finally, we evaluate the accuracy of trajectory inference and the
constructed radio map in Section{~\ref{subsec:Trajectory-Recovery-Performance}}.

\subsection{Environmental Setup and Scenarios}

\label{subsec:Datasets}

This paper validates the proposed algorithm using two datasets:

\textbf{Synthetic Dataset I:} We simulate a trajectory of length $100$
meters using the mobility model defined in (\ref{eq:mobility-model}),
parameterized by $\gamma=1$, $\mathbf{v}=[1,0]^{\mathrm{T}}$ m/s,
$\mathbf{x}=[0,0]^{\mathrm{T}}$ m, and $\delta=0.1$ s. We consider
\ac{ap} at a height of 3 meters and a mobile user at a height of
1.5 meters equipped with $N_{\mathrm{t}}$ antennas. Two scenarios
are considered: in Scenario 1 (\ac{ap} deployed in a limited region,
c.f., Section IV-B), the number of \acpl{ap} surrounding the trajectory
is fixed at $Q=4$, 8, 12, 16, 20; in Scenario 2 (\ac{ap} deployed
in an unlimited region, c.f., Section III-C), the \acpl{ap} in the
target area follow a \ac{ppp} with densities $\kappa=0.64\times10^{-2}$,
$2.55\times10^{-2}$, $3.02\times10^{-2}$, $5.02\times10^{-2}$,
$7.02\times10^{-2}$, $9.02\times10^{-2}$, $1.02\times10^{-1}$,
and $2.55\times10^{-1}$ units per $\mathrm{m}^{2}$. The mobile user
can only detect \acpl{ap} within a radius of $R=$10, 20, 30, 40,
50, 60, 100 meters. The number of antennas in each \ac{ap} is $N_{\mathrm{t}}=2,4,8,16,32,64,128$.

\textbf{Synthetic Dataset} \textbf{II:} We utilized Wireless Insite{$^{\circledR}$}
to simulate a 26 m $\times$ 24 m indoor environment with a 264 m$^{2}$
area. As illustrated in Figure \ref{fig:antenna}, four APs with a
height of 3 meters were manually deployed at the corners of the room.
Each \ac{ap} is equipped with an 8-antenna omnidirectional \ac{ula}
array and configured with $M=64$ subcarriers using a \ac{mimo}-\ac{ofdm}
model. {We recorded the }\ac{csi}{
at receivers positioned at a height of 1.5 meters along a trajectory
with length of 167\,m. The sampling interval is set to $\delta=0.2$\,s}

\subsection{Numerical Validation of the Theoretical Results}

\label{subsec:Numerical-Validation}

\begin{figure}
\centering{}\includegraphics[width=1\columnwidth]{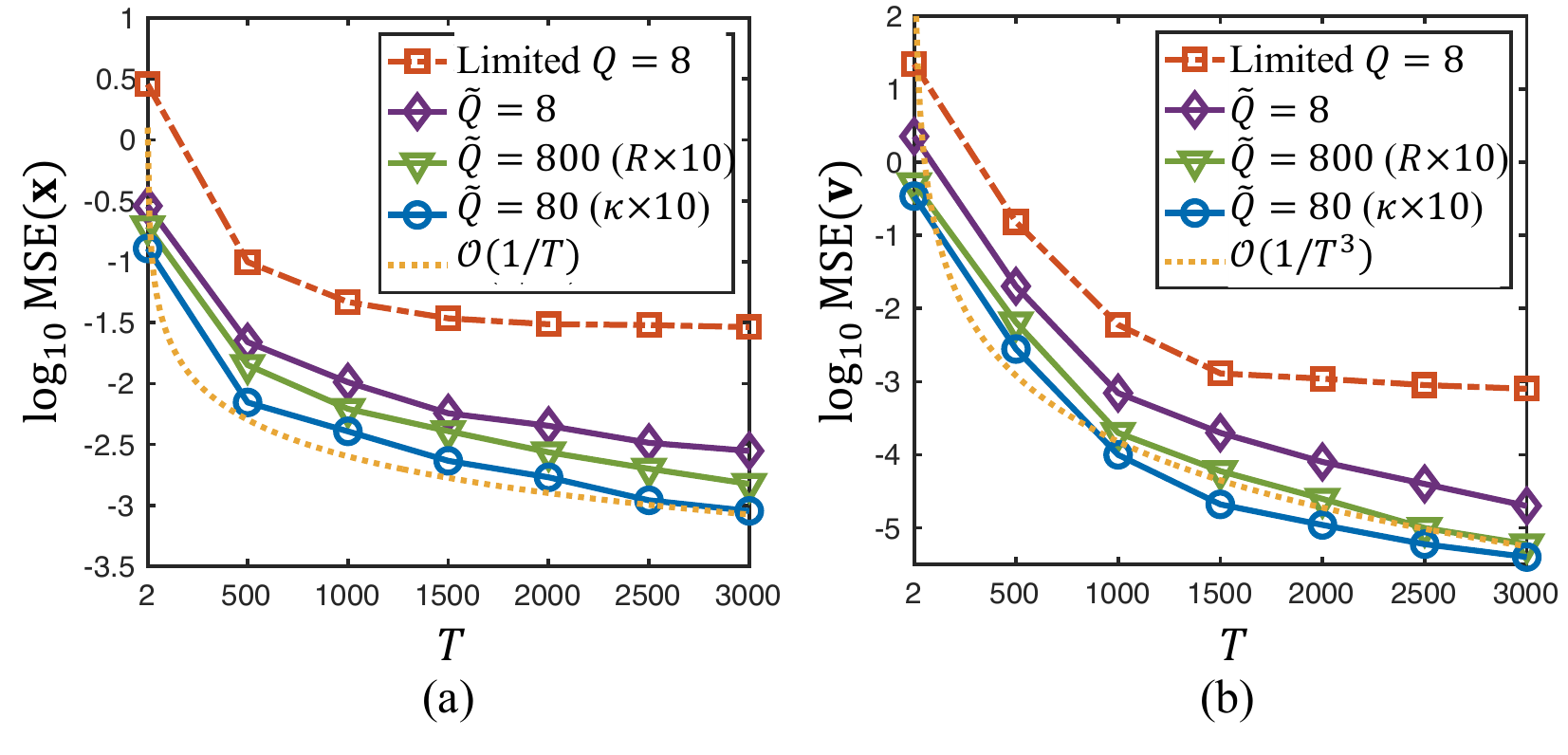}\vspace{-0.1in}
\caption{\ac{mse} of (a) $\mathbf{x}$ and (b)\textbf{ $\mathbf{v}$} with
different sample number $T$, the number of \ac{ap} $Q$, radius
$R$, and density $\kappa$.\label{fig:vx-T}}
\vspace{-0.1in}
\end{figure}

Figure \ref{fig:vx-T} illustrates the \ac{mse} defined as MSE($\mathbf{x}$)$=\|\mathbf{x}-\hat{\mathbf{x}}\|_{2}^{2}$
m$^{2}$ and MSE($\mathbf{x}$)$=\|\mathbf{v}-\hat{\mathbf{v}}\|_{2}^{2}$
m$^{2}$/s$^{2}$ on the synthetic dataset I with the parameter $\sigma_{\theta}=0.1$,
where $\hat{\mathbf{x}}$ and $\hat{\mathbf{v}}$ are the outputs
of the proposed algorithm.

In Scenario 1 of the synthetic dataset I with $Q=8$, the \ac{mse}
of $\mathbf{x}$ and $\mathbf{v}$ decreases as $T$ increases within
a limited region but does not converge to zero even when $T=20000$
in our experiments. This behavior is consistent with Proposition \ref{prop:LB-F-x}
and Proposition \ref{prop:LB-F-v}.

In Scenario 2 of the synthetic dataset I, we set $R=10$ m and $\kappa=2.55\times10^{-2}$
units per m$^{2}$, resulting in $\text{\ensuremath{\tilde{Q}}}\approx8$.
As $T$ increases, the rate at which MSE($\mathbf{x}$) decreases
follows $\mathcal{O}(1/T)$, and the rate at which MSE($\mathbf{v}$)
decreases follows $\mathcal{O}(1/T^{3})$, which is consistent with
Theorem \ref{thm:LB-F-xv-un}. Scenario 2 with $\text{\ensuremath{\tilde{Q}}}\approx8$
achieves a lower \ac{mse} compared to Scenario 1 with $Q=8$. Furthermore,
The MSE($\mathbf{x}$) and MSE($\mathbf{v}$) for the curves $\tilde{Q}=8$
and $\tilde{Q}=800$ in Figure \ref{fig:vx-T} both reach zero when
$T>3200$. We found that increasing $R$ results in a lower \ac{mse}
than increasing $\kappa$ under the same number of \acpl{ap}. This
is because $\tilde{\Delta}_{T,x}$ and $\tilde{\Delta}_{T,v}$ in
Theorem \ref{thm:LB-F-xv-un} is related to $\mathcal{O}(1/(\kappa\pi(r_{0}^{-2}-R^{-2})))$.
In addition, we found that increasing the radius $R$ from 50 to 500
meters results in a lower \ac{mse}, and increasing the density $\kappa$
from $2.55\times10^{-2}$ to $2.55\times10^{-1}$ also yields a lower
\ac{mse}.

\begin{figure}
\centering{}\includegraphics[width=1\columnwidth]{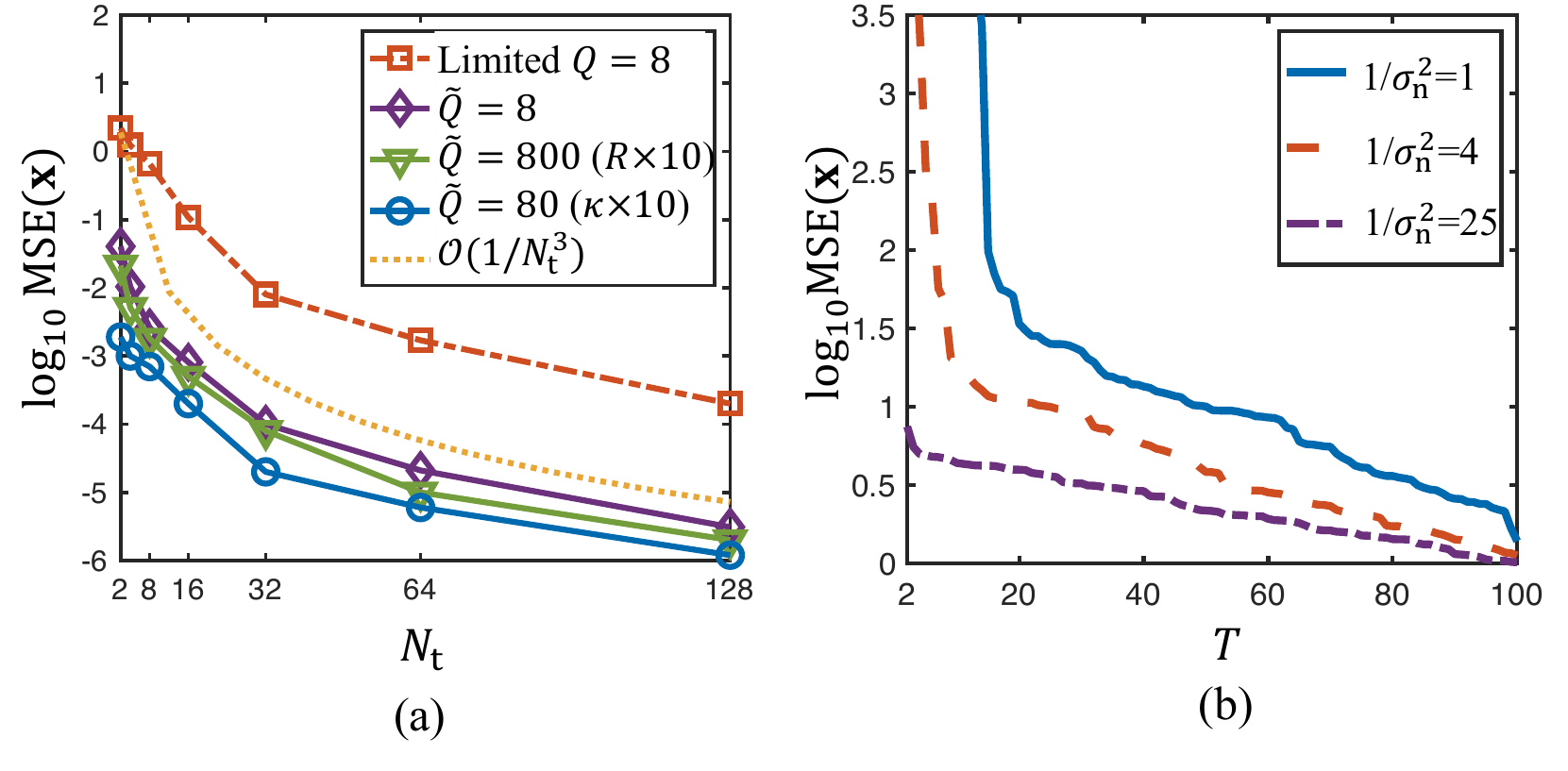}\vspace{-0.1in}
\caption{(a) The relationship between MSE($\mathbf{x}$) and the number of
antennas $N_{\mathrm{t}}$. (b) MSE($\mathbf{x}$) under different
noise $\sigma_{\mathrm{n}}^{2}$. \label{fig:Q-sigma}}
\vspace{-0.1in}
\end{figure}

We investigate the effect of the number of antenna $N_{\mathrm{t}}$
on MSE($\mathbf{x}$). The trajectory length is set to 100 m. As shown
in Figure \ref{fig:Q-sigma} (a), as $N_{\mathrm{t}}$ increases,
the rate at which MSE($\mathbf{x}$) decreases follows $\mathcal{O}(1/N_{\mathrm{t}}^{3})$,
which is consistent with Theorem \ref{thm:LB-F-xv-un}.

We also investigate the effect of the noise variance $\sigma_{\mathrm{n}}^{2}$
under the unlimited scenario, with $N_{\mathrm{t}}=2$, $R=10$ m,
$\kappa=2.55\times10^{-2}$ per m$^{2}$, and a trajectory length
of $500$ m. We consider $1/\sigma_{\mathrm{n}}^{2}=1,4,25$ for all
\acpl{ap}. As shown in Figure \ref{fig:Q-sigma}(b), a larger $\sigma_{\mathrm{n}}^{2}$
results in a faster convergence rate. Recall that $1/\sigma_{\mathrm{n}}^{2}$
is proportional to the \ac{snr}. Thus, a smaller $1/\sigma_{\mathrm{n}}^{2}$
leads to a slower decrease in the \ac{crlb} of $\mathbf{x}$ and
$\mathbf{v}$ as $T$ increases, as stated in Theorem \ref{thm:LB-F-xv-un}.

\begin{figure}[t]
\centering{}\includegraphics[width=1\columnwidth]{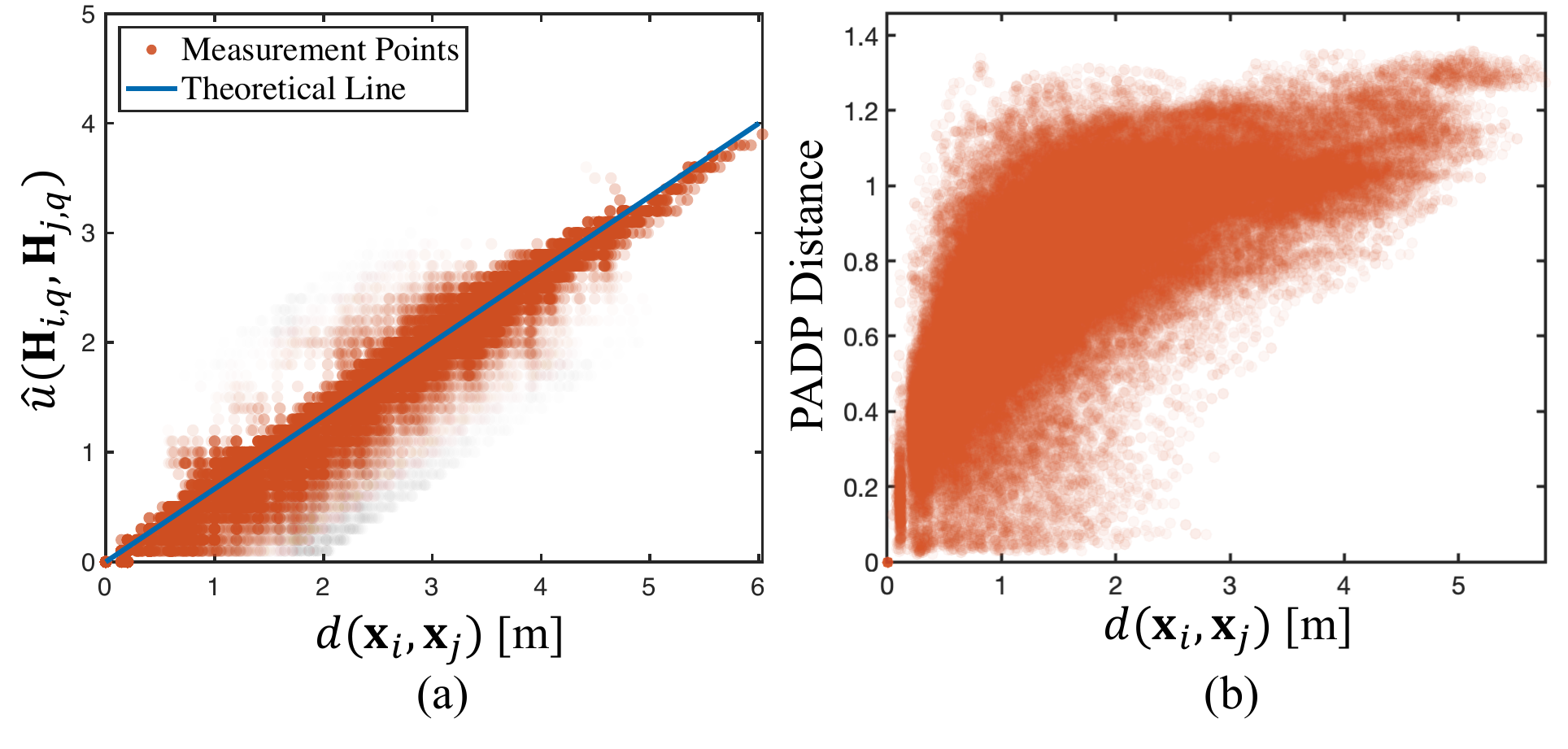}\vspace{-0.1in}
\caption{(a) The{ relationship between $\hat{u}(\mathbf{H}_{i,q},\mathbf{H}_{j,q})$
and $d(\mathbf{x}_{i},\mathbf{x}_{j})$ in NLOS region}. (b) The{
relationship between PADP distance and $d(\mathbf{x}_{i},\mathbf{x}_{j})$
in NLOS region}. \label{fig:csiDistance}}
\vspace{-0.1in}
\end{figure}

Figure \ref{fig:csiDistance}(a) illustrates the data point pairs
$\big(d(\mathbf{x}_{i},\mathbf{x}_{j}),\hat{u}(\mathbf{H}_{i,q},\mathbf{H}_{j,q})\big)$
along with the reference line $y=\frac{B}{c}x$. The data points exhibit
a good fit to the line, with a variance of 0.2, thereby verifying
the accuracy of Theorem \ref{thm:Spatial-Continuity}. {As
for a benchmark, we also analyze the power-angular-delay profile (PADP)
distance to capture spatial correlations between locations $\mathbf{x}_{t}$.
Let $\mathbf{H}_{t,q}\in\mathbb{C}^{N_{t}\times M}$ denote the channel
matrix. A dictionary $\mathbf{D}\in\mathbb{C}^{N_{t}\times N}$ with
$N=8N_{t}$ columns is constructed, and a normalized DFT matrix $\mathbf{F}_{M}\in\mathbb{C}^{M\times M}$
is defined with unit-norm columns. The normalized PADP is computed
as $\mathbf{G}_{t,q}=|\mathbf{D}^{\mathrm{H}}\mathbf{H}_{t,q}\mathbf{F}_{M}^{\mathrm{H}}|/\|\mathbf{D}^{\mathrm{H}}\mathbf{H}_{t,q}\mathbf{F}_{M}^{\mathrm{H}}\|_{\mathrm{F}}$,
projecting $\mathbf{H}_{t,q}$ onto the spatial-delay domain and normalizing
by its Frobenius norm. }Figure \ref{fig:csiDistance}(b){
shows the PADP distance $\|\mathbf{G}_{i}-\mathbf{G}_{j}\|_{\mathrm{F}}$
and the physical distance between two samples. It is evident that
the simple PADP distance does not exhibit a clear relationship with
physical distance.}

\subsection{Trajectory Inference Performance}

\label{subsec:Trajectory-Recovery-Performance}

\begin{table}[t]
\centering{}
\centering{}\caption{Comparison of average localization error ($E_{\mathrm{loc}}$) on
the dataset II. \label{tab:traj-performance}}
\begin{tabular}{l|>{\centering}p{0.97cm}>{\centering}p{0.97cm}>{\centering}p{0.97cm}>{\centering}p{0.97cm}>{\centering}p{0.97cm}}
\hline 
 & WCL \cite{WanUrr:J11} & AoDL\cite{GarWym:J18} & TDoAL\cite{SybHo:J18} & HAT\cite{AhaKal:C23} & CC \cite{TanPal:J25}\tabularnewline
\hline 
NLOS & 4.95 & 4.31 & 6.74 & 3.98 & 3.91\tabularnewline
Single LOS & 4.24 & 3.82 & 4.81 & 3.20 & 3.14\tabularnewline
Double LOS & 3.01 & 2.95 & 4.14 & 2.63 & 1.75\tabularnewline
All & 3.55 & 3.38 & 4.72 & 2.97 & 2.19\tabularnewline
\hline 
 & HRE\cite{XinChe:C24} & Proposed ($\eta=0$) & Proposed & GMA (Ideal) & \tabularnewline
NLOS & 3.31 & 1.56 & 1.07 & 1.02 & \tabularnewline
Single LOS & 2.51 & 1.21 & 0.82 & 0.79 & \tabularnewline
Double LOS & 1.26 & 0.91 & 0.59 & 0.57 & \tabularnewline
All & 1.64 & 1.02 & 0.68 & 0.66 & \tabularnewline
\hline 
\end{tabular}\vspace{-0.1in}
\end{table}

We evaluate the trajectory inference performance
of the proposed method using the average localization error, defined
as $E_{\mathrm{loc}}=\frac{1}{T}\sum_{t=1}^{T}\|\mathbf{x}_{t}-\hat{\mathbf{x}}_{t}\|_{2}$,
where $\mathbf{x}_{t}$ denotes the true data collection location
at time slot $t$, and $\hat{\mathbf{x}}_{t}$ is the corresponding
estimated location. The proposed approach is compared against six
baseline methods and two variants. The baselines include: (i) Weighted
Centroid Localization (WCL)~\cite{WanUrr:J11}, which estimates $\hat{\mathbf{p}}_{t}=\sum_{q=1}^{Q}w_{t,q}\mathbf{o}_{q}$
with weights $w_{t,q}=10^{s_{t,q}/20}/\sum_{l=1}^{Q}10^{s_{t,l}/20}$
derived from the received signal power $s_{t,q}$; (ii) AoD-Based
Localization (AoDL)~\cite{GarWym:J18}, which applies geometric triangulation
using \ac{aod} measurements and known
\ac{ap} coordinates; (iii) TDoA-Based
Localization (TDoAL)~\cite{SybHo:J18}, which estimates position
through the intersection of hyperboloids formed by \ac{tdoa}
measurements; (iv) Hybrid AoD-TDoA (HAT)~\cite{AhaKal:C23}, which
combines AoD and TDoA measurements to improve robustness; (v) Channel
Charting (CC)~\cite{TanPal:J25}, which maps \ac{csi}
features to physical space via a bilateration loss and line-of-sight
bounding-box regularization; and (vi) HMM-based RSS Embedding (HRE)~\cite{XinChe:C24},
which employs a graph-based hidden Markov model to infer trajectories
from \ac{rss}. In addition, we consider
two variants of our approach: the proposed method without the spatial
continuity constraint ($\eta=0$), and a Genius-aided Map-Assisted
(GMA) variant that assumes perfect knowledge of propagation parameters
and alternately updates the mobility model and trajectory, serving
as an upper performance bound. For the proposed method, we set the
mobility model trade-off parameter to $\gamma=0.5$, the regularization
parameter to $\eta=3000$, and the location space resolution in the
graph $\mathcal{G}$ to 0.2~m. The parameter $D_{\mathrm{m}}$ is
set to $12.4\delta$~m, where $12.4$~m/s corresponds to the maximum
human walking speed. We set $\tilde{\delta}=2$ m.

\begin{figure}[t]
\centering{}\includegraphics[width=1\columnwidth]{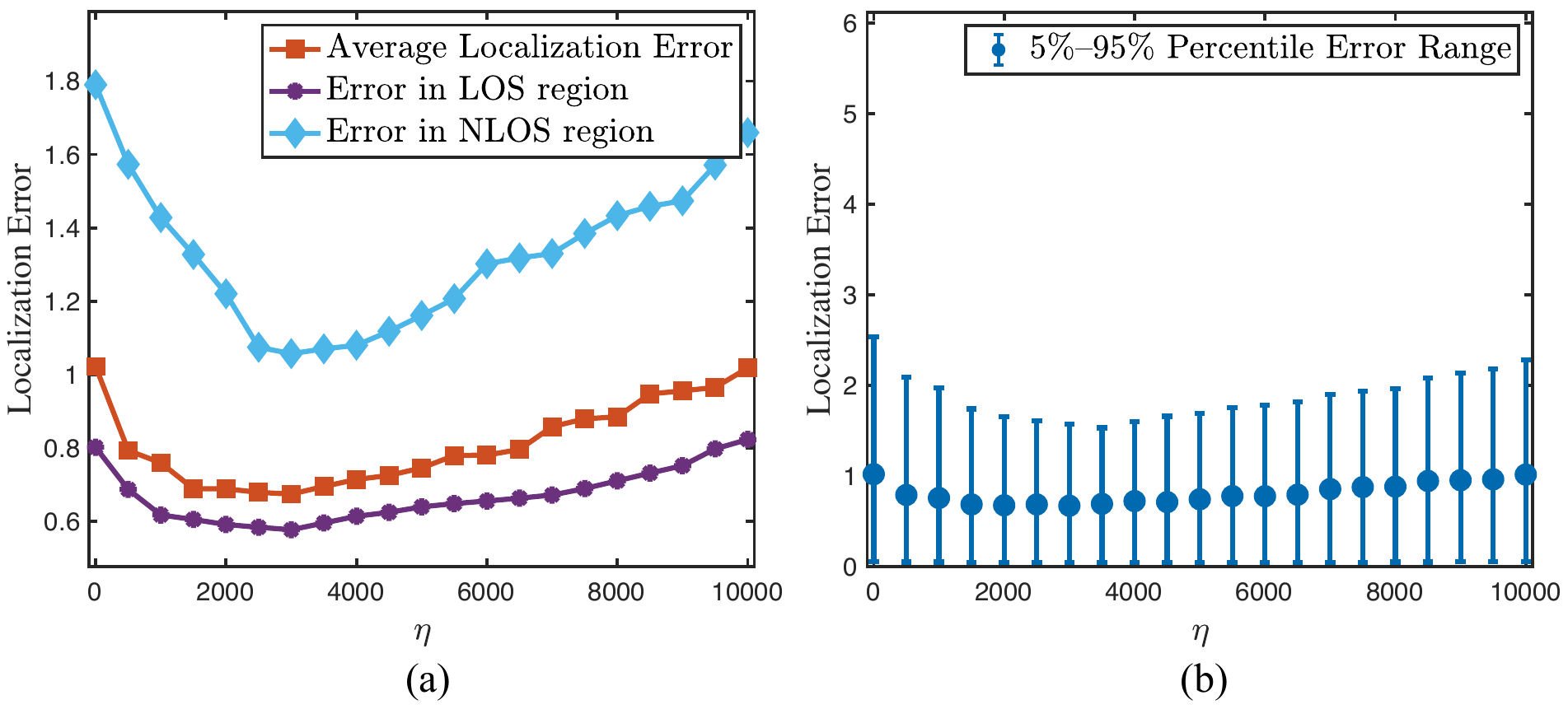}\vspace{-0.1in}
\caption{(a) The relationship between average localization error $E_{\mathrm{loc}}$
and the spatial regularization parameter $\eta$. (b) Average localization
error along with minimum and maximum errors as the parameter $\eta$.
\label{fig:errorVSeta}}
\vspace{-0.1in}
\end{figure}

As shown in Table \ref{tab:traj-performance}, the proposed method
achieves the best overall performance, with an average localization
error of just 0.68 meters, significantly outperforming all baselines.
Among the benchmarks, the time-delay-based method TDoAL performs poorly
across all scenarios, with an average error of 4.72 meters. The RSS-based
WCL method also yields unsatisfactory results, exhibiting an average
error of 3.55 meters, primarily due to the strong fluctuations in
signal strength caused by environmental factors. The angle-based AoDL
approach delivers slightly better results but still fails to provide
reliable localization, as angle information in NLOS regions cannot
ensure accurate positioning. The hybrid AoD-TDoA method HAT demonstrates
improved performance over both AoDL and TDoAL by combining angular
and delay information, achieving an average error of 2.97 meters.
However, it still suffers in NLOS environments, where multipath-induced
angular deviations significantly degrade its accuracy, resulting in
an error of 3.98 meters in the NLOS region. While CC demonstrates
moderately improved performance compared to HAT, it remains fundamentally
constrained in NLOS-dominated environments due to unresolved multipath
interference limitations. In contrast, the superior accuracy of our
proposed method stems from its integrated exploitation of power, angle,
and delay information, enabling a more holistic characterization of
the multipath propagation process. This joint modeling allows the
system to better distinguish between \ac{los} and \ac{nlos} scenarios
and enhances localization robustness in complex environments. In addition,
the proposed method with $\eta=0$ demonstrates that incorporating
spatial regularization constraints from the trajectory optimization
framework results in a 33.3\% reduction in error (decreasing from
1.02 to 0.68 meters), thereby quantitatively validating the necessity
of geometric consistency enforcement in trajectory inference.

Figure \ref{fig:errorVSeta} presents the average localization error
as a function of the regularization parameter $\eta$. When $\eta=0$,
i.e., without using the regularization term, the error reaches its
maximum. As $\eta$ increases, the error gradually decreases, reaching
its minimum value of 0.68 when $\eta=3000$. However, further increasing
$\eta$ leads to a rise in error. Nonetheless, any $\eta>0$ leads
to an improvement in localization accuracy due to the regularization
term. The effectiveness of the regularization term is particularly
pronounced in the NLOS regions, where it significantly reduces the
estimation error. This is also evidenced by the reduction in the maximum
error as $\eta$ increases (for $\eta<3000$), since the maximum error
mainly originates from NLOS regions.

\begin{figure}[t]
\centering{}\includegraphics[width=1\columnwidth]{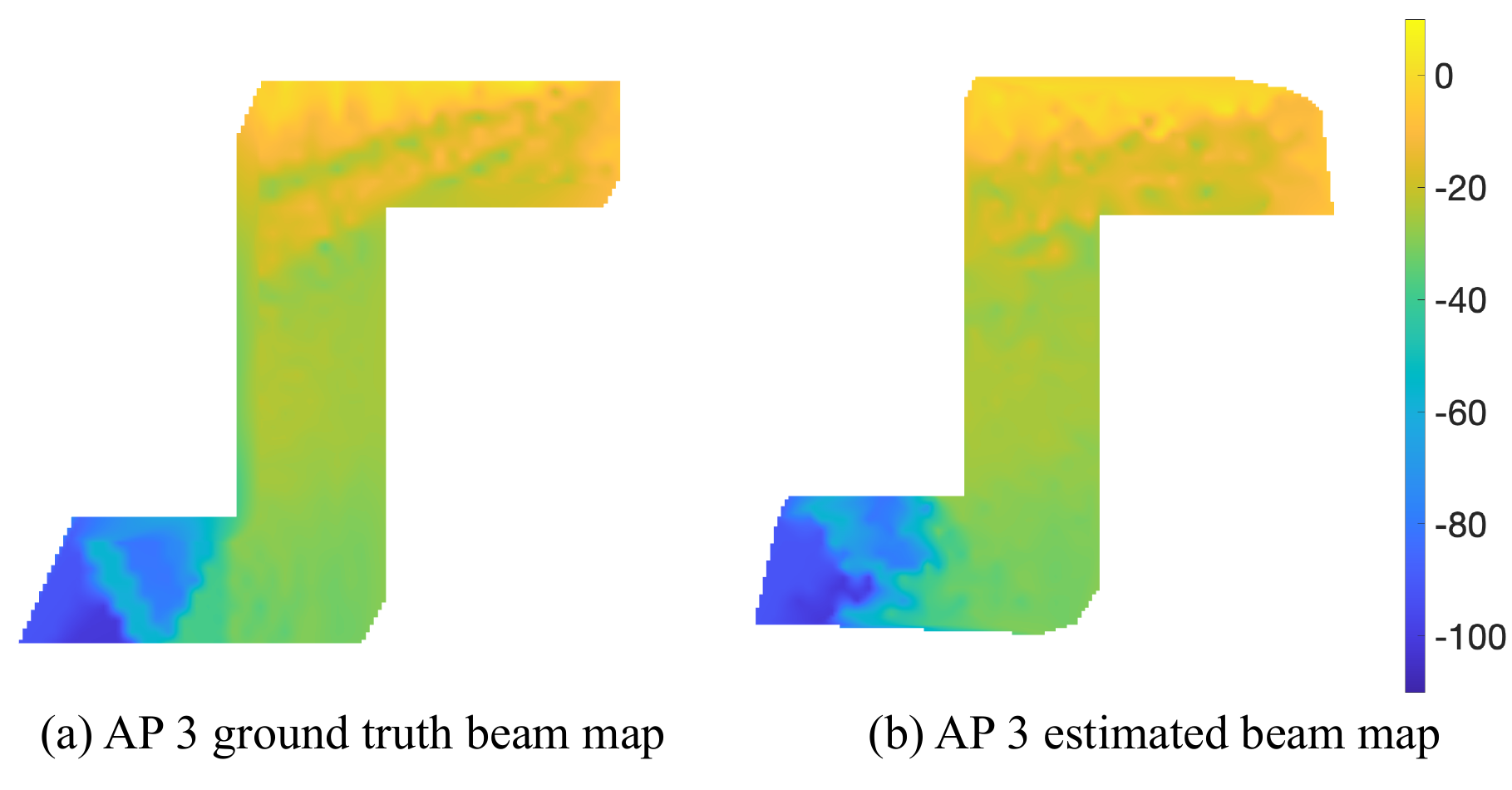}\vspace{-0.2in}
\caption{(a) The ground truth beam map in dB (relative value) and (b) the estimated
map. \label{fig:CSIPredition}}
\vspace{-0.22in}
\end{figure}

Figure \ref{fig:CSIPredition} shows the beam map in dB for a particular
beam. The proposed method is able to accurately reconstruct the beam
map. We define the relative error as $E_{\mathrm{map}}=\frac{1}{NQ}\sum|\frac{e_{t,q}-\hat{e}_{t,q}}{e_{t,q}}|,$
where $\hat{e}_{t,q}$ denotes the estimated beam energy at the ground
truth location, based on the beam energy map constructed from the
estimated locations and their corresponding beam energy values. The
average relative reconstruction error is 3.27 \%.

\vspace{-0.09in}

\section{Conclusion}

\label{sec:Conclusion}

This paper presented a blind radio map construction framework that
infers user trajectories from indoor MIMO-OFDM channel measurements
without requiring explicit location information. Theoretically, a
quasi-specular environment model was developed, and a spatial continuity
theorem under NLOS conditions was established, leading to a CSI-distance
metric proportional to the corresponding physical distance. For rectilinear
trajectories under Poisson-distributed AP deployments, it was shown
that the CRLB of localization error vanishes asymptotically, even
with poor angular resolution---demonstrating the theoretical feasibility
of blind localization. Building on these results, a spatially regularized
Bayesian inference framework was formulated to jointly estimate channel
features, LOS/NLOS conditions and user trajectories. Experiments on
a ray-tracing dataset validated the practical effectiveness of the
approach, achieving an average localization error of 0.68 m and abeam
map reconstruction error of 3.3\%. 


\appendices{}

\section{Proof of Theorem \ref{thm:Spatial-Continuity}}

\label{sec:Proof-of-Spatial-conti}

From \eqref{eq:channel-model-single-antenna}, we have
\begin{align}
	\mathbb{E}\{h_{1}^{(m)}h_{2}^{(m)*}\} & =\mathbb{E}\Big\{\sum_{l=1}^{L}\kappa_{1}^{(l)}\kappa_{2}^{(l)*}e^{-jm\frac{2\pi}{M}B(\tau_{1}^{(l)}-\tau_{2}^{(l)})}\nonumber \\
	& \qquad+\sum_{l=1}^{L}\sum_{l'\neq l}\kappa_{1}^{(l)}\kappa_{2}^{(l')*}e^{-jm\frac{2\pi}{M}B(\tau_{1}^{(l)}-\tau_{2}^{(l)})}\Big\}\nonumber \\
	& =C(d)\cdot L\cdot\mathbb{E}\Big\{ e^{-jm\frac{2\pi}{M}B(\tau_{1}^{(l)}-\tau_{2}^{(l)})}\Big\}\label{eq:Expetation_h1_h2-eq1}
\end{align}
where the second equality is due to the zero-mean \ac{iid} assumption
of the path gains, \emph{i.e.}, $\mathbb{E}\{\kappa_{1}^{(l)}\kappa_{2}^{(l')*}\}=0$
for $l\neq l'$, and the correlation assumption $\mathbb{E}\{\kappa_{1}^{(l)}\kappa_{2}^{(l)*}\}=C(d)$
for the small movement.

For the $l$th path that comes from the mirror image $A_{l}\in\mathcal{A}^{1}=\mathcal{A}^{2}$,
denote the movement direction $\mathbf{p}_{2}-\mathbf{p}_{1}$ relative
to the $l$th arriving at $\mathbf{p}_{1}$ as $\theta'_{l}$. It
is natural to consider that $\theta'_{l}$ is uniformly distributed
in $[-\pi,\pi)$. As a result, for the $l$th path, the delays satisfy
$\tau_{2}^{(l)}-\tau_{1}^{(l)}=\frac{d}{c}\cos(\theta'_{l})$. Denoting
$\omega=B\frac{d}{c}$ for simplification, \eqref{eq:Expetation_h1_h2-eq1}
becomes
\begin{align}
	\mathbb{E}\{h_{1}^{(m)}h_{2}^{(m)*}\} & =C(d)\cdot L\cdot\mathbb{E}\Big\{ e^{-jm\frac{2\pi}{M}\omega\cos(\theta_{l}')}\Big\}\nonumber \\
	& =C(d)L\int_{-\pi}^{\pi}e^{-jm\frac{2\pi}{M}\omega\cos(\theta_{l}')}\frac{1}{2\pi}d\theta_{l}'\nonumber \\
	& =C(d)L\cdot J_{0}(m\frac{2\pi}{M}\omega)\label{eq:Expectation_h1_h2-eq2}
\end{align}
where $J_{0}(x)$ is the zero-th order Bessel function of the first
kind {[}Formula 9.1.21, \cite{AbrMil:B65}{]}.

Substituting \eqref{eq:Expectation_h1_h2-eq2} to \eqref{eq:Ru} for
computing the \ac{idft} of \eqref{eq:Expectation_h1_h2-eq2} yields
\begin{align}
	R(u) & =C(d)L\cdot\frac{1}{M}\sum_{m=0}^{M-1}J_{0}(m\frac{2\pi}{M}\omega)e^{jm\frac{2\pi}{M}u}\label{eq:Ru-Riemann-sum}\\
	& \approx C(d)L\cdot\int_{0}^{1}J_{0}(2\pi f\omega)e^{j2\pi fu}df\label{eq:Ru_Riemann-integral}
\end{align}
for large $M$, where \eqref{eq:Ru-Riemann-sum} becomes a Riemann
sum for ``frequency bin'' $f_{m}=(\frac{m}{M},\frac{m+1}{M})$ with
interval $\frac{1}{M}$, and hence, asymptotically for $M\to\infty$,
$f_{m}$ becomes $f\in(0,1)$, $\frac{1}{M}$ becomes $df$, and the
sum is approximated by the integral.

The integral can be converted into an inverse continuous-time Fourier
transform for a function $J_{0}(2\pi f\omega)\text{rect}(f)$, where
$\text{rect}(f)$ is a rectangle function that acts as a window with
$\text{rect}(f)=1$ for $f\in[0,1]$ and $\text{rect}(f)=0$ otherwise.
If we denote $R_{J}(u)=\mathcal{F}^{-1}\{J_{0}(2\pi f\omega)\}$ as
the inverse Fourier transform of $J_{0}(2\pi f\omega)$ and $R_{r}(u)=\mathcal{F}^{-1}\{\text{rect}(f)\}$,
then according to the convolution property of Fourier transform, we
have $\mathcal{F}\{R_{J}(u)*R_{r}(u)\}=J_{0}(2\pi f\omega)\text{rect}(f)$,
where $\mathcal{F}\{\cdot\}$ denotes the Fourier transform and the
convolution is defined as
\[
R_{J}(u)*R_{r}(u)=\int_{-\infty}^{\infty}R_{J}(u-t)R_{r}(t)dt.
\]
It follows that $R(u)\approx C(d)L\cdot R_{J}(u)*R_{r}(u)$.

The inverse Fourier transform can be computed as
\begin{align*}
	R_{J}(u) & =\int_{-\infty}^{\infty}J_{0}(2\pi f\omega)e^{j2\pi fu}df\\
	& =2\int_{0}^{\infty}J_{0}(2\pi f\omega)\cos(2\pi fu)df\\
	& =\begin{cases}
		\begin{array}{l}
			\frac{1}{\pi\sqrt{\omega^{2}-u^{2}}}\\
			0
		\end{array} & \begin{array}{l}
			|u|<\omega\\
			|u|\geq\omega
	\end{array}\end{cases}
\end{align*}
where the second equality is due to the fact that the Bessel function
$J_{0}(2\pi f\omega)$ is real and even, and the third equality follows
{[}Formula 6.671, \cite{GraIzr:B14}{]}. In addition,
\begin{align*}
	R_{r}(u) & =\int_{0}^{1}e^{j2\pi fu}df=e^{j\pi u}\frac{\sin(\pi u)}{\pi u}=e^{j\pi u}\text{sinc}(u).
\end{align*}

Therefore, we have
\[
R(u)\approx C(d)L\bigg(\frac{\mathbb{I}\{|u|<\omega\}}{\pi\sqrt{\omega^{2}-u^{2}}}\bigg)*\bigg(e^{j\pi u}\text{sinc}(u)\bigg).
\]

It is observed that $R_{J}(u)=\frac{\mathbb{I}\{|u|<\omega\}}{\pi\sqrt{\omega^{2}-u^{2}}}$
is a U-shape function that has its peak at $u\to\omega_{-}$ where
the $R_{J}(u)$ approaches $\infty$. In addition, $\text{sinc}(u)$
is a pulse with ripples and peaked at $u=0$. Thus, the convolution
roughly shifts the pulse $\text{sinc}(u)$ to $u\approx\omega$, and
thus, $|R(u)|$ is peaked at $u\approx\omega=\frac{B}{c}d$, which
can be easily verified by numerical plots.

\section{Proof of Proposition \ref{prop:LB-F-x}}

\label{sec:Proof-of-limited-x}

From (\ref{eq:F-Txtil}), the FIM $\mathbf{F}_{T,x}$, as the upper
diagonal block in $\mathbf{F}_{T,\psi}$, can be expressed as
\begin{align}
	\mathbf{F}_{T,x} & =\sum_{t,q}\frac{1}{\sigma_{\theta}^{2}d_{t,q}^{4}(\mathbf{x},\mathbf{v})}\Big(\lVert\bm{l}_{q}(\mathbf{x})+t\mathbf{v}\rVert^{2}\mathbf{I}\label{eq:F-Tv}\\
	& \qquad-(\bm{l}_{q}(\mathbf{x})+t\mathbf{v})(\bm{l}_{q}(\mathbf{x})+t\mathbf{v})^{\text{T}}\Big)\nonumber 
\end{align}

In the following text, we simplify the notation by writing $\bm{l}_{q}(\mathbf{x})$
as $\bm{l}_{q}$, $d_{t,q}(\mathbf{x},\mathbf{v})$ as $d_{t,q}$,
and $\phi(\mathbf{x}+t\mathbf{v},\mathbf{o}_{q})$ as $\phi_{t,q}$.
\begin{lem}
	\label{prop:CRLB-theta} For a narrow‑band, far‑field source illuminating
	an $N_{\mathrm{t}}$-element uniform linear array, the single‑snapshot
	CRLB for the angle of departure $\theta$ is
	\[
	\mathrm{CRLB}(\hat{\theta})=\frac{d^{2}}{G_{1}N_{\mathrm{t}}(N_{\mathrm{t}}^{2}-1)\cos^{2}\phi}
	\]
	where $G_{1}$ is a constant that depends on the antenna configuration
	with $G_{1}/d^{2}\propto\mathrm{SNR}$, $d$ denotes the distance
	between the transmitter and the receiver, and $\phi$ is the angle
	between the transmitter and the receiver.
\end{lem}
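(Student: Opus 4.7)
The plan is to derive this standard single‑snapshot ULA angle‑estimation bound by following the textbook Fisher‑information recipe: set up the observation model, build the joint FIM for the angle and the unknown complex amplitude, then obtain the effective Fisher information for $\phi$ via the Schur complement, and finally collect all antenna‑ and SNR‑dependent factors into $G_1$.

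First, I would write the single‑snapshot received signal at the $N_{\mathrm t}$-element ULA as $\mathbf{y}=\alpha\,\mathbf{a}(\phi)+\mathbf{n}$, where $\mathbf{n}\sim\mathcal{CN}(\mathbf{0},\sigma_{\mathrm n}^{2}\mathbf{I})$, $|\alpha|^{2}\propto P/d^{2}$ absorbs the (unknown) transmitted amplitude together with the far‑field free‑space path‑loss $1/d^{2}$, and $\mathbf{a}(\phi)$ is the steering vector in \eqref{eq:steering}. The unknowns are the scalar angle $\phi$ and the complex nuisance $\alpha$; since the derivation concerns the CRLB on $\phi$ only, the $\alpha$-direction must be projected out of the FIM.

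Next, I would compute the derivative of the steering vector componentwise. With $a_{n}(\phi)=e^{-j2\pi n\Delta\sin\phi/\lambda}$ for $n=0,\dots,N_{\mathrm t}-1$, one has $\dot a_{n}(\phi)=-j\,2\pi n\Delta\cos\phi/\lambda\cdot a_{n}(\phi)$. Standard results for complex Gaussian observations give the joint Fisher information in closed form, and applying the Schur complement to eliminate $\alpha$ produces the well‑known expression
\begin{equation*}
F_{\phi}=\frac{2|\alpha|^{2}}{\sigma_{\mathrm n}^{2}}\Bigl(\dot{\mathbf{a}}^{\mathrm H}\dot{\mathbf{a}}-\frac{|\dot{\mathbf{a}}^{\mathrm H}\mathbf{a}|^{2}}{\mathbf{a}^{\mathrm H}\mathbf{a}}\Bigr).
\end{equation*}
Substituting the derivative above, the angle‑dependent factor $\cos^{2}\phi$ pulls out, and the remaining quantity is the centered second moment of the array indices, namely $\sum_{n=0}^{N_{\mathrm t}-1}n^{2}-\tfrac{1}{N_{\mathrm t}}\bigl(\sum_{n=0}^{N_{\mathrm t}-1}n\bigr)^{2}=\tfrac{N_{\mathrm t}(N_{\mathrm t}^{2}-1)}{12}$. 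This is the step where the characteristic $N_{\mathrm t}(N_{\mathrm t}^{2}-1)$ factor appears and where I must be careful with the nuisance projection: omitting the second term inside $F_{\phi}$ would give the wrong $\sum n^{2}$ scaling and lose the $-1$ in $N_{\mathrm t}^{2}-1$.

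Finally, I would collect constants. Writing $|\alpha|^{2}=P_{0}/d^{2}$ for some transmit‑power constant $P_{0}$ and defining $G_{1}\triangleq\tfrac{P_{0}}{6\sigma_{\mathrm n}^{2}}(2\pi\Delta/\lambda)^{2}$, inversion of $F_{\phi}$ yields
\begin{equation*}
\mathrm{CRLB}(\hat\phi)=F_{\phi}^{-1}=\frac{d^{2}}{G_{1}\,N_{\mathrm t}(N_{\mathrm t}^{2}-1)\cos^{2}\phi},
\end{equation*}
which matches the claimed form; by construction $G_{1}$ depends only on the array geometry and transmit parameters, and $G_{1}/d^{2}\propto|\alpha|^{2}/\sigma_{\mathrm n}^{2}=\mathrm{SNR}$, as stated. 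The main obstacle is bookkeeping rather than mathematics: the nuisance‑parameter projection must be done correctly to recover $N_{\mathrm t}(N_{\mathrm t}^{2}-1)$ (not $N_{\mathrm t}^{3}$), and the path‑loss/SNR constants must be consolidated consistently so that the $d^{2}$ factor survives cleanly in the numerator and $G_{1}$ carries the antenna‑aperture dependence $(2\pi\Delta/\lambda)^{2}$ expected from array theory.
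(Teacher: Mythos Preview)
Your derivation is correct and follows the standard Stoica--Nehorai route for the single-snapshot DOA CRLB with an unknown complex amplitude: the Schur-complement (equivalently, projection onto $\mathbf{a}^\perp$) step is exactly what produces the centered index variance $N_{\mathrm t}(N_{\mathrm t}^2-1)/12$, and your bookkeeping of the $\cos^2\phi$ and $d^2$ factors is fine. Note, however, that the paper does not actually supply its own proof of this lemma---it is stated as a known auxiliary result inside Appendix~B and then used directly to bound $\sigma_\theta^2$---so there is no ``paper's approach'' to compare against beyond observing that the authors treat it as a standard array-processing fact. One minor point: the paper's subsequent formulas (e.g., the bound $\sigma_\theta^2\geq d_{\min}^2\sigma_{\mathrm n}^2/(G_1N_{\mathrm t}(N_{\mathrm t}^2-1))$ and the constants in Propositions~1--2 and Theorem~2) carry an explicit $\sigma_{\mathrm n}^2$ alongside $G_1$, whereas your definition folds $\sigma_{\mathrm n}^2$ into $G_1$; both conventions satisfy the lemma's stated property $G_1/d^2\propto\mathrm{SNR}$, but if you want to match the paper's downstream usage verbatim you would instead set $G_1\triangleq\tfrac{P_0}{6}(2\pi\Delta/\lambda)^2$ and leave $\sigma_{\mathrm n}^2$ outside.
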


For each $q$, we have $\cos^{2}\phi_{t,q}\leq1$ for all $t$ with
equality achieved when $\phi_{t,q}=0$. Denote $d_{\mathrm{min}}=\min_{q}\{d_{\mathrm{min},q}\}$.
Thus , using Lemma \ref{prop:CRLB-theta}, the angle variance between
the \ac{ap} and the user location is bounded by
\[
\frac{d_{t,q}^{2}\sigma_{\mathrm{n}}^{2}}{G_{1}N_{t}(N_{t}^{2}-1)\cos^{2}\phi_{t,q}}\geq\frac{d_{\mathrm{min}}^{2}\sigma_{\mathrm{n}}^{2}}{G_{1}N_{\mathrm{t}}(N_{\mathrm{t}}^{2}-1)}
\]
for any $t,q$.

Thus, we have
\[
\sigma_{\theta}^{2}\geq\frac{d_{\mathrm{min}}^{2}\sigma_{\mathrm{n}}^{2}}{G_{1}N_{\mathrm{t}}(N_{\mathrm{t}}^{2}-1)}
\]
for any $t,q$.

Thus, $\mathbf{F}_{T,x}$ can be written as
\begin{align}
	\mathbf{F}_{T,x} & \preceq C_{0}\mathbf{A}_{T,x}\label{eq:A-Tx}
\end{align}
where
\begin{align*}
	\mathbf{A}_{T,x} & =\sum_{q=1}^{Q}[s_{T,q}^{(0)}(\bm{l}_{q}^{\mathrm{T}}\bm{l}_{q}\mathbf{I}-\bm{l}_{q}\bm{l}_{q}^{\mathrm{T}})+s_{T,q}^{(1)}(2\mathbf{v}^{\mathrm{T}}\bm{l}_{q}\mathbf{I}\\
	& \qquad-\bm{l}_{q}\mathbf{v}^{\mathrm{T}}-\mathbf{v}\bm{l}_{q}^{\mathrm{T}})+s_{T,q}^{(2)}(\|\mathbf{v}\|^{2}\mathbf{I}-\mathbf{v}\mathbf{v}^{\mathrm{T}})]
\end{align*}
and $C_{0}=\frac{N_{\mathrm{t}}(N_{\mathrm{t}}^{2}-1)\sigma_{n}^{2}}{G_{1}d_{\mathrm{min}}^{2}}$,
$s_{T,q}^{(n)}=\sum_{t=1}^{T}\frac{t^{n}}{d_{t,q}^{4}(\mathbf{x},\mathbf{v})}$.
\begin{lem}
	\label{lem:PSD}Assume that the trajectory $\mathbf{x}_{t}$ does
	not pass any of the \ac{ap} location $\mathbf{o}_{q}$. Then, $\mathbf{A}_{T,x}\prec\mathbf{A}_{T+1,x}$
	if at least two vectors in $\{\bm{l}_{1},\bm{l}_{2},\dots,\bm{l}_{Q},\mathbf{v}\}$
	are linear independent.
\end{lem}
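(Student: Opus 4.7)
The plan is to derive a clean closed form for the increment $\mathbf{A}_{T+1,x}-\mathbf{A}_{T,x}$ and then argue strict positive definiteness by pinning down its null space under the stated linear independence hypothesis.

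First, I would observe that $s_{T+1,q}^{(n)}-s_{T,q}^{(n)}=(T+1)^{n}/d_{T+1,q}^{4}$, so only the $t=T+1$ contribution survives in the difference. Setting $\mathbf{u}_{q}=\bm{l}_{q}+(T+1)\mathbf{v}=\mathbf{x}_{T+1}-\mathbf{o}_{q}$ and recognizing that the three bracket terms with coefficients $1$, $T+1$, and $(T+1)^{2}$ collapse via the identity
\[
\bigl(\bm{l}_{q}^{\mathrm{T}}\bm{l}_{q}\mathbf{I}-\bm{l}_{q}\bm{l}_{q}^{\mathrm{T}}\bigr)+(T+1)\bigl(2\mathbf{v}^{\mathrm{T}}\bm{l}_{q}\mathbf{I}-\bm{l}_{q}\mathbf{v}^{\mathrm{T}}-\mathbf{v}\bm{l}_{q}^{\mathrm{T}}\bigr)+(T+1)^{2}\bigl(\|\mathbf{v}\|^{2}\mathbf{I}-\mathbf{v}\mathbf{v}^{\mathrm{T}}\bigr)=\|\mathbf{u}_{q}\|^{2}\mathbf{I}-\mathbf{u}_{q}\mathbf{u}_{q}^{\mathrm{T}},
\]
the increment reduces to
\[
\mathbf{A}_{T+1,x}-\mathbf{A}_{T,x}=\sum_{q=1}^{Q}\frac{1}{\|\mathbf{u}_{q}\|^{4}}\bigl(\|\mathbf{u}_{q}\|^{2}\mathbf{I}-\mathbf{u}_{q}\mathbf{u}_{q}^{\mathrm{T}}\bigr),
\]
which is simply the $t=T+1$ slice of the single-slot contribution in \eqref{eq:F-Tv}. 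Because the trajectory avoids every $\mathbf{o}_{q}$ by assumption, $\|\mathbf{u}_{q}\|>0$, and each summand is rank-one positive semidefinite in $\mathbb{R}^{2}$ with null space $\mathrm{span}(\mathbf{u}_{q})$ and nonzero eigenvalue $1/\|\mathbf{u}_{q}\|^{2}$.

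Second, for any nonzero $\mathbf{z}\in\mathbb{R}^{2}$, the quadratic form evaluates to
\[
\mathbf{z}^{\mathrm{T}}(\mathbf{A}_{T+1,x}-\mathbf{A}_{T,x})\mathbf{z}=\|\mathbf{z}\|^{2}\sum_{q=1}^{Q}\frac{\sin^{2}\angle(\mathbf{u}_{q},\mathbf{z})}{\|\mathbf{u}_{q}\|^{2}},
\]
which is strictly positive unless every $\mathbf{u}_{q}$ is parallel to $\mathbf{z}$. Strict positive definiteness of the increment therefore reduces to excluding the configuration in which all $\mathbf{u}_{q}$ lie in a common one-dimensional subspace.

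Third, I would close the loop using the hypothesis. Suppose $\mathbf{u}_{q}=c_{q}\mathbf{w}$ for every $q$ and some common direction $\mathbf{w}$. Then $\bm{l}_{q}=c_{q}\mathbf{w}-(T+1)\mathbf{v}$. If $\mathbf{v}\parallel\mathbf{w}$, each $\bm{l}_{q}$ is itself parallel to $\mathbf{w}$, so the entire set $\{\bm{l}_{1},\ldots,\bm{l}_{Q},\mathbf{v}\}$ collapses into $\mathrm{span}(\mathbf{w})$, contradicting the existence of two linearly independent vectors in it. The remaining case $\mathbf{v}\not\parallel\mathbf{w}$ is the main obstacle: it corresponds to the accidental geometric event in which $\mathbf{x}_{T+1}$ happens to be collinear with all APs, and the hypothesis alone does not exclude it at a single $T$. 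I would resolve this either by strengthening the hypothesis (e.g., requiring $\mathbf{v}$ together with two of the $\bm{l}_{q}$'s to be in general position) or by noting that the collinearity condition $(T+1)\bigl(\bm{l}_{q}-\bm{l}_{q'}\bigr)\times\mathbf{v}=\bm{l}_{q'}\times\bm{l}_{q}$ can be satisfied by at most one integer $T$ under the linear independence hypothesis, so strict monotonicity of $\bar{\Delta}_{T,x}$ in Proposition~\ref{prop:LB-F-x} still follows from the strict PD of the increment for all but at most finitely many $T$.
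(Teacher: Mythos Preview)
Your approach coincides with the paper's: write the increment as $\sum_{q}\mathbf{P}_{q}$ with $\mathbf{P}_{q}=\|\mathbf{u}_{q}\|^{-2}\bigl(\mathbf{I}-\mathbf{u}_{q}\mathbf{u}_{q}^{\mathrm{T}}/\|\mathbf{u}_{q}\|^{2}\bigr)$, note each $\mathbf{P}_{q}$ is PSD with null space $\mathrm{span}(\mathbf{u}_{q})$, and then argue that the $\mathbf{u}_{q}$ cannot all share a common direction. Your first two steps are correct and match the paper exactly.

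Where you depart, you are in fact more careful than the paper. From $\bm{l}_{q}=k_{q}\mathbf{u}-(T+1)\mathbf{v}$ the paper asserts that ``all vectors $\bm{l}_{q}$ and $\mathbf{v}$ are linear combinations of $\mathbf{u}$'', which does not follow unless $\mathbf{v}\parallel\mathbf{u}$ is already known. You correctly flag the $\mathbf{v}\not\parallel\mathbf{w}$ case as unhandled by the stated hypothesis: indeed, for $Q=1$ the increment is always rank one regardless of whether $\bm{l}_{1}$ and $\mathbf{v}$ are independent, and for $Q\geq2$ the collinearity conditions $\mathbf{u}_{q}\parallel\mathbf{u}_{q'}$ can be met at isolated values of $T$. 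Your proposed remedies---either a mild strengthening of the hypothesis, or the observation that such degeneracies occur for at most finitely many $T$, which is all that Proposition~\ref{prop:LB-F-x} actually needs---are sound ways to close a gap that the paper's own proof leaves open.
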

\begin{proof}
	
	Denote $\mathbf{d}_{t,q}\triangleq\bm{l}_{q}(\mathbf{x})+t\mathbf{v}=[d_{t,q,1},d_{t,q,2}]^{\mathrm{T}}$
	as the direction from the $q$th \ac{ap} to the user position at
	time slot $t$. Assume that the trajectory $\mathbf{x}_{t}$ does
	not pass any of the \ac{ap} location $\mathbf{o}_{q}$. Thus, $\bm{l}_{q}=\mathbf{x}-\mathbf{o}_{q}\neq\bm{0}$
	for all $q$, $d_{t,q}>0$ for all $t,q$.
	
	The matrix $\mathbf{A}_{T,x}$ can be expressed as
	\begin{align*}
		\mathbf{A}_{T,x} & =\sum_{q=1}^{Q}\sum_{t=1}^{T}\frac{1}{\|\mathbf{d}_{t,q}\|_{2}^{4}}\Big(\|\mathbf{d}_{t,q}\|_{2}^{2}\mathbf{I}-\mathbf{d}_{t,q}\mathbf{d}_{t,q}^{\text{T}}\Big).
	\end{align*}
	
	The incremental matrix is defined as: 
	\begin{align*}
		\Delta\mathbf{A}_{T} & =\mathbf{A}_{T+1,x}-\mathbf{A}_{T,x}=\sum_{q=1}^{Q}\mathbf{P}_{q}
	\end{align*}
	where $\mathbf{d}_{T+1,q}=\bm{l}_{q}+(T+1)\mathbf{v}$ and 
	\[
	\mathbf{P}_{q}=\frac{1}{\|\mathbf{d}_{T+1,q}\|^{2}}\Big(\mathbf{I}-\frac{\mathbf{d}_{T+1,q}\mathbf{d}_{T+1,q}^{\top}}{\|\mathbf{d}_{T+1,q}\|^{2}}\Big).
	\]
	
	The matrix $\mathbf{P}_{q}$ is P.S.D. because of the Cauchy-Schwarz
	inequality. For any non-zero vector $\bm{u}\in\mathbb{R}^{2}$: 
	\begin{equation}
		\bm{u}^{\top}\mathbf{P}_{q}\bm{u}=\frac{1}{\|\mathbf{d}_{T+1,q}\|^{2}}\Big(\|\bm{u}\|^{2}-\frac{(\bm{u}^{\top}\mathbf{d}_{T+1,q})^{2}}{\|\mathbf{d}_{T+1,q}\|^{2}}\Big)\geq0.
	\end{equation}
	Equality holds if and only if $\bm{u}\parallel\mathbf{d}_{T+1,q}$,
	where $\parallel$ represents parallelism. Thus, $\Delta\mathbf{A}_{T}$
	is P.S.D..
	
	Therefore, $\mathbf{A}_{T+1,x}=\mathbf{A}_{T,x}$ holds if and only
	if $\mathbf{u}\parallel\mathbf{d}_{T+1,q}$ for all $q$. Recall $\mathbf{d}_{T+1,q}=\bm{l}_{q}+(T+1)\mathbf{v}$,
	$\mathbf{v}\neq0$, and $\frac{1}{d_{t,q}^{4}}>0$ for all $q$. If
	$\mathbf{u}\parallel\mathbf{d}_{T+1,q}$ for all $q$, then $\bm{l}_{q}=k_{q}\mathbf{u}-(T+1)\mathbf{v}$
	for any $q$, where all vectors $\bm{l}_{q}$ and $\mathbf{v}$ are
	linear combinations of $\mathbf{u}$. This implies that the set $\{\bm{l}_{1},\bm{l}_{2},\dots,\bm{l}_{Q},\mathbf{v}\}$
	spans a subspace of rank at most 1, as all vectors are collinear with
	$\mathbf{u}$. Thus, if at least two vectors in $\{\bm{l}_{1},\bm{l}_{2},\dots,\bm{l}_{Q},\mathbf{v}\}$
	are linearly independent, there exists some $q$ such that $\mathbf{u}\nparallel\mathbf{d}_{T+1,q}$.
	This ensures that $\mathbf{A}_{T+1,x}\neq\mathbf{A}_{T,x}$. Consequently,
	we have shown that $\Delta\mathbf{A}_{T}$ is positive definite when
	at least two vectors from $\{\bm{l}_{1},\bm{l}_{2},\dots,\bm{l}_{Q},\mathbf{v}\}$
	are linearly independent.
	
\end{proof}

Lemma \ref{lem:PSD} proves that $\mathbf{A}_{T,x}\prec\mathbf{A}_{T+1,x}$
if at least two vectors in $\{\bm{l}_{1},\bm{l}_{2},\dots,\bm{l}_{Q},\mathbf{v}\}$
are linear independent. Therefore, $\mathrm{tr}\{\mathbf{F}_{T,x}^{-1}\}\geq\bar{\Delta}_{T,x}\triangleq\mathrm{tr}\{(C_{0}\mathbf{A}_{T,x})^{-1}\}$.

Similar to , we have
\begin{lem}
	(Lemma 8 in \cite{ZheChe:J25})\label{lem:A-Tx-lam}Suppose $d_{\min,q}>0$.
	The sequence $s_{T,q}^{(n)}$ is bounded for $n<3$ and divergent
	as $s_{T,q}^{(n)}\rightarrow\infty$ as $T\rightarrow\infty$ for
	$n\geq3$. In addition, $s_{T,q}^{(n+1)}/s_{T,q}^{(n)}\rightarrow\infty$
	as $T\rightarrow\infty$ for $n>3$.
\end{lem}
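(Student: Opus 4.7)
The plan is to reduce the claim to p-series comparison via a two-sided control on $d_{t,q}$. First I would pin down the large-$t$ behavior of $d_{t,q}(\mathbf{x},\mathbf{v}) = \|\bm{l}_q + t\mathbf{v}\|$. By the triangle inequality, for all $t \geq 1$,
\[
\bigl|\,t\|\mathbf{v}\| - \|\bm{l}_q\|\,\bigr| \;\leq\; d_{t,q} \;\leq\; t\|\mathbf{v}\| + \|\bm{l}_q\|,
\]
so there exist positive constants $a_q < b_q$ and an index $t_0$ (depending on $\bm{l}_q,\mathbf{v}$) with $a_q t \leq d_{t,q} \leq b_q t$ for $t \geq t_0$. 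Combined with the hypothesis $d_{\min,q} > 0$, the generic summand $t^n/d_{t,q}^4$ is sandwiched by a constant multiple of $t^{n-4}$ for large $t$ and is bounded by $t^n/d_{\min,q}^4$ for small $t$.

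Next I would settle parts (a) and (b) by direct comparison with $\sum_t t^{n-4}$. For integer $n < 3$, the exponent $n-4 \leq -2$ gives a convergent p-series, so the tail $\sum_{t \geq t_0} t^n/d_{t,q}^4$ is bounded; adding the finite head (bounded by $d_{\min,q}^{-4}\sum_{t=1}^{t_0-1} t^n$) yields overall boundedness of $s_{T,q}^{(n)}$. For $n = 3$, the comparison gives $s_{T,q}^{(3)} \gtrsim \sum_{t=t_0}^T t^{-1} \to \infty$, and for $n > 3$ the divergence is even faster.

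For the ratio claim (c), my primary argument would be the Stolz--Cesaro theorem. Since each $s_{T,q}^{(n)}$ is strictly increasing in $T$ and diverges for $n > 3$ by part (b), I can evaluate
\[
\frac{s_{T+1,q}^{(n+1)} - s_{T,q}^{(n+1)}}{s_{T+1,q}^{(n)} - s_{T,q}^{(n)}} \;=\; \frac{(T+1)^{n+1}/d_{T+1,q}^{4}}{(T+1)^{n}/d_{T+1,q}^{4}} \;=\; T+1 \;\longrightarrow\; \infty,
\]
and Stolz--Cesaro transfers this to the ratio of the cumulative sums. As a sanity check, a direct asymptotic route gives $s_{T,q}^{(n)} \sim T^{n-3}/((n-3)\|\mathbf{v}\|^4)$, so the ratio behaves like $(n-3)T/(n-2) \to \infty$.

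The main obstacle, if one wants the sharp asymptotic $s_{T,q}^{(n)} \sim T^{n-3}/((n-3)\|\mathbf{v}\|^4)$ rather than merely the qualitative dichotomy, is controlling the accumulated error in the expansion $d_{t,q}^{-4} = t^{-4}\|\mathbf{v}\|^{-4}(1 + O(1/t))$ when summed, since the $O(1/t)$ correction survives at one order below the leading term. This would require an explicit Laurent-type expansion and a careful bound on the remainder sum. Fortunately the Stolz--Cesaro route avoids this entirely, only needing monotone divergence of the denominator and the trivial ratio-of-differences calculation above, which is why I would use it as the primary argument and reserve the asymptotic expansion for cross-validation.
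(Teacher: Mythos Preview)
Your argument is correct. The triangle-inequality sandwich $a_q t \le d_{t,q} \le b_q t$ for large $t$ (implicitly using $\mathbf{v}\neq\mathbf{0}$, which holds in the rectilinear-mobility setup) reduces boundedness/divergence to the $p$-series $\sum t^{n-4}$, and the Stolz--Ces\`aro step for the ratio is clean since the denominator $s_{T,q}^{(n)}$ is strictly increasing and divergent for $n>3$. One minor remark: you don't need the integer restriction in part~(a), since $n<3$ already gives $n-4<-1$ and hence convergence for real $n$ as well.

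As for comparison with the paper: there is nothing to compare. The paper does not prove this lemma at all---it is imported verbatim as Lemma~8 of \cite{ZheChe:J25} and used as a black box. Your self-contained derivation is therefore strictly more than what the paper provides, and the route you take (comparison test plus Stolz--Ces\`aro) is the standard elementary one.
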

Using Lemma \ref{lem:A-Tx-lam}, since $s_{T,q}^{(n)}$ are bounded
for $n<3$ and $Q$ is finite, we have $\mathbf{A}_{T,x}$ bounded.
Thus, $\bar{\Delta}_{T,x}$ converges to a strictly positive number
as $T\rightarrow\infty$.

\section{Proof of Proposition \ref{prop:LB-F-v}}

\label{sec:Proof-of-limited-v}

From (\ref{eq:F-Txtil}), the FIM $\mathbf{F}_{T,v}$, as the lower
diagonal block in $\mathbf{F}_{T,\psi}$ can be expressed as
\begin{align*}
	\mathbf{F}_{T,v} & =\sum_{t,q}\frac{t^{2}}{\sigma_{\theta}^{2}d_{t,q}^{4}(\mathbf{x},\mathbf{v})}\Big(\lVert\bm{l}_{q}(\mathbf{x})+t\mathbf{v}\rVert^{2}\mathbf{I}\\
	& \qquad-(\bm{l}_{q}(\mathbf{x})+t\mathbf{v})(\bm{l}_{q}(\mathbf{x})+t\mathbf{v})^{\text{T}}\Big)
\end{align*}

Similar to (\ref{eq:A-Tx}), we have,
\begin{align}
	\mathbf{F}_{T,v} & \preceq C_{0}\mathbf{A}_{T,v}\label{eq:A-Tv}
\end{align}
where
\begin{align*}
	\mathbf{A}_{T,v} & =\sum_{q=1}^{Q}\Big[s_{T,q}^{(2)}(\bm{l}_{q}^{\mathrm{T}}\bm{l}_{q}\mathbf{I}-\bm{l}_{q}\bm{l}_{q}^{\mathrm{T}})+\sum_{q=1}^{Q}s_{T,q}^{(3)}(2\mathbf{v}^{\mathrm{T}}\bm{l}_{q}\mathbf{I}\\
	& \qquad-\bm{l}_{q}\mathbf{v}^{\mathrm{T}}-\mathbf{v}\bm{l}_{q}^{\mathrm{T}})+\sum_{q=1}^{Q}s_{T,q}^{(4)}(\|\mathbf{v}\|^{2}\mathbf{I}-\mathbf{v}\mathbf{v}^{\mathrm{T}})\Big]
\end{align*}

\begin{lem}
	\label{lem:A-Tv-lam}The eigenvalues of $\mathbf{A}_{T,v}$ satisfies
	\[
	\lambda_{\mathrm{min}}(\mathbf{A}_{T,v})\rightarrow\sum_{q=1}^{Q}s_{T,q}^{(2)}\|\mathbf{P}_{v}^{\bot}\bm{l}_{q}\|^{2}
	\]
	and $\lambda_{\mathrm{max}}(\mathbf{A}_{T,v})\rightarrow\sum_{q=1}^{Q}s_{T,q}^{(4)}\|\mathbf{v}\|^{2}$
	as $T\rightarrow\infty$, where $\mathbf{P}_{v}^{\bot}=\mathbf{I}-\mathbf{vv^{T}/\|v}\|^{2}$.
\end{lem}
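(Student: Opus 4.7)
My plan is to express $\mathbf{A}_{T,v}$ as a $2\times 2$ matrix in an orthonormal basis aligned with $\mathbf{v}$, then read off its two eigenvalues from the closed-form symmetric $2\times 2$ formula, using the growth rates of $s_{T,q}^{(n)}$ to isolate leading terms.

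Let $\mathbf{u}_1=\mathbf{v}/\|\mathbf{v}\|$ and let $\mathbf{u}_2$ be a unit vector orthogonal to $\mathbf{u}_1$, so that $\mathbf{P}_v^{\bot}=\mathbf{u}_2\mathbf{u}_2^{\mathrm{T}}$. Decompose each $\bm{l}_q=\alpha_q\mathbf{u}_1+\beta_q\mathbf{u}_2$, where $\alpha_q=\mathbf{v}^{\mathrm{T}}\bm{l}_q/\|\mathbf{v}\|$ and $|\beta_q|=\|\mathbf{P}_v^{\bot}\bm{l}_q\|$. A direct substitution expresses each of the three rank-deficient building blocks appearing in the definition of $\mathbf{A}_{T,v}$ as a $2\times 2$ matrix in this basis. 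Collecting the contributions from all $q$, one obtains
\[
\mathbf{A}_{T,v}=\begin{bmatrix} a & b \\ b & c \end{bmatrix},
\]
with $a=\sum_{q}\beta_q^{2}s_{T,q}^{(2)}$, $b=-\sum_{q}\beta_q\bigl[\alpha_q s_{T,q}^{(2)}+\|\mathbf{v}\|s_{T,q}^{(3)}\bigr]$, and $c=\sum_{q}\bigl[\alpha_q^{2}s_{T,q}^{(2)}+2\alpha_q\|\mathbf{v}\|s_{T,q}^{(3)}+\|\mathbf{v}\|^{2}s_{T,q}^{(4)}\bigr]$. Note that $a$ is already exactly the target expression $\sum_{q}s_{T,q}^{(2)}\|\mathbf{P}_v^{\bot}\bm{l}_q\|^{2}$, and the leading term of $c$ is exactly $\|\mathbf{v}\|^{2}\sum_q s_{T,q}^{(4)}$.

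Next I would establish growth rates. Using the linear lower bound $d_{t,q}\geq\rho t$ (introduced in Proposition \ref{prop:LB-F-v}) together with the upper bound $d_{t,q}\leq\|\bm{l}_q\|+t\|\mathbf{v}\|$, the partial sums $s_{T,q}^{(n)}=\sum_{t=1}^{T}t^{n}/d_{t,q}^{4}$ satisfy $s_{T,q}^{(2)}=O(1)$, $s_{T,q}^{(3)}=O(\log T)$, and $s_{T,q}^{(4)}=\Theta(T)$ (the divergence parts are also asserted in Lemma \ref{lem:A-Tx-lam}). Hence $a=O(1)$, $b=O(\log T)$, and $c=\Theta(T)$. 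Applying the $2\times 2$ eigenvalue identity $\lambda_\pm=(a+c)/2\pm\tfrac{1}{2}\sqrt{(c-a)^{2}+4b^{2}}$ and Taylor-expanding $\sqrt{(c-a)^{2}+4b^{2}}=(c-a)+2b^{2}/(c-a)+O\!\bigl(b^{4}/(c-a)^{3}\bigr)$, which is legitimate because $b^{2}/(c-a)^{2}=O((\log T)^{2}/T^{2})\to 0$, yields $\lambda_{\min}=a-b^{2}/(c-a)+o(1)\to\sum_{q}s_{T,q}^{(2)}\|\mathbf{P}_v^{\bot}\bm{l}_q\|^{2}$ and $\lambda_{\max}=c+o(c)\sim\|\mathbf{v}\|^{2}\sum_{q}s_{T,q}^{(4)}$, as claimed.

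The main obstacle is controlling the cross term $b$. The naive Cauchy--Schwarz bound $(s_{T,q}^{(3)})^{2}\leq s_{T,q}^{(2)}s_{T,q}^{(4)}$ only produces $b^{2}/c=O(1)$, which is too crude to force $\lambda_{\min}\to a$. The sharper rate $s_{T,q}^{(3)}=O(\log T)$ is essential and crucially relies on the geometric fact that the trajectory escapes every bounded region containing the APs at positive speed, so that $d_{t,q}\gtrsim t$; this is precisely the role played by the parameter $\rho$ introduced in Proposition \ref{prop:LB-F-v}.
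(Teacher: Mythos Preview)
Your argument is correct and follows essentially the same route as the paper: both work in the orthonormal basis $\{\mathbf{v}/\|\mathbf{v}\|,\mathbf{v}_\perp/\|\mathbf{v}_\perp\|\}$ and identify the diagonal entry $a=\mathbf{u}_1^{\mathrm{T}}\mathbf{A}_{T,v}\mathbf{u}_1$ as the asymptotic smallest eigenvalue. The paper proceeds heuristically---it observes that the $s^{(4)}$ term dominates, concludes that the large eigenvector is approximately $\mathbf{v}_\perp/\|\mathbf{v}_\perp\|$, and then simply evaluates the Rayleigh quotient at $\tilde{\mathbf{u}}=\mathbf{v}/\|\mathbf{v}\|$ to read off $\lambda_{\min}$, without ever controlling the off-diagonal error. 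Your version is more quantitative: by isolating the off-diagonal entry $b$ and invoking the sharper rate $s_{T,q}^{(3)}=O(\log T)$ (which the paper's Lemma~\ref{lem:A-Tx-lam} does not supply), you show $b^{2}/(c-a)=O((\log T)^{2}/T)\to 0$ and hence $\lambda_{\min}=a+o(1)$ via the exact $2\times 2$ eigenvalue formula. This actually tightens the paper's argument, since without bounding $b$ one cannot rigorously pass from ``approximate eigenvector'' to ``approximate eigenvalue''.
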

\begin{proof}
	
	Since the term $\sum_{q=1}^{Q}s_{t,q}^{(4)}\mathbf{v}\mathbf{v}^{T}$
	dominates $\mathbf{A}_{T,v}$ for a sufficiently large $T$, for a
	sufficiently large $T$, the larger eigenvalue satisfies
	\begin{align}
		\lambda_{\mathrm{max}}(\mathbf{A}_{T,v}) & =\underset{\|\mathbf{u}\|=1}{\mathrm{max}}\:\mathbf{u}^{\mathrm{T}}\mathbf{A}_{T,v}\mathbf{u}\nonumber \\
		& =\underset{\|\mathbf{u}\|=1}{\mathrm{max}}\:\mathbf{u}^{\mathrm{T}}\Big[\sum_{q=1}^{Q}s_{T,q}^{(2)}(\bm{l}_{q}^{\mathrm{T}}\bm{l}_{q}\mathbf{I}-\bm{l}_{q}\bm{l}_{q}^{\mathrm{T}})\nonumber \\
		& \qquad+\sum_{q=1}^{Q}s_{T,q}^{(3)}(2\mathbf{v}^{\mathrm{T}}\bm{l}_{q}\mathbf{I}-\bm{l}_{q}\mathbf{v}^{\mathrm{T}}-\mathbf{v}\bm{l}_{q}^{\mathrm{T}})\nonumber \\
		& \qquad+\sum_{q=1}^{Q}s_{T,q}^{(4)}(\|\mathbf{v}\|^{2}\mathbf{I}-\mathbf{v}\mathbf{v}^{\mathrm{T}})\Big]\mathbf{u}\nonumber \\
		& \approx\underset{\|\mathbf{u}\|=1}{\mathrm{max}}\:\sum_{q=1}^{Q}s_{T,q}^{(4)}\cdot\mathbf{u}^{\mathrm{T}}(\|\mathbf{v}\|^{2}\mathbf{I}-\mathbf{v}\mathbf{v}^{\mathrm{T}})\mathbf{u}\label{eq:prob-uv}
	\end{align}
	where when $\mathbf{u}$ is orthogonal to $\mathbf{v}$ (i.e., $\mathbf{u}^{\mathrm{T}}\mathbf{v}=0$),
	the expression $\|\mathbf{v}\|^{2}-(\mathbf{u}^{\mathrm{T}}\mathbf{v})^{2}=\|\mathbf{v}\|^{2}$
	attains its maximum value, that is $\mathbf{u}=\mathbf{v}_{\perp}/\|\mathbf{v}_{\perp}\|$,
	we have $\mathbf{v}_{\perp}^{\mathrm{T}}\mathbf{v}_{\perp}=1$, $\mathbf{v}^{\mathrm{T}}\mathbf{v}_{\perp}=0$,
	$\mathbf{v}_{\perp}^{\mathrm{T}}\mathbf{v}=0$ and $\lambda_{\mathrm{max}}(\mathbf{A}_{T,v})\rightarrow\sum_{q=1}^{Q}s_{T,q}^{(4)}\|\mathbf{v}\|^{2}$.
	As a result, asymptotically, the larger eigenvector is $\mathbf{v}_{\perp}/\|\mathbf{v}_{\perp}\|\in\mathbb{R}^{2}$,
	and hence, the smaller eigenvector is denoted as $\tilde{\mathbf{u}}=\frac{\mathbf{v}}{\|\mathbf{v}\|_{2}}$.
	
	Consequentially, we have
	\begin{align*}
		& \lambda_{\mathrm{min}}(\mathbf{A}_{T,v})\\
		& =\tilde{\mathbf{u}}^{\mathrm{T}}\Big[\sum_{q=1}^{Q}s_{T,q}^{(2)}(\bm{l}_{q}^{\mathrm{T}}\bm{l}_{q}\mathbf{I}-\bm{l}_{q}\bm{l}_{q}^{\mathrm{T}})+\sum_{q=1}^{Q}s_{T,q}^{(3)}(2\mathbf{v}^{\mathrm{T}}\bm{l}_{q}\mathbf{I}\\
		& \qquad-\bm{l}_{q}\mathbf{v}^{\mathrm{T}}-\mathbf{v}\bm{l}_{q}^{\mathrm{T}})+\sum_{q=1}^{Q}s_{T,q}^{(4)}(\|\mathbf{v}\|^{2}\mathbf{I}-\mathbf{v}\mathbf{v}^{\mathrm{T}})\Big]\tilde{\mathbf{u}}\\
		& =\sum_{q=1}^{Q}s_{T,q}^{(2)}\|\mathbf{P}_{v}^{\bot}\bm{l}_{q}\|^{2}.
	\end{align*}
	where $\mathbf{P}_{v}^{\bot}=\mathbf{I}-\mathbf{vv^{T}/\|v}\|^{2}$
	is orthogonal projector, and $\mathbf{P}_{v}^{\bot}\bm{l}_{q}$ is
	to project the vector $\bm{l}_{q}$ onto the null space spanned by
	$\mathbf{v}_{\bot}$ of $\mathbf{v}$.
	
\end{proof}

From $\mathbf{F}_{T,v}\preceq C_{0}\mathbf{A}_{T,v}$, since both
$\mathbf{F}_{T,v}$ and $\mathbf{A}_{T,v}$ are P.S.D., we have
\begin{equation}
	\lambda_{\text{min}}(\mathbf{F}_{T,v})\leq C_{0}\lambda_{\text{min}}(\mathbf{A}_{T,v}),\lambda_{\text{max}}(\mathbf{F}_{T,v})\leq C_{0}\lambda_{\text{max}}(\mathbf{A}_{T,v}).\label{eq:lam-min-ineq}
\end{equation}

Denoting the \ac{evd} of $\mathbf{F}_{T,\mathbf{v}}$ as $\mathbf{F}_{T,\mathbf{v}}=\mathbf{u}_{T,v}\bm{\varLambda}_{T,v}\mathbf{u}_{T,v}^{-1}$,
where
\[
\bm{\varLambda}_{T,v}=\left[\begin{array}{cc}
	\lambda_{\mathrm{max}}(\mathbf{A}_{T,\mathbf{v}}) & 0\\
	0 & \lambda_{\mathrm{min}}(\mathbf{A}_{T,\mathbf{v}})
\end{array}\right],
\]
we have
\begin{align}
	\mathrm{tr}\{\mathbf{F}_{T,v}^{-1}\} & =\mathrm{tr}\{(\mathbf{u}_{T,v}\bm{\varLambda}_{T,v}\mathbf{u}_{T,v}^{-1})^{-1}\}=\mathrm{tr}\{\mathbf{u}_{T,v}\bm{\varLambda}_{T,v}^{-1}\mathbf{u}_{T,v}^{-1}\}\nonumber \\
	& =\lambda_{\mathrm{max}}^{-1}(\mathbf{F}_{T,v})+\lambda_{\mathrm{min}}^{-1}(\mathbf{F}_{T,v})\nonumber \\
	& \geq\frac{1}{C_{0}}\lambda_{\mathrm{max}}^{-1}(\mathbf{A}_{T,v})+\frac{1}{C_{0}}\lambda_{\mathrm{min}}^{-1}(\mathbf{A}_{T,v})\label{eq:tfF-ineq1}\\
	& \geq\frac{1}{C_{0}}\lambda_{\mathrm{min}}^{-1}(\mathbf{A}_{T,v})\triangleq\bar{\Delta}_{T,v}\label{eq:tfF-ineq2}
\end{align}
where (\ref{eq:tfF-ineq1}) is due to (\ref{eq:lam-min-ineq}) with
equality achieved when $d_{t,q}^{2}=d_{\mathrm{min}}^{2}$, $\cos^{2}\phi_{t,q}=1$
and (\ref{eq:tfF-ineq2}) is due to the fact that $C_{0}\lambda_{\mathrm{max}}(\mathbf{A}_{T,\mathbf{v}})>0$
and equality can be asymptotically achieved at large $T$ as $\lambda_{\mathrm{\max}}^{-1}(\mathbf{A}_{T,v})\rightarrow1/(\sum_{q=1}^{Q}s_{T,q}^{(4)}\|\mathbf{v}\|^{2})$
which converges to zero.

Using Lemma \ref{lem:A-Tv-lam}, as $T\rightarrow\infty$, we have
\begin{align*}
	\bar{\Delta}_{T,v} & \rightarrow C_{v}=\Bigg(C_{0}\sum_{q=1}^{Q}s_{T,q}^{(2)}\|\mathbf{P}_{v}^{\bot}\bm{l}_{q}\|^{2}\Bigg)^{-1},
\end{align*}
which is strictly positive, where $\mathbf{P}_{v}^{\bot}=\mathbf{I}-\mathbf{vv^{T}/\|v}\|^{2}$
is orthogonal projector, and $\mathbf{P}_{v}^{\bot}\bm{l}_{q}$ is
to project the vector $\bm{l}_{q}$ onto the null space spanned by
$\mathbf{v}_{\bot}$ of $\mathbf{v}$. $s_{T,q}^{(2)}$ is bounded
as stated in Lemma \ref{lem:A-Tx-lam}. Suppose $\rho>0$ is sufficiently
small such that $d_{t,q}>\rho t$ for all $t\geq1$, we have
\begin{align*}
	s_{\infty,q}^{(2)} & =\lim_{T\to\infty}\sum_{t=1}^{T}\frac{t^{2}}{d_{t,q}^{4}}<\lim_{T\to\infty}\sum_{t=1}^{T}\frac{t^{2}}{(\rho t)^{4}}\\
	& =\frac{1}{\rho^{4}}\lim_{T\to\infty}\sum_{t=1}^{T}\frac{1}{t^{2}}\approx\frac{\pi^{2}}{6\rho^{4}}.
\end{align*}
Thus, the element $s_{\infty,q}^{(2)}$ is upper bounded by $\frac{\pi^{2}}{6\rho^{4}}$.

\section{Proof of Theorem \ref{thm:LB-F-xv-un}}

\label{sec:Proof-of-unlimited-x}

Consider the \ac{crlb} $B(\mathbf{x})$. Denote $\mathcal{Q}_{t}=\{q|d_{t,q}\leq R\}$
as the set of \acpl{ap} that are within a range of $R$ from the
mobile user at time slot $t$. Based on the FIM $\mathbf{F}_{T,\psi}$
in (\ref{eq:F-Tv}), we have
\begin{align}
	\mathbf{F}_{T,x} & =\sum_{t=1}^{T}\mathbb{E}\bigg\{\sum_{q\in\mathcal{Q}_{t}}\frac{1}{\sigma_{\theta}^{2}d_{t,q}^{4}(\mathbf{x},\mathbf{v})}(\lVert\bm{l}_{q}(\mathbf{x})+t\mathbf{v}\rVert^{2}\mathbf{I}\label{eq:ineq-F-tx}\\
	& \qquad-(\bm{l}_{q}(\mathbf{x})+t\mathbf{v})(\bm{l}_{q}(\mathbf{x})+t\mathbf{v})^{\text{T}})\Bigg\}\nonumber 
\end{align}
For each term in the sum, the geometry-dependent Fisher information
per sample is less than or equal to that from the uniform minimum-variance
case. Thus, using Lemma \ref{prop:CRLB-theta}, we have
\begin{align*}
	\mathbf{F}_{T,x} & \succeq\frac{G_{1}N_{\mathrm{t}}(N_{\mathrm{t}}^{2}-1)}{\sigma_{\mathrm{n}}^{2}}\sum_{t=1}^{T}\mathbb{E}\bigg\{\sum_{q\in\mathcal{Q}_{t}}\frac{d_{t,q,1}^{2}}{d_{t,q}^{8}}(\lVert\bm{l}_{q}(\mathbf{x})+t\mathbf{v}\rVert^{2}\mathbf{I}\\
	& \qquad-(\bm{l}_{q}(\mathbf{x})+t\mathbf{v})(\bm{l}_{q}(\mathbf{x})+t\mathbf{v})^{\text{T}})\Bigg\}\\
	& =\tilde{C}_{0}\tilde{\mathbf{A}}_{T,x}
\end{align*}
where $\tilde{C}_{0}=G_{1}N_{\mathrm{t}}(N_{\mathrm{t}}^{2}-1)/\sigma_{\mathrm{n}}^{2}$
and
\begin{align}
	\tilde{\mathbf{A}}_{T,x} & =\sum_{t=1}^{T}\mathbb{E}\Bigg\{\sum_{q\in\mathcal{Q}_{t}}d_{t,q,1}^{2}\frac{\bm{l}_{q}^{\mathrm{T}}\bm{l}_{q}\mathbf{I}-\bm{l}_{q}\bm{l}_{q}^{\mathrm{T}}}{d_{t,q}^{8}}\Bigg\}\nonumber \\
	& \quad\quad+\sum_{t=1}^{T}t\mathbb{E}\Bigg\{\sum_{q\in\mathcal{Q}_{t}}d_{t,q,1}^{2}\frac{2\mathbf{v}^{\mathrm{T}}\bm{l}_{q}\mathbf{I}-\bm{l}_{q}\mathbf{v}^{\mathrm{T}}-\mathbf{v}\bm{l}_{q}^{\mathrm{T}}}{d_{t,q}^{8}}\Bigg\}\nonumber \\
	& \quad\quad+\sum_{t=1}^{T}t^{2}\mathbb{E}\Bigg\{\sum_{q\in\mathcal{Q}_{t}}d_{t,q,1}^{2}\frac{\|\mathbf{v}\|^{2}\mathbf{I}-\mathbf{v}\mathbf{v}^{\mathrm{T}}}{d_{t,q}^{8}}\Bigg\}.\label{eq:A-tilde-T-x}
\end{align}

Since $\mathbf{F}_{T,x}$ and $\tilde{\mathbf{A}}_{T,x}$ are $2\times2$
symmetric and positive semi-definite, their eigenvalues are real and
non-negative. From $\mathbf{F}_{T,x}\succeq\tilde{C}_{0}\tilde{\mathbf{A}}_{T,x}$,
we have
\[
\lambda_{\text{min}}(\mathbf{F}_{T,x})\geq\tilde{C}_{0}\lambda_{\text{min}}(\tilde{\mathbf{A}}_{T,x}),\lambda_{\text{max}}(\mathbf{F}_{T,x})\geq\tilde{C}_{0}\lambda_{\text{max}}(\tilde{\mathbf{A}}_{T,x}).
\]

Since $\mathrm{tr}\{\mathbf{F}_{T,x}^{-1}\}=\lambda_{\mathrm{max}}^{-1}(\mathbf{F}_{T,x})+\lambda_{\mathrm{min}}^{-1}(\mathbf{F}_{T,x})$,
we have
\begin{equation}
	\mathrm{tr}\{\mathbf{F}_{T,x}^{-1}\}\leq2\lambda_{\mathrm{min}}^{-1}(\mathbf{F}_{T,x})\leq2\left(\tilde{C}_{0}\lambda_{\mathrm{min}}(\tilde{\mathbf{A}}_{T,x})\right)^{-1}\triangleq\tilde{\Delta}_{T,x}.\label{eq:tr-F-ineq}
\end{equation}

\begin{lem}
	\label{lem:A-Tx-lam-1}Assume that $d_{t,q}\geq r_{0}$ \textup{for
		all $t$ and $q$}. The eigenvalue of $\tilde{\mathbf{A}}_{T,x}$
	satisfies
	\[
	\frac{1}{T}\lambda_{\mathrm{min}}(\tilde{\mathbf{A}}_{T,x})\rightarrow\frac{1}{8}\pi\kappa(\frac{1}{r_{0}^{2}}-\frac{1}{R^{2}})
	\]
	as $T\to\infty$.
\end{lem}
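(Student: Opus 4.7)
The plan is to collapse the three $t^{0},t^{1},t^{2}$ blocks of $\tilde{\mathbf{A}}_{T,x}$ in \eqref{eq:A-tilde-T-x} back into a single outer-product form, apply Campbell's theorem for the PPP to reduce the expectation to a deterministic annular integral whose value is independent of $t$ by stationarity, evaluate the resulting $2\times 2$ matrix in polar coordinates, and finally read off its minimum eigenvalue.

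Concretely, I would introduce $\mathbf{z}_{t,q}=\mathbf{o}_{q}-\mathbf{x}_{t}$ so that $\bm{l}_{q}(\mathbf{x})+t\mathbf{v}=-\mathbf{z}_{t,q}$. Then the bracket under each expectation in \eqref{eq:A-tilde-T-x} telescopes back to $\|\bm{l}_{q}+t\mathbf{v}\|^{2}\mathbf{I}-(\bm{l}_{q}+t\mathbf{v})(\bm{l}_{q}+t\mathbf{v})^{\mathrm{T}}=\|\mathbf{z}_{t,q}\|^{2}\mathbf{I}-\mathbf{z}_{t,q}\mathbf{z}_{t,q}^{\mathrm{T}}$, giving
\[
\tilde{\mathbf{A}}_{T,x}=\sum_{t=1}^{T}\mathbb{E}\Big\{\sum_{q\in\mathcal{Q}_{t}}\frac{d_{t,q,1}^{2}}{d_{t,q}^{8}}\bigl(\|\mathbf{z}_{t,q}\|^{2}\mathbf{I}-\mathbf{z}_{t,q}\mathbf{z}_{t,q}^{\mathrm{T}}\bigr)\Big\}.
\]
Since the AP positions form a homogeneous PPP of intensity $\kappa$ and only APs with $r_{0}\leq\|\mathbf{z}_{t,q}\|\leq R$ are coupled to the user, Campbell's theorem converts the inner expectation into $\kappa\int_{r_{0}\leq\|\mathbf{z}\|\leq R}(\,\cdot\,)\,d\mathbf{z}$, which by translation invariance of the PPP is identical for every $t$.

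Switching to polar coordinates $\mathbf{z}=r(\cos\theta,\sin\theta)$ factors the integrand as $(\cos^{2}\theta/r^{4})(\mathbf{I}-\hat{\mathbf{z}}\hat{\mathbf{z}}^{\mathrm{T}})$. The radial part yields $\int_{r_{0}}^{R}r^{-3}\,dr=\tfrac{1}{2}(r_{0}^{-2}-R^{-2})$, while the elementary moments $\int_{0}^{2\pi}\cos^{2}\theta\sin^{2}\theta\,d\theta=\pi/4$, $\int_{0}^{2\pi}\cos^{4}\theta\,d\theta=3\pi/4$, together with vanishing of the odd off-diagonal term, reduce the angular part to $\mathrm{diag}(\pi/4,3\pi/4)$. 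The $t$-independence of each summand then produces
\[
\frac{1}{T}\tilde{\mathbf{A}}_{T,x}\longrightarrow\frac{\pi\kappa}{8}\Bigl(\frac{1}{r_{0}^{2}}-\frac{1}{R^{2}}\Bigr)\mathrm{diag}(1,3),
\]
whose minimum eigenvalue is exactly $\tfrac{\pi\kappa}{8}(r_{0}^{-2}-R^{-2})$, as claimed.

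The step I expect to demand the most care is the PPP reduction: the active set $\mathcal{Q}_{t}$ varies with $t$ yet is drawn from a single underlying point process, so the per-$t$ expectations share a common value not through independence across slots but through stationarity of the PPP. A secondary subtlety is that $d_{t,q,1}$ is the component of $\mathbf{d}_{t,q}$ along a fixed (array-broadside) direction; this anisotropy is precisely what produces the unequal $(1,3)$ diagonal, and one must verify that the smaller eigenvalue is controlled by the broadside axis rather than by its orthogonal complement, so that the stated asymptote is tight rather than conservative.
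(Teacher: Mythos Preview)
Your proof is correct and takes a cleaner route than the paper's. The paper argues that the $t^{2}$ block in \eqref{eq:A-tilde-T-x} dominates, identifies the asymptotic minimum eigenvector as $\mathbf{v}/\|\mathbf{v}\|$, and then evaluates the quadratic form $\tilde{\mathbf{u}}^{\mathrm{T}}\tilde{\mathbf{A}}_{T,x}\tilde{\mathbf{u}}$ at that vector, obtaining the expectation $\mathbb{E}\bigl\{\sum_{q}d_{t,q,1}^{2}d_{t,q}^{-8}\|\mathbf{P}_{v}^{\perp}\bm{l}_{q}\|^{2}\bigr\}$, which it reduces to the same annular integral. Your approach instead collapses the $t^{0},t^{1},t^{2}$ expansion back into a single function of $\mathbf{d}_{t,q}$; because this function depends on the AP only through its position relative to $\mathbf{x}_{t}$, Campbell's theorem plus stationarity make the per-slot contribution \emph{exactly} the constant matrix $M=\tfrac{\pi\kappa}{8}(r_{0}^{-2}-R^{-2})\,\mathrm{diag}(1,3)$, so $\tfrac{1}{T}\tilde{\mathbf{A}}_{T,x}=M$ for every $T$, not merely in the limit. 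This sidesteps the need to guess the eigenvector and makes the argument fully rigorous---indeed the paper's dominance heuristic is delicate, since the $t^{2}$ contributions hidden in the first two blocks (via $\bm{l}_{q}=\mathbf{d}_{t,q}-t\mathbf{v}$) exactly cancel the explicit third block rather than being subdominant to it. Your route also carries over directly to Lemma~\ref{lem:A-Tx-lam-1-1}: the extra $t^{2}$ weight in $\tilde{\mathbf{A}}_{T,v}$ simply produces $\sum_{t}t^{2}\,M$, yielding the $T(T+1)(2T+1)/6$ scaling immediately.
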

\begin{proof}The term $\sum_{t=1}^{T}t^{2}\mathbb{E}\{\sum_{q\in\mathcal{Q}_{t}}d_{t,q,2}^{2}(\|\mathbf{v}\|^{2}\mathbf{I}-\mathbf{v}\mathbf{v}^{\mathrm{T}})/d_{t,q}^{8}\}$
	in (\ref{eq:A-tilde-T-x}) dominates $\tilde{\mathbf{A}}_{T,x}$ for
	a sufficiently large $T$, because $t^{2}$ increases quadratically.
	Thus, as $T\rightarrow\infty$, the larger eigenvalue satisfies: 
	\begin{align}
		\frac{1}{T}\lambda_{\mathrm{max}}(\tilde{\mathbf{A}}_{T,x}) & =\frac{1}{T}\underset{\|\tilde{\mathbf{u}}\|=1}{\mathrm{max}}\:\tilde{\mathbf{u}}^{\mathrm{T}}\tilde{\mathbf{A}}_{T,x}\tilde{\mathbf{u}}\nonumber \\
		& \rightarrow\underset{\|\tilde{\mathbf{u}}\|=1}{\mathrm{max}}\:\frac{1}{T}\sum_{t=1}^{T}t^{2}\mathbb{E}\Bigg\{\sum_{q\in\mathcal{Q}_{t}}d_{t,q,1}^{2}\frac{1}{d_{t,q}^{8}}\label{eq:prob-uv-1}\\
		& \qquad\times\tilde{\mathbf{u}}^{\mathrm{T}}(\|\mathbf{v}\|^{2}\mathbf{I}-\mathbf{v}\mathbf{v}^{\mathrm{T}})\tilde{\mathbf{u}}\Bigg\}\nonumber 
	\end{align}
	where the solution to \eqref{eq:prob-uv-1} is $\tilde{\mathbf{u}}=\mathbf{v}_{\perp}/\|\mathbf{v}_{\perp}\|_{2}$,
	which satisfies $\mathbf{v}^{\mathrm{T}}\mathbf{v}_{\perp}=0$.
	
	As a result, asymptotically, the larger eigenvector is $\mathbf{v}_{\perp}/\|\mathbf{v}_{\perp}\|_{2}\in\mathbb{R}^{2}$
	and hence, the smaller eigenvector is $\mathbf{u}=\frac{\mathbf{v}}{\|\mathbf{v}\|_{2}}$,
	. Consequently, from (\ref{eq:A-tilde-T-x}), as $T\to\infty$, we
	have: 
	\begin{align}
		& \frac{1}{T}\lambda_{\mathrm{min}}(\tilde{\mathbf{A}}_{T,x})\nonumber \\
		& \rightarrow\frac{1}{T}\sum_{t=1}^{T}\mathbb{E}\left\{ \sum_{q\in\mathcal{Q}_{t}}\frac{d_{t,q,1}^{2}}{d_{t,q}^{8}}\|\mathbf{P}_{v}^{\bot}\bm{l}_{q}\|^{2}\right\} .\label{eq:E-stq0}
	\end{align}
	
	To compute the expectation in (\ref{eq:E-stq0}), we note that as
	the \acpl{ap} follow a Poisson distribution within a radius of $R$
	from the user location $\mathbf{x}_{t}$, the expected number of the
	\acpl{ap} is $\kappa\pi R^{2}$. In addition, given the number of
	the \acpl{ap}, the \acpl{ap} are independently and uniformly distributed.
	As a result, consider a coordinate system with the initial position
	$\mathbf{x}$ as the origin and the direction $\mathbf{v}$ as the
	$x$-axis as stated in Lemma 10 in \cite{ZheChe:J25}, and then, $\mathbf{P}_{v}^{\bot}\bm{l}_{q}$
	is simply to project the vector $\bm{l}_{q}=\mathbf{x}-\mathbf{o}_{q}$
	onto the $y$-axis. Denote $\bm{l}_{q}=(l_{q,x},l_{q,y})$ and it
	follows that $\mathbf{P}_{v}^{\bot}\bm{l}_{q}=l_{q,y}$ and $d_{t,q,1}^{2}=l_{q,x}^{2}$.
	
	We have
	\begin{align}
		& \mathbb{E}\left\{ \sum_{q\in\mathcal{Q}_{t}}\frac{d_{t,q,1}^{2}}{d_{t,q}^{8}}\|\mathbf{P}_{v}^{\bot}\bm{l}_{q}\|^{2}\right\} \nonumber \\
		& =\mathbb{E}\left\{ \frac{l_{q,y}^{2}l_{q,x}^{2}}{(l_{q,x}^{2}+l_{q,y}^{2})^{4}}\right\} \kappa\pi R^{2}\nonumber \\
		& =\kappa\pi R^{2}\frac{1}{\pi R^{2}}\int_{-R}^{R}\int_{-\sqrt{R^{2}-x^{2}}}^{\sqrt{R^{2}-x^{2}}}\frac{x^{2}y^{2}}{(x^{2}+y^{2})^{4}}\,dy\,dx\label{eq:remove-T}\\
		& =\kappa\frac{\pi}{8}(\frac{1}{r_{0}^{2}}-\frac{1}{R^{2}}).\label{eq:E-pl}
	\end{align}
	where (\ref{eq:remove-T}) is due to the prior condition that $d_{t,q}>r_{0}$
	for all $t$ and $q$.
	
	As a result, from (\ref{eq:E-stq0}) and (\ref{eq:E-pl}), we have
	$\frac{1}{T}\lambda_{\mathrm{min}}(\tilde{\mathbf{A}}_{T,x})\rightarrow\frac{1}{8}\kappa\pi(\frac{1}{r_{0}^{2}}-\frac{1}{R^{2}})$
	as $T\to\infty$.\end{proof}

According Lemma \ref{lem:A-Tx-lam-1} and from (\ref{eq:tr-F-ineq}),
we have
\begin{align*}
	T\tilde{\Delta}_{T,x} & \rightarrow\frac{16}{\kappa\pi(r_{0}^{-2}-R^{-2})G_{1}N_{\mathrm{t}}(N_{\mathrm{t}}^{2}-1)}
\end{align*}
as $T\to\infty$.

Consider the \ac{crlb} $B(\mathbf{v})$. Based on the FIM $\mathbf{F}_{T,v}$
in (\ref{eq:F-Tv}), we have
\begin{align}
	\mathbf{F}_{T,v} & =\sum_{t=1}^{T}\mathbb{E}\Big\{\sum_{q\in\mathcal{Q}_{t}}\frac{t^{2}}{\sigma_{\theta,t,q}^{2}d_{t,q}^{4}(\mathbf{x},\mathbf{v})}[\lVert\bm{l}_{q}(\mathbf{x})+t\mathbf{v}\rVert^{2}\mathbf{I}\label{eq:ineq-Ftv}\\
	& -(\bm{l}_{q}(\mathbf{x})+t\mathbf{v})(\bm{l}_{q}(\mathbf{x})+t\mathbf{v})^{\text{T}}]\Big\}\nonumber 
\end{align}
Using Lemma \ref{prop:CRLB-theta}, we have
\begin{align*}
	\mathbf{F}_{T,x} & \succeq\frac{G_{1}N_{\mathrm{t}}(N_{\mathrm{t}}^{2}-1)}{\sigma_{\mathrm{n}}^{2}}\sum_{t=1}^{T}\mathbb{E}\bigg\{\sum_{q\in\mathcal{Q}_{t}}\frac{d_{t,q,1}^{2}t^{2}}{d_{t,q}^{8}}(\lVert\bm{l}_{q}(\mathbf{x})+t\mathbf{v}\rVert^{2}\mathbf{I}\\
	& \qquad-(\bm{l}_{q}(\mathbf{x})+t\mathbf{v})(\bm{l}_{q}(\mathbf{x})+t\mathbf{v})^{\text{T}})\Bigg\}\\
	& =\tilde{C}_{0}\sum_{t=1}^{T}\mathbb{E}\bigg\{\sum_{q\in\mathcal{Q}_{t}}d_{t,q,1}^{2}t^{2}\Big[\frac{\bm{l}_{q}^{\mathrm{T}}\bm{l}_{q}\mathbf{I}-\bm{l}_{q}\bm{l}_{q}^{\mathrm{T}}}{d_{t,q}^{8}}\\
	& +t\frac{2\mathbf{v}^{\mathrm{T}}\bm{l}_{q}\mathbf{I}-\bm{l}_{q}\mathbf{v}^{\mathrm{T}}-\mathbf{v}\bm{l}_{q}^{\mathrm{T}}}{d_{t,q}^{8}}+t^{2}\frac{\|\mathbf{v}\|^{2}\mathbf{I}-\mathbf{v}\mathbf{v}^{\mathrm{T}}}{d_{t,q}^{8}}\Big]\Bigg\}\\
	& =\tilde{C}_{0}\tilde{\mathbf{A}}_{T,v}
\end{align*}
where
\begin{align}
	\tilde{\mathbf{A}}_{T,v} & =\sum_{t=1}^{T}\mathbb{E}\Bigg\{\sum_{q\in\mathcal{Q}_{t}}d_{t,q,1}^{2}t^{2}\frac{\bm{l}_{q}^{\mathrm{T}}\bm{l}_{q}\mathbf{I}-\bm{l}_{q}\bm{l}_{q}^{\mathrm{T}}}{d_{t,q}^{8}}\Bigg\}\nonumber \\
	& \quad\quad+\sum_{t=1}^{T}t\mathbb{E}\Bigg\{\sum_{q\in\mathcal{Q}_{t}}d_{t,q,1}^{2}t^{2}\frac{2\mathbf{v}^{\mathrm{T}}\bm{l}_{q}\mathbf{I}-\bm{l}_{q}\mathbf{v}^{\mathrm{T}}-\mathbf{v}\bm{l}_{q}^{\mathrm{T}}}{d_{t,q}^{8}}\Bigg\}\nonumber \\
	& \quad\quad+\sum_{t=1}^{T}t^{2}\mathbb{E}\Bigg\{\sum_{q\in\mathcal{Q}_{t}}d_{t,q,1}^{2}t^{2}\frac{\|\mathbf{v}\|^{2}\mathbf{I}-\mathbf{v}\mathbf{v}^{\mathrm{T}}}{d_{t,q}^{8}}\Bigg\}.\label{eq:A-tilde-T-v}
\end{align}

Since $\mathbf{F}_{T,v}$ is symmetric and positive semi-definite,
its eigenvalues are real and non-negative. Similar to (\ref{eq:tr-F-ineq}),
we have
\begin{equation}
	\mathrm{tr}\{\mathbf{F}_{T,v}^{-1}\}\leq2\lambda_{\mathrm{min}}^{-1}(\mathbf{F}_{T,v})\leq2[\tilde{C}_{0}\lambda_{\mathrm{min}}(\tilde{\mathbf{A}}_{T,v})]^{-1}\triangleq\tilde{\Delta}_{T,v}.\label{eq:tr-F-ineq-1}
\end{equation}

\begin{lem}
	\label{lem:A-Tx-lam-1-1}Under the same condition in Lemma \ref{lem:A-Tx-lam-1},
	the eigenvalue of $\tilde{\mathbf{A}}_{T,v}$ satisfies
	\[
	\frac{\lambda_{\mathrm{min}}(\tilde{\mathbf{A}}_{T,x})}{T(T+1)(2T+1)}\rightarrow\frac{\pi}{48}\kappa(\frac{1}{r_{0}^{2}}-\frac{1}{R^{2}})
	\]
	as $T\to\infty$.
\end{lem}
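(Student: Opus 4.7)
The proof will closely mirror the argument given for Lemma \ref{lem:A-Tx-lam-1}, with the extra factor of $t^{2}$ in each summand of $\tilde{\mathbf{A}}_{T,v}$ being the only essential change. My plan is, first, to identify the dominant summand in (\ref{eq:A-tilde-T-v}) and read off the asymptotic smallest eigenvector; second, to evaluate the Rayleigh quotient of $\tilde{\mathbf{A}}_{T,v}$ along that direction; and third, to reduce the resulting expression to the spatial expectation already computed in (\ref{eq:E-pl}) multiplied by an elementary power sum in $t$.

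For step one, I would note that in (\ref{eq:A-tilde-T-v}) the three summands carry effective time weights $t^{2}$, $t^{3}$, and $t^{4}$, respectively. The third summand, which is proportional to $\|\mathbf{v}\|^{2}\mathbf{I}-\mathbf{v}\mathbf{v}^{\mathrm{T}}$, therefore dominates as $T\to\infty$. By exactly the same Rayleigh-maximization argument used around (\ref{eq:prob-uv-1}), the leading eigenvector of this dominant rank-one perturbation is $\mathbf{v}_{\perp}/\|\mathbf{v}_{\perp}\|$, so the smallest asymptotic eigenvector of $\tilde{\mathbf{A}}_{T,v}$ is $\tilde{\mathbf{u}}=\mathbf{v}/\|\mathbf{v}\|$.

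For step two, I would substitute $\tilde{\mathbf{u}}=\mathbf{v}/\|\mathbf{v}\|$ into $\tilde{\mathbf{u}}^{\mathrm{T}}\tilde{\mathbf{A}}_{T,v}\tilde{\mathbf{u}}$. A short algebraic check shows that $\tilde{\mathbf{u}}^{\mathrm{T}}(2\mathbf{v}^{\mathrm{T}}\bm{l}_{q}\mathbf{I}-\bm{l}_{q}\mathbf{v}^{\mathrm{T}}-\mathbf{v}\bm{l}_{q}^{\mathrm{T}})\tilde{\mathbf{u}}=0$ and $\tilde{\mathbf{u}}^{\mathrm{T}}(\|\mathbf{v}\|^{2}\mathbf{I}-\mathbf{v}\mathbf{v}^{\mathrm{T}})\tilde{\mathbf{u}}=0$, so only the first summand survives and yields
\[
\lambda_{\min}(\tilde{\mathbf{A}}_{T,v})\;\to\;\sum_{t=1}^{T}t^{2}\,\mathbb{E}\!\left\{\sum_{q\in\mathcal{Q}_{t}}\frac{d_{t,q,1}^{2}}{d_{t,q}^{8}}\,\|\mathbf{P}_{v}^{\bot}\bm{l}_{q}\|^{2}\right\}.
\]
The inner expectation is independent of $t$ and was already evaluated in (\ref{eq:E-pl}) to equal $\tfrac{\pi\kappa}{8}(r_{0}^{-2}-R^{-2})$ under the assumption $d_{t,q}\ge r_{0}$ and the homogeneous PPP on the disk of radius $R$. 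Combining this with the elementary identity $\sum_{t=1}^{T}t^{2}=T(T+1)(2T+1)/6$ immediately gives the stated limit $\tfrac{\pi\kappa}{48}(r_{0}^{-2}-R^{-2})$.

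The main obstacle I foresee is making the asymptotic identification of the smallest eigenvector rigorous: the summands with weights $t^{2}$ and $t^{3}$ are genuine lower-order perturbations of the rank-one $t^{4}$ summand, but one must argue that the \emph{smallest} eigenvalue (not only the eigenvector) is actually governed by the first summand. Since the $2\times 2$ matrix $\tilde{\mathbf{A}}_{T,v}$ admits an explicit eigendecomposition, this can be handled either by a direct diagonalization together with a Weyl-type inequality controlling the off-diagonal cross terms that go as $t^{3}$, or by normalizing by $T^{4}$ and taking the limit of the matrix itself to isolate the rank-one dominant part and then treating the $O(T^{3})$ corrections as a perturbation. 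The remaining spatial computation and the time-sum identity are routine, so the crux of the proof lies in this eigenvalue-perturbation step.
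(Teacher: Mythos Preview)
Your proposal is correct and follows essentially the same approach as the paper: identify the dominant $t^{4}$ summand to read off the asymptotic eigenvectors, evaluate the Rayleigh quotient along $\mathbf{v}/\|\mathbf{v}\|$, and reduce to the spatial expectation in (\ref{eq:E-pl}) times $\sum_{t=1}^{T}t^{2}=T(T+1)(2T+1)/6$. In fact you are more explicit than the paper, which simply asserts the conclusion ``similar to the derivation of Lemma~\ref{lem:A-Tx-lam-1}'' without spelling out the algebraic cancellations or the eigenvalue-perturbation caveat you raise.
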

\begin{proof}Similar to the derivation of Lemma \ref{lem:A-Tx-lam-1},
	the asymptotic larger eigenvector of $\tilde{\mathbf{A}}_{T,v}$ is
	$\mathbf{u}=\mathbf{v}_{\perp}/\|\mathbf{v}_{\perp}\|_{2}$, because
	the last term in (\ref{eq:A-tilde-T-v}) dominates when $T$ is large.
	
	As a result, asymptotically, the smaller eigenvector is $\mathbf{v}/\|\mathbf{v}\|_{2}$.
	Consequently, from (\ref{eq:A-tilde-T-v}), as $T\to\infty$, we have
	\begin{align*}
		\frac{\lambda_{\mathrm{min}}(\tilde{\mathbf{A}}_{T,v})}{T(T+1)(2T+1)} & \rightarrow\frac{\kappa\pi}{48}(\frac{1}{r_{0}^{2}}-\frac{1}{R^{2}})
	\end{align*}
	as $T\to\infty$.\end{proof}

According Lemma \ref{lem:A-Tx-lam-1-1} and from (\ref{eq:tr-F-ineq-1}),
we have
\begin{align*}
	& T(T+1)(2T+1)\tilde{\Delta}_{T,v}\rightarrow\frac{96}{\kappa\pi(r_{0}^{-2}-R^{-2})G_{1}N_{\mathrm{t}}(N_{\mathrm{t}}^{2}-1)}
\end{align*}
as $T\to\infty$.

\vspace{-0.09in}

\bibliographystyle{IEEEtran}
\bibliography{my_ref}

\end{document}